% NOTE: 10 pages (9 is better) of main content + bib + appendix

% US letter size
\pdfpagewidth=8.5in
\pdfpageheight=11in

\documentclass{article}
\usepackage{iclr25_sty/iclr2025_conference,times}

\usepackage[utf8]{inputenc} % allow utf-8 input
\usepackage[T1]{fontenc}    % use 8-bit T1 fonts
\usepackage{microtype}      % microtypography
\usepackage{nicefrac}       % compact symbols for 1/2, etc.
\usepackage{amsfonts}
\usepackage{amsmath}
\usepackage{amssymb}
\usepackage{amsthm}
\usepackage{mathtools}
\usepackage{bbm}
\usepackage{thm-restate}
\usepackage{booktabs}       % professional-quality tables
\usepackage[inline]{enumitem}
\usepackage{comment}
\usepackage{xparse}
\usepackage{xcolor}         % colors
\usepackage{tikz}
\usepackage{titletoc}
\usepackage[algoruled, vlined, shortend, linesnumbered]{algorithm2e}
\usepackage{url}            % simple URL typesetting
\usepackage[hidelinks]{hyperref}       % hyperlinks
\usepackage{cleveref}           % pass capitalise option if you prefer to say Figure 1, instead of figure 1

% TODO: comment these lines when ready for submission.
% Todonote
\usepackage{todonotes}
\newcommand{\TodonotesColor}{brown!50!lightgray}
\setuptodonotes{%
    bordercolor=\TodonotesColor,
    backgroundcolor=white,
}

% Algorithms
\DontPrintSemicolon
\SetDataSty{textit}
\SetFuncSty{textsc}
\SetProcNameSty{textsc}
\SetProcArgSty{textit}

\SetCommentSty{mycommentstyle}
\SetKwInOut{KwIns}{inputs}
\SetKwProg{CustomFn}{Function}{}{}
\SetKw{Break}{break}
\SetKw{KAnd}{and}
\SetKw{KGoto}{go--to}
\SetKwBlock{Repeat}{repeat}{end}
\SetKwIF{With}{}{Else}{with}{do}{}{else}{end}
\SetKwIF{If}{ElseIf}{Else}{if}{then}{else if}{else}{end}
\SetKwData{DEnough}{found}
\SetKwFunction{FStep}{Step}
\SetKwFunction{FReset}{Reset}
\SetKwFunction{FValueIteration}{ValueIteration}
\SetKwFunction{FAbstractOneR}{AbstractOneR}
\SetKwFunction{FRollout}{Rollout}
\SetKwFunction{FRealizer}{Realizer}
\SetKwFunction{FGet}{Get}

% Math
\allowdisplaybreaks   % allows to split math with many rows. Use \\* to avoid page breaking

% Imports
%% Abbreviations

% General math
\renewcommand{\Pr}{\mathbb{P}}
\DeclarePairedDelimiter{\abs}{\lvert}{\rvert}

\renewcommand{\implies}{\Rightarrow}
\renewcommand{\iff}{\Leftrightarrow}

\DeclareMathOperator*{\argmax}{arg\,max}

\DeclareMathOperator{\Expected}{\mathbb{E}}

               % Support of a distribution
\newcommand{\Reals}{\mathbb{R}}

\newcommand{\Naturals}{\mathbb{N}}
\newcommand{\Given}{\mid}
\newcommand{\Det}[1]{\delta_{#1}}     % Deterministic distribution
\newcommand{\Indicator}{\mathbb{I}}
\newcommand{\Identity}{\mathrm{I}}

\DeclarePairedDelimiterX{\DivergencePar}[2]{(}{)}{%
  #1\;\delimsize\|\;#2%
}

% Convention for each group
\newcommand{\Const}[1]{\mathit{#1}}
\newcommand{\Est}[1]{\hat{#1}}

\newcommand{\Set}[1]{\mathcal{#1}}
\newcommand{\Distribution}[1]{\Delta(#1)}
\newcommand{\Tuple}[1]{\langle #1 \rangle}
\newcommand{\Model}[1]{\mathbf{#1}}

%% Topic specific names

% Sets
\newcommand{\States}{\Set{S}}
\newcommand{\Actions}{\Set{A}}

\newcommand{\Policies}{\Pi}

\newcommand{\Options}{\Omega}

\newcommand{\Entries}{\Set{E}}
\newcommand{\Exits}{\Set{X}}
\newcommand{\Known}{\Set{K}}

% Models
\newcommand{\MDP}{\Model{M}}

% Functions
\newcommand{\Policy}{\pi}
\newcommand{\StartDistrib}{\mu}                 % initial observation distribution
\NewDocumentCommand{\OccupancyS}{O{\Policy}O{}}{d^{#1}_{\mathsf{s}#2}}     % State occupancy (policy as optional arg)
\NewDocumentCommand{\OccupancySA}{O{\Policy}O{}}{d^{#1}_{\mathsf{sa}#2}}     % State-action occupancy (policy as optional arg)
\newcommand{\Mapping}{\phi}                     % state compression function
\newcommand{\Block}[1]{\lfloor\hskip-1pt#1\hskip-1pt\rfloor} % block of ground states
                   % potential function used in RS
\newcommand{\Algorithm}{\mathfrak{A}}           % learning algorithm
\newcommand{\Realization}{O}

% Modifiers
\newcommand{\AtIter}[1]{^{(#1)}}

\newcommand{\Prev}{_p}    % Previous
\newcommand{\Abst}[1]{\bar{#1}}

\newcommand{\StartSym}{_\star}
\newcommand{\Feasible}[2][]{#2_{\mathsf{c}#1}}

% Others
\newcommand{\Formula}{\varphi}

   % Terminate to block #2 after #1 steps
             % Terminate to state #2 after #1 steps
      % Transition from state to state
\newcommand{\SinkS}{s_\bot}
                        % string or episode stop symbol
\newcommand{\Blocksteps}[2]{\Block{#1}^{({#2})}}        % Set represening t-1 steps inside the block, then outside
\newcommand{\RealizerComplexity}{f_\mathsf{r}}          % Sample complexity of the realization algorithm
\newcommand{\Ouralg}{RARL}

% Parameters
\newcommand{\PConfidence}{\delta}
\newcommand{\PAccuracy}{\varepsilon}

\newcommand{\PRealizableR}{\alpha}              % realizability assumption on rewards
\newcommand{\PRealizableT}{\beta}               % realizability assumption on transitions
\newcommand{\PMaxViolation}{\eta}               % maximum constraint violation
\newcommand{\POptionSubopt}{\zeta}               % suboptimality of realizations
                  % Similar initial distributions
\newcommand{\PAccuracyH}{\PAccuracy_\mathsf{h}}
\newcommand{\PAccuracyV}{\PAccuracy_\mathsf{v}}
\newcommand{\PConfidenceE}{\PConfidence_\mathsf{e}}

% Tikz
\usetikzlibrary{arrows}
\usetikzlibrary{arrows.spaced}
\usetikzlibrary{automata}
\usetikzlibrary{fit}
\usetikzlibrary{positioning}
\usetikzlibrary{calc}
\usetikzlibrary{graphs}
\usetikzlibrary{graphs.standard}
\usetikzlibrary{matrix}
\usetikzlibrary{intersections}
\usetikzlibrary{shapes.geometric}
\usetikzlibrary{shapes.symbols}
\usetikzlibrary{decorations.pathreplacing}
\usetikzlibrary{patterns}
\usetikzlibrary{shadings}
\usetikzlibrary{quotes}
%\usetikzlibrary{graphdrawing}
\usetikzlibrary{backgrounds}
\usetikzlibrary{scopes}

\pgfdeclarelayer{below}
\pgfdeclarelayer{beloww}
\pgfsetlayers{beloww,below,main}

\tikzset{
	>=stealth',
	ampersand replacement=\&,
	font=\footnotesize,
	every node/.style={align=center},
	lightshadow/.style={fill=lightgray!30},
	every loop/.style={min distance=5mm},
	loop below left/.style={every loop, out=210,in=240},
	loop above right/.style={every loop, out=30,in=60},
	loop below right/.style={every loop, out=300,in=330},
	loop above left/.style={every loop, out=110,in=150},
	node/.style={shape=circle, fill=white, thick, draw, minimum size=2.5mm, inner sep=0},
	observed node/.style={node, fill=lightgray},
	block/.style={
		rounded corners=4pt, draw, fill=white, inner sep=6pt, outer sep=0pt},
	box/.style={rounded corners=4pt, inner sep=1.8ex, draw=#1!50,
		fill=#1!20},
	dot/.style={fill, circle, inner sep=0.7pt},
	image/.style={inner sep=0pt, outer sep=3pt},
	tight/.style={inner sep=0pt, outer sep=0pt},
	state/.style={state without output, fill=white, inner sep=0pt, minimum size=8pt},
	flow/.style={->, thick, >=spaced stealth'},
	arrow/.style={->, semithick},
	pin line/.style={-, very thin, draw=gray},
	label text/.style={draw=none, font=\scriptsize, inner sep=1pt},
	% Overlays
	invisible/.style={opacity=0, text opacity=0},
	alt/.code args={<#1>#2#3}{%
		\alt<#1>{\pgfkeysalso{#2}}{\pgfkeysalso{#3}}
	},
	visible on/.style={alt={#1{}{invisible}}},
	keys on/.style args={<#1>#2}{alt=<#1>{#2}{}},
}

\newenvironment{tikzimgrelative}[1]%
	{\begin{scope}[shift=(#1.north west),x={(#1.north east)},y={(#1.south west)}]}%
	{\end{scope}}
	{\begin{scope}[shift=(#1.north west),y={($(#1.north west)+(0,-1)$)}]}%
	{\end{scope}}

% 3-room grid
\definecolor{grid2-green}{HTML}{aade87}
\definecolor{grid2-gray}{HTML}{e6e6e6}
\definecolor{grid2-yellow}{HTML}{ffe680}

% Enunciati
\theoremstyle{plain}
\newtheorem{theorem}{Theorem}
\newtheorem{lemma}[theorem]{Lemma}

\theoremstyle{definition}
\newtheorem{definition}{Definition}

\newtheorem{assumption}{Assumption}
\newtheorem{condition}{Condition}
\theoremstyle{remark}

\crefname{theorem}{Theorem}{Theorems}
\crefname{lemma}{Lemma}{Lemmas}
\crefname{proposition}{Proposition}{Propositions}
\crefname{corollary}{Corollary}{Corollaries}
\crefname{definition}{Definition}{Definitions}
\crefname{example}{Example}{Examples}
\crefname{assumption}{Assumption}{Assumptions}
\crefname{condition}{Condition}{Conditions}
\crefname{remark}{Remark}{Remarks}
\crefname{figure}{Figure}{Figures}
\crefname{appendix}{Appendix}{Appendices}

\title{Realizable Abstractions: Near-Optimal\\ Hierarchical Reinforcement Learning}

\author{Roberto Cipollone \textsuperscript{1}, Luca Iocchi \textsuperscript{1}, Matteo Leonetti \textsuperscript{2}\\
		\textsuperscript{1} Sapienza University of Rome\\
		\textsuperscript{2} King’s College London\\
		\texttt{\{cipollone,iocchi\}@diag.uniroma1.it}\\
		\texttt{matteo.leonetti@kcl.ac.uk}
}

\iclrfinalcopy
\begin{document}

\maketitle

\begin{abstract}
    % Why HRL is important
    The main focus of Hierarchical Reinforcement Learning (HRL) is studying how large Markov Decision Processes (MDPs) can be more efficiently solved when addressed in a modular way, by combining partial solutions computed for smaller subtasks.
    Despite their very intuitive role for learning, most notions of MDP abstractions proposed in the HRL literature have limited expressive power or do not possess formal efficiency guarantees.
    % What do we do
    This work addresses these fundamental issues by defining Realizable Abstractions, a new relation between generic low-level MDPs and their associated high-level decision processes.
    The notion we propose avoids non-Markovianity issues and has desirable near-optimality guarantees.
    Indeed, we show that any abstract policy for Realizable Abstractions can be translated into near-optimal policies for the low-level MDP, through a suitable composition of options.
    As demonstrated in the paper, these options can be expressed as solutions of specific constrained MDPs.
    Based on these findings, we propose \Ouralg, a new HRL algorithm that returns compositional and near-optimal low-level policies, taking advantage of the Realizable Abstraction given in the input.
    We show that \Ouralg\ is Probably Approximately Correct, it converges in a polynomial number of samples, and it is robust to inaccuracies in the abstraction.
\end{abstract}

\section{Introduction}

% What is HRL
Hierarchical Reinforcement Learning (HRL) is the study of abstractions of decision processes
and how they can be used to improve the efficiency and compositionality of RL algorithms \citep{barto_2003_RecentAdvancesHierarchical,abel_2018_StateAbstractionsLifelong}.
To pursue these objectives, most HRL methods augment the low-level
Markov Decision Process (MDP) \citep{puterman_1994_MarkovDecisionProcesses}
with some form of abstraction, often a simplified state representation or a high-level policy.
As was immediately identified \citep{dayan_1992_FeudalReinforcementLearning},
compositionality is arguably one of the most important features for HRL algorithms,
as it is commonly associated with increased efficiency \citep{wen_2020_EfficiencyHierarchicalReinforcement} and policy reuse for downstream tasks \citep{brunskill_2014_PACinspiredOptionDiscovery,abel_2018_StateAbstractionsLifelong,tasse_2020_BooleanTaskAlgebraa,tasse_2022_GeneralisationLifelongReinforcement}.
% What is the problem with HRL
There is a common intuition that drives many authors in HRL.
That is, abstract states correspond to sets of ground states, and abstract actions correspond to sequences of ground actions.
This was evident since the early work in HRL \citep{dayan_1992_FeudalReinforcementLearning},
and was largely derived from hierarchical planning.
However, the main question that still remains unanswered is
\emph{which sequence of ground actions should each abstract action correspond to?}
The answer to this question also requires the identification of a suitable state abstraction.
The resulting notion of MDP abstraction has a strong impact on the applicability and the guarantees of the associated HRL methods.

There is no shared consensus on what ``MDP abstractions'' should refer to.
In the literature, the term is loosely used to refer to a variety of concepts, including
state partitions \citep{abel_2020_ValuePreservingStateAction,wen_2020_EfficiencyHierarchicalReinforcement},
bottleneck states \citep{jothimurugan_2021_AbstractValueIteration},
subtasks \citep{nachum_2018_DataefficientHierarchicalReinforcement,jothimurugan_2021_CompositionalReinforcementLearning},
options \citep{precup_1997_MultitimeModelsTemporally,khetarpal_2020_OptionsInterestTemporal},
entire MDPs \citep{ravindran_2002_ModelMinimizationHierarchical,cipollone_2023_ExploitingMultipleAbstractions},
or even the natural language \citep{jiang_2019_LanguageAbstractionHierarchical}.
In addition,
most HRL methods have been validated only experimentally
\citep{nachum_2018_DataefficientHierarchicalReinforcement,jinnai_2020_ExplorationReinforcementLearning,jothimurugan_2021_AbstractValueIteration,lee_2021_AIPlanningAnnotation,lee_2022_DHRLGraphbasedApproacha},
leading to a limited theoretical understanding of general abstractions and their use in RL.
Some notable exceptions \citep{brunskill_2014_PACinspiredOptionDiscovery,fruit_2017_RegretMinimizationMDPs,fruit_2017_ExplorationexploitationMDPsOptions,wen_2020_EfficiencyHierarchicalReinforcement}
give formal definitions of state and temporal abstractions, and provide formal near-optimality and efficiency guarantees.
However, they do not define a high-level decision process, enforcing requirements that are often impractical on the ground MDP directly.

% What do we do: Realizable abstractions
In this work, we propose a new formal definition of MDP abstractions, based on a high-level decision process.
This definition enables algorithms, such as the one proposed in this paper, that do not require specific knowledge about the ground MDP.
In particular, we identify a new relation that links generic low-level MDPs to their high-level representations,
and we use a second-order MDP as the high-level decision process in order to overcome non-Markovian dependencies.
As we show, the abstractions we propose are widely applicable,
do not incur the non-Markovian effects that are often found in HRL \citep{bai_2016_MarkovianStateAction,nachum_2018_DataefficientHierarchicalReinforcement,jothimurugan_2021_AbstractValueIteration},
and provide near-optimality guarantees for their associated low-level policies.
Such near-optimal policies only result from the compositions of smaller options, without any global constraint.
Due to this feature, we call them \emph{Realizable Abstractions}.
An important feature of our work is also that we do not restrict the cardinality of the state and action spaces for the ground MDP that does not need to be finite.
Instead, we require that the abstract decision process has finite state and action sets,
so that we can compute an exact tabular representation of the abstract value function.

% Algoritmic contribution
We also address the associated algorithmic question of how to learn the ground options that \emph{realize} each high-level behavior. 
As we show, the realization problem, that is, the problem of learning suitable options from experience, can be cast as a Constrained MDP (CMDP) \citep{altman_1999_ConstrainedMarkovDecision} and solved with off-the-shelf online RL algorithms for CMDPs~\citep{zhang_2020_FirstOrderConstrained,ding_2022_ConvergenceSampleComplexity}.
Based on these principles, we develop a new algorithm,
called \emph{\Ouralg} (for ``Realizable Abstractions RL''),
which learns compositional policies for the ground MDP and it is Probably Approximately Correct (PAC) \citep{fiechter_1994_EfficientReinforcementLearning}.
An additional novelty of this work is that the proposed algorithm iteratively refines the high-level decision process given in input by sampling in the ground MDP and it exploits the solution obtained from the current abstraction to drive exploration in the ground MDP.

\paragraph{Summary of contributions}
The contributions of this work are theoretical and algorithmic.
We propose \emph{Realizable Abstractions} (\cref{def:realizability}),
we show a formal relation between the abstract and the ground values (\cref{th:realizab-abstraction-value,th:realizab-optimal-value}),
and we provide original insights on the conditions that must be met to reduce the effective horizon in the abstraction (\cref{th:admissible-implies-good-gamma}).
%Realizable Abstractions are general and can capture a variety of state-action abstractions,
%including very coarse partitioning of states and lossless transformations, such as MDP homomorphisms (\cref{th:realizable-homomorphisms}).
%As we show, the solution of any Realizable Abstraction corresponds to a near-optimal policy in the ground MDP which can be expressed as a simple composition of locally defined options (\cref{th:realizab-abstraction-value,th:realizab-optimal-value}).
%
Regarding the algorithmic contributions,
\Ouralg\ is sample efficient, PAC, and robust with respect to approximately realizable abstractions and overly optimistic abstract rewards (\cref{th:algorithm-sample-complexity}).

\subsection*{Related work}
% NOTE: after acc make paragraph
\textbf{State-action abstractions}\,\,
Similarly to \citet{abel_2020_TheoryAbstractionReinforcement},
we organize most of the work in HRL in two groups: state abstractions,
which primarily focus on simplified state representations, and action abstractions, which focus on high-level actions and temporal abstractions.
From the first group, we mention the MDP homomorphisms \citep{ravindran_2002_ModelMinimizationHierarchical,ravindran_2004_ApproximateHomomorphismsFramework}, stochastic bisimulation \citep{givan_2003_EquivalenceNotionsModel,ferns_2004_MetricsFiniteMarkov} and the irrelevance criteria listed in \citet{li_2006_UnifiedTheoryState}.
These are all related to the language of MDP abstractions used here.
However, the limited expressive power of these early works mainly captures specific projections of state features or symmetries in MDP dynamics.
As we discuss in \cref{sec:app:homomorphism},
our framework extends both MDP homomorphisms and stochastic bisimulations of \citet{givan_2003_EquivalenceNotionsModel}.
In the second group, the \emph{options} framework \citep{precup_1997_MultitimeModelsTemporally,sutton_1998_IntraoptionLearningTemporally,sutton_1999_MDPsSemiMDPsFramework}
is one of the first to successfully achieve temporal abstraction in MDPs.
Options are partial policies, which can be interpreted as abstract actions, and can be fully learned from experience \citep{bacon_2017_OptioncriticArchitecture,machado_2017_LaplacianFrameworkOption,khetarpal_2020_OptionsInterestTemporal}.
Thanks to these properties, many works implemented HRL principles within the theory of options,
such as the automatic discovery of landmarks and sub-goals
\citep{simsek_2004_UsingRelativeNovelty,castro_2011_AutomaticConstructionTemporally,kulkarni_2016_HierarchicalDeepReinforcement,nachum_2018_DataefficientHierarchicalReinforcement,jinnai_2019_DiscoveringOptionsExploration,ramesh_2019_SuccessorOptionsOption,jinnai_2020_ExplorationReinforcementLearning,jiang_2022_LearningOptionsCompressiona,lee_2022_DHRLGraphbasedApproacha}.
Nonetheless, since the state space is usually not affected by the use of options, the lack of a simplified state representation limits the reuse of previously acquired skills.
The study of abstractions that involve both states and actions is the most natural progression for HRL research.
Nonetheless, many works in this direction \citep{ravindran_2003_RelativizedOptionsChoosing,abel_2020_ValuePreservingStateAction,abel_2020_TheoryAbstractionReinforcement,jothimurugan_2021_AbstractValueIteration,wen_2020_EfficiencyHierarchicalReinforcement,infante_2022_GloballyOptimalHierarchical} only consider a ground MDP and a partition of the state space, without any explicit dynamics at the abstract level.
The main difficulty in defining abstract dynamics comes from the non-Markovian and non-stationary effects that often arise in HRL \citep{jothimurugan_2021_AbstractValueIteration,gurtler_2021_HierarchicalReinforcementLearning}.
This works overcomes these issues and defines MDP abstractions as distinct decision process with independent reward and transition dynamics.

\textbf{HRL theory}\,\,
The theoretical work on HRL is still less developed compared to the empirical studies.
%Nevertheless, we recall a few excellent works.
The first PAC analysis for RL in the presence of options is by \citet{brunskill_2014_PACinspiredOptionDiscovery}.
Later, \citet{fruit_2017_ExplorationexploitationMDPsOptions} and \citet{fruit_2017_RegretMinimizationMDPs}
strongly contributed to the characterization of the conditions that cause options to be beneficial (or harmful) for learning,
such as the near-optimality of options and the reduction in the MDP diameter.
Both of these findings are consistent with our results.
Lastly, \citet{wen_2020_EfficiencyHierarchicalReinforcement} developed PEP, an HRL algorithm, and derived its regret guarantee.
Similarly to our algorithm, PEP learns low-level policies in a compositional way.
However, our algorithm does not receive the ``exit profiles'' in input, which are unlikely to be known in practice.

\textbf{Compositional HRL and logic composition}\,\,
Logical descriptions and planning domains can also be used to subdivide complex dynamics into smaller subtasks.
This approach can be seen as a state-action abstraction with associated semantic labels and has been explored in various forms
\citep{dietterich_2000_HierarchicalReinforcementLearning,konidaris_2018_SkillsSymbolsLearning,illanes_2020_SymbolicPlansHighlevel,jothimurugan_2021_CompositionalReinforcementLearning,lee_2022_AIPlanningAnnotation},
even in purely logical settings \citep{banihashemi_2017_AbstractionSituationCalculus}.
The main strength of logical abstractions is their suitability for compositionality and skill reuse
\citep{andreas_2017_ModularMultitaskReinforcement,jothimurugan_2021_CompositionalReinforcementLearning,neary_2022_VerifiableCompositionalReinforcement}.
However, because of the difficulty of aligning logical representations with stochastic environments and discounted values, most methods do not come with significant near-optimality guarantees for the low-level domain.

\textbf{Other algorithms}\,\,
Regarding the algorithmic contribution, \Ouralg\ is capable of correcting and adapting to very inaccurate abstract rewards.
This feature has been heavily inspired by RL algorithms for multi-fidelity simulators
\citep{cutler_2014_ReinforcementLearningMultifidelity,kandasamy_2016_MultifidelityMultiarmedBandit}.
However, the algorithm cannot update the input mapping function.
Unlike other works \citep{jonsson_2006_CausalGraphBased,allen_2021_LearningMarkovState,steccanella_2022_StateRepresentationLearning},
learning such a state partition remains outside the scope of this work.

\section{Preliminaries}

\paragraph{Notation}
% Sets and sequences
With the juxtaposition of sets, as in $\Set{X} \Set{Y}$ and $\Set{X}^k$, we denote the abbreviation of their Cartesian product.
Similarly, $xy \in \Set{X} \Set{Y}$ is preferred to $(x, y)$, when not ambiguous.
Sequences, which we write as $x_{i:j}$, are elements of $\Set{X}^{j-i+1}$.
% Probabilities
The set of probability distributions on a set $\Set{X}$ is written as $\Distribution{\Set{X}}$.
For $x \in \Set{X}$, we use $\Det{x} \in \Distribution{\Set{X}}$ for the deterministic probability distribution on~$x$.
For finite~$\Set{X}$, this is $\Det{x}(x') \coloneqq \Indicator(x' = x)$.
% Other text
The indicator function $\Indicator(\Formula)$ evaluates to~$1$ if the condition~$\Formula$ is true, $0$~otherwise.
We write $[n]$ for $\{1, \dots, n\}$.
For any surjective function $\Mapping: \States \to \Abst\States$ and $\Abst{s} \in \Abst\States$,
we define $\Block{\Abst{s}}_\Mapping \coloneqq \{s \in \States \Given \Mapping(s) = \Abst{s}\}$,
also written $\Block{\Abst{s}}$, whenever the function is clear from context.

\paragraph{Decision Processes}
A $k$-order Markov Decision Process ($k$-MDP) is a tuple $\MDP = \Tuple{\States, \Actions, T, R, \gamma }$,
where $\States$ is a set of states, $\Actions$ is a set of actions,
$0 < \gamma < 1$ is the discount factor, 
$T: \States^k \times \Actions \to \Distribution{\States}$ is the transition function,
and $R: \States^k \times \Actions \to [0, 1]$ is the reward function.
To generate the next state or reward, each function receives the last $k$~states in the trajectory.
In particular, at each time step~$h$, $R(s_{h-k:h-1}, a_{h})$ returns the immediate expected reward~$r_{h}$.
We denote the initial distribution of~$s_0$ with $\StartDistrib \coloneqq T(s\StartSym^k, a)$,
for any $a \in \Actions$, where $s\StartSym \in \States$ is some distinguished dummy state in~$\States$.
We write the cardinalities of $\States, \Actions$ as $S, A$.
In addition, an MDP is a 1-MDP, for which we can simply write $r_h \sim R(s_{h-1}, a_h)$ and $s_h \sim T(s_{h-1}, a_h)$ \citep{puterman_1994_MarkovDecisionProcesses}.
% NOTE: after accept
In any $k$-MDP, the value of a policy~$\Policy$ in some states $s_{0:k-1} \in \States^k$, written $V^\Policy(s_{0:k-1})$, is the expected sum of future discounted rewards,
when starting from~$s_{0:k-1}$, and selecting actions based on~$\Policy$.
The function $Q^\Policy(s_{0:k-1}, a_k)$ is the value of $\Policy$ when the first action after $s_{0:k-1}$ is set to~$a_k$.
Without referring to any state,
the value of a policy~$\Policy$ from the initial distribution~$\StartDistrib$ is
$V_\StartDistrib^\Policy \coloneqq V^\Policy(s\StartSym^k) = \Expected_{s_0 \sim \StartDistrib}[V^\Policy(s\StartSym^{k-1} s_0)]$.
Every $k$-MDP admits an optimal policy, $\Policy^* \coloneqq \argmax_{\Policy} V_\StartDistrib^{\Policy}$,
which is deterministic and Markovian in~$\States^{k}$.
Thus, we often consider the set of policies $\Policies \coloneqq \States^{k} \to \Actions$.
The we often write $V^{\Policy^*}$ as $V^*$.
Near-optimal policies are defined as follows.
For $\PAccuracy > 0$, a policy~$\Policy$ is $\PAccuracy$-optimal if $V^*_\StartDistrib - V^\Policy_\StartDistrib \le \PAccuracy$.
Generally speaking, Reinforcement Learning (RL) is the problem of learning a (near-)optimal policy in an MDP with unknown~$T$ and~$R$.

\paragraph{Constrained MDPs}
A \emph{Constrained MDP} (CMDP) \citep{ross_1985_ConstrainedMarkovDecision,altman_1999_ConstrainedMarkovDecision}
is defined as a tuple $\MDP = \Tuple{\States, \Actions, T, R,\allowbreak \{R_i\}_{i \in [m]}, \{l_i\}_{i \in [m]}, \gamma}$,
where $\Tuple{\States, \Actions, T, R, \gamma}$ forms an MDP and each
$R_i: \States \times \Actions \to [0, 1]$ is an auxiliary reward function with an associated $l_i \in [0,1]$.
Given a CMDP~$\MDP$, let $V^\Policy_{\StartDistrib,i}$ be the value of~$\Policy$ in the MDP $\Tuple{\States, \Actions, T, R_i, \gamma}$.
Then, the set of feasible policies for~$\MDP$ is $\Feasible\Policies \subseteq \Policies$ with
\begin{equation}
  \Feasible\Policies \coloneqq \{ \Policy \in \Policies \mid V^\Policy_{\StartDistrib,i} \ge l_i \text{ for each } i \in [m] \}
\end{equation}
The optimal policy of a CMDP is defined as $\argmax_{\Policy \in \Feasible\Policies} V^\Policy$.
Near-optimal policies are defined as usual.
Extending this relaxation to constraints, we also define the set of $\PMaxViolation$-feasible policies as:
$\Feasible[,\PMaxViolation]{\Policies} \coloneqq \{ \Policy \in \Policies \mid V^\Policy_{\StartDistrib,i} \ge l_i - \PMaxViolation \text{ for each } i \in [m] \}$.
% NOTE: after acceptance
To capture negative cost functions, some works do not restrict auxiliary rewards to the $[0,1]$ range.
However, this will be enough for our purposes.

\paragraph{Occupancy measures}
% NOTE: after acc
%These are important quantities for MDPs and will be used thoroughly.
In any MDP, the state occupancy measure of a policy~$\Policy$,
is the normalized discounted probability of reaching some~$s$, when starting from some previous state $s\Prev$, and selecting actions with~$\Policy$.
That is, $\OccupancyS(s \Given s\Prev) \coloneqq (1 - \gamma) \sum_{t=0}^\infty \gamma^t\, \Pr(s_t = s \Given s_0 = s\Prev, \Policy)$.
Similarly, the state-action occupancy is $\OccupancySA(sa \Given s\Prev) \coloneqq \OccupancyS(s \Given s\Prev)\, \Policy(a \Given s)$.
The value function of any policy can be expressed as a scalar product between $\OccupancySA$ and the reward function.
Using the vector notation, this is
$V^\Policy(s) = \langle \OccupancySA(s), R \rangle / (1-\gamma)$,
and from the initial distribution, $V_\StartDistrib^\Policy = \langle V^\Policy, \StartDistrib \rangle$.
Often, we will simply write both distributions as~$d^\Policy$.
For simplicity, the notation we use is specific to discrete distributions.
However, values and occupancy measures remain well defined for continuous state and action spaces if probability densities are used.

\paragraph{Options}
An \emph{option} is a temporally extended action \citep{sutton_1998_IntraoptionLearningTemporally}. 
In this paper, an option is defined as $o = \Tuple{\Set{I}_o, \Policy_o, \beta_o}$,
where $\Set{I}_o \subseteq \States^2$ is an initiation set composed of pairs of states,
$\Policy_o \in \Policies$ is the policy that $o$ executes
and $\beta_o: \States \to \{0, 1\}$ is a termination condition.
An option is applicable at some $a_{t-1} r_{t-1} s_{t-1} a_t r_t s_t$ if $s_{t-1} s_t \in \Set{I}_o$.
With respect to the classic definition \citep{sutton_1998_IntraoptionLearningTemporally}, we have extended the initiation sets to pairs, instead of single states.
% NOTE: after acceptance
%Note that this is still an option for MDPs, because the policy remains Markovian, only the initiation set is generalized.
%
In this work, we focus on a specific class of options, called $\Mapping$-relative options
\citep{abel_2020_ValuePreservingStateAction}.
For clarity, we recall that $\Block{\cdot}_\Mapping$ and $\Block{\cdot}$ denote the inverse image of~$\Mapping$.
Given some surjective $\Mapping: \States \to \Abst\States$,
an option $o = \Tuple{\Set{I}_o, \Policy_o, \beta_o}$ is said to be \emph{$\Mapping$-relative}
if there exists two $\Abst{s}_p, \Abst{s} \in \Abst{\States}$ such that
$\Set{I}_o = \Block{\Abst{s}_p}_\Mapping \times \Block{\Abst{s}}_\Mapping$,
$\beta_o(s) = \Indicator(s \not\in \Block{\Abst{s}}_\Mapping)$,
and $\Policy_o \in \Policies_{\Abst{s}}$,
where $\Policies_{\Abst{s}} \coloneqq \Block{\Abst{s}}_\Mapping \to \Actions$
is the set of policies defined for the block.
In essence, $\Mapping$-relative options always start in some block and terminate as soon as the block changes.
Any set of options $\Options$ is $\Mapping$-relative iff all of its options are.
In the remainder of this paper, we only consider sets $\Mapping$-relative options.
Regarding the notation, we use $\Options_{\Abst{s}_p \Abst{s}}$ for the set of all options starting in $\Block{\Abst{s}_p} \Block{\Abst{s}}$,
and $\Options_{\Abst{s}} \coloneqq \cup_{\Abst{s}_p} \Options_{\Abst{s}_p \Abst{s}}$
for all options in one block.
Finally, we call \emph{policy of options} any set~$\Options$ that contains a single $\Mapping$-relative option for each pair $\Abst{s}_p \Abst{s}$.
We use this name because $\Options$ can be fully treated as a policy for the ground MDP.
$V^{\Options}$ is the value of the policy that always executes the only applicable option in~$\Options$ until each option terminates.

\section{Realizable Abstractions}
\label{sec:ra}

% General idea
This section defines Realizable Abstractions and studies the properties they satisfy.
To explain the main intuitions behind our abstractions, we use the running example of \cref{fig:grid-example} (left), which represents a ground MDP~$\MDP$, with a simple grid world dynamics.
In this work, an MDP abstraction is a pair $\Tuple{\Abst\MDP, \Mapping}$,
where $\Abst\MDP$ is a decision process over states~$\Abst\States$,
and $\Mapping: \States \to \Abst\States$ is the state mapping function.
In the example, $\Abst\States = \{\Abst{s}_1, \Abst{s}_2, \Abst{s}_3\}$.
The association of abstract states with ground blocks in the partition is intuitive, as shown by the colors in the example.
Actions and ground options, on the other hand, are much less trivial to associate.
In this work, with Realizable Abstractions, we encode the following intuition:
if an abstract transition $(\Abst{s}_1, \Abst{a}, \Abst{s}_3)$ has a high probability of occurring from~$\Abst{s}_1$ in the abstract model~$\Abst\MDP$,
then there must exist a $\Mapping$-relative option that moves the agent from block $\Block{\Abst{s}_1}$ to block $\Block{\Abst{s}_3}$ in the ground MDP,
with high probability and in a few steps.
In other words, we choose to interpret abstract transitions as representations of what is possible to replicate, we say ``realize'', in the ground MDP.
Both conditions above, in terms of probability and time, are equally important and consistent with the meaning of discounted values in MDPs.
Here, we choose $\Mapping$-relative options because they only terminate after leaving each block.
In the example, no direct transition is possible between $\Block{\Abst{s}_2}$ and $\Block{\Abst{s}_3}$.
So, we should have $\Pr(\Abst{s}_3 \Given \Abst{s}_2, \Abst{a}, \Abst\MDP) = 0$, for any~$\Abst{a}$.
Instead, selecting an appropriate value for $\Pr(\Abst{s}_3 \Given \Abst{s}_1, \Abst{a}, \Abst\MDP)$ is much more complex, as it strongly depends on the initial state of the option in the gray block.
This is at the heart of non-Markovianity in HRL.
In this work, we address this issue through a careful treatment of entry states and allowing abstractions to model second-order dependencies.
This allows us to condition the probability of the transition $({\Abst{s}_1},{\Abst{a}},{\Abst{s}_3})$ on the previous abstract state visited, which might be $\Abst{s}_2$ or $\Abst{s}_3$ in the example.
This modeling advantage motivates our choice of abstract second-order abstract MDPs.

\begin{figure}
  \centering
    \begin{tikzpicture}
      \node (grid) [tight] {\includegraphics[width=5cm]{./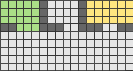}};
      \begin{tikzimgrelative}{grid}
        \node (as1) [circle, fill=grid2-gray!70!white, inner sep=2.5pt] at ($(6.5/17, 6.5/9)$) {$\Abst{s}_1$};
        \node (as2) [circle, fill=grid2-green!70!white, inner sep=2.5pt] at ($(3.5/17, 1.5/9)$) {$\Abst{s}_2$};
        \node (as3) [circle, fill=grid2-yellow!70!white, inner sep=2.5pt] at ($(12.5/17, 1/9)$) {$\Abst{s}_3$};
        \begin{scope}[font=\scriptsize\sffamily, gray]
          \node at ($(2.5/17,3.5/9)$)  {x};
          \node at ($(3.5/17,3.5/9)$)  {x};
          \node at ($(4.5/17,3.5/9)$)  {x};
          \node at ($(13.5/17,2.5/9)$) {x};
          \node at ($(14.5/17,2.5/9)$) {x};
          \node at ($(2.5/17,4.5/9)$)  {e};
          \node at ($(3.5/17,4.5/9)$)  {e};
          \node at ($(4.5/17,4.5/9)$)  {e};
        \end{scope}
      \end{tikzimgrelative}
    \end{tikzpicture}
    \hspace{3em}
    \begin{tikzpicture}
      \node (grid) [tight] {\includegraphics[width=5cm]{./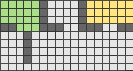}};
      \begin{tikzimgrelative}{grid}
        \node (as1) [circle, fill=grid2-gray!70!white, inner sep=2.5pt] at ($(6.5/17, 6.5/9)$) {$\Abst{s}_1$};
        \node (as2) [circle, fill=grid2-green!70!white, inner sep=2.5pt] at ($(3.5/17, 1.5/9)$) {$\Abst{s}_2$};
        \node (as3) [circle, fill=grid2-yellow!70!white, inner sep=2.5pt] at ($(12.5/17, 1/9)$) {$\Abst{s}_3$};
        \begin{scope}[darkgray]
          \node at ($(2.5/17,4.5/9)$)  {$s_1$};
          \node at ($(4.5/17,4.5/9)$)  {$s_2$};
        \end{scope}
      \end{tikzimgrelative}
    \end{tikzpicture}
  \caption{(left) The running example. The ground MDP is a grid world domain and $\Abst\States = \{\Abst{s}_1, \Abst{s}_2, \Abst{s}_3\}$.
  Each \textsf{\textcolor{gray}{\footnotesize e}} is an entry in $\Entries_{\Abst{s}_2 \Abst{s}_1}$
  and each \textsf{\textcolor{gray}{\footnotesize x}} is an exit in $\Exits_{\Abst{s}_1}$.
  (right) A different ground MDP.}
  \label{fig:grid-example}
\end{figure}

% General abstractions
We now proceed to formalize all the previous intuitions.
In this work, the abstraction of an MDP~$\MDP$ is a pair $\Tuple{\Abst\MDP, \Mapping}$,
where $\Abst\MDP$ is a 2-MDP $\Abst\MDP = \Tuple{\Abst\States, \Abst\Actions, \Abst{T}, \Abst{R}, \gamma}$
with finite states and actions spaces,
and $\Mapping: \States \to \Abst\States$ is a surjective function.
Our formal statements will also be valid when $\Abst\MDP$ is an MDP, since it can also be regarded as a 2-MDP with restricted dynamics.
%
% Entries and exits
For each two distinct abstract states $\Abst{s}_p, \Abst{s} \in \Abst{\States}$,
we define the set of \emph{entry states}~$\Entries_{\Abst{s}_p \Abst{s}}$ as
the set of ground states in~$\Block{\Abst{s}}$,
at which it is possible to enter~$\Block{\Abst{s}}$ from~$\Block{\Abst{s}_p}$.
Also, the \emph{exit states}~$\Exits_{\Abst{s}}$ contains all ground states outside the block~$\Block{\Abst{s}}$ that are reachable in one transition.
In \cref{fig:grid-example}~(left), each entry in $\Entries_{\Abst{s}_2 \Abst{s}_1}$ is marked with an~\textsf{\textcolor{gray}{\footnotesize e}},
and each exit in $\Exits_{\Abst{s}_1}$ is marked with an~\textsf{\textcolor{gray}{\footnotesize x}}.
More generally,
% NOTE: after acceptance make display
$\Entries_{\Abst{s}_p \Abst{s}} \coloneqq \{s \in \Block{\Abst{s}} \mid \exists s_p \in \Block{\Abst{s}_p}, \exists a \in \Actions,\, T(s \Given s_p, a) > 0 \}$ and
$\Exits_{\Abst{s}} \coloneqq \cup_{\Abst{s}' \neq \Abst{s}} \Entries_{\Abst{s} \Abst{s}'}$.
Exit and entry states are an intuitive way to discuss the boundaries of contiguous state partitions and are often found in the HRL literature
\citep{wen_2020_EfficiencyHierarchicalReinforcement,infante_2022_GloballyOptimalHierarchical}.
There is one last possibility of entering a block, that is, through the initial distribution~$\StartDistrib$.
However, this specific case is also captured by $\Entries_{\Abst{s}\StartSym \Abst{s}}$.
The previous abstract state carries fundamental information to characterize entry states,
while knowledge of abstract states further back are not nearly as crucial and would make the model significantly more cumbersome.
For this reason, we only need to use 2-MDPs to represent the abstract MDP, without the need of $k$-MDPs with $k>2$.
% NOTE: after acceptance
%In this regard, we assume that the start states are properly mapped as $\Block{\Abst{s}\StartSym}_\Mapping = \{s\StartSym\}$.
%
% Block
A careful treatment of exits and entries is essential, as it allows us to develop a truly compositional approach where each block is treated separately.
For this purpose, associated with each abstract state, we define the \emph{block MDP}
as the portion of the original MDP that is restricted to a single block, its exit states and a new absorbing state. 
\begin{definition}
  Given an MDP $\MDP$ and ${\Mapping: \States \to \Abst\States}$,
  we define the \emph{block MDP} of some $\Abst{s} \in \Abst\States$ as
  $\MDP_{\Abst{s}} = \Tuple{\States_{\Abst{s}}, \Actions, T_{\Abst{s}}, R_{\Abst{s}}, \gamma }$,
  with states $\States_{\Abst{s}} \coloneqq \Block{\Abst{s}} \cup \Exits_{\Abst{s}} \cup \{\SinkS\}$, where $\SinkS$ is a new absorbing state;
  the transition function is $T_{\Abst{s}}(s, a) \coloneqq T(s, a)$ if $s \in \Block{\Abst{s}}$ and $T(s, a) \coloneqq \Det{\SinkS}$, otherwise, which is a deterministic transition to~$\SinkS$;
  the reward function is $R_{\Abst{s}}(s, a) \coloneqq R(s, a)$ if $s \in \Block{\Abst{s}}$ and $0$ otherwise.
\end{definition}
Therefore, the dynamics of each block MDP remains unchanged while in the relevant block,
but is modified to reach the aborbing state with a null reward, from the exits.
With respect to analogous definitions from the literature \citep{fruit_2017_RegretMinimizationMDPs},
our exit states are \emph{not} absorbing.
This small change is essential to preserve the original occupancy distributions at the exits.
% External
Now, since any $\Mapping$-relative option is a complete policy for the block MDP of~$\Abst{s}$,
we will use $d^o_{\Abst{s}}$ to denote the state occupancy measure for policy $\Policy_o$ in~$\MDP_{\Abst{s}}$.
Since abstract transitions should only reflect the ``external'' behavior of the options at the level of blocks, regardless of the specific ground paths followed, we marginalize the occupancy measure with respect to the exit blocks.
More precisely, we define the \emph{block occupancy} measure of $\Abst{s}$ and $o \in \Options_{\Abst{s}}$ at some $s \in \Block{\Abst{s}}$ as the probability distribution $h_{\Abst{s}}^o(s) \in \Distribution{\Abst{\States} \cup \{\SinkS\}}$, with
$h_{\Abst{s}}^o(\Abst{s}' \Given s) \coloneqq \sum_{s' \in \Block{\Abst{s}'}} d^o_{\Abst{s}}(s' \Given s)$,
if $\Abst{s}' \neq \SinkS$, and $h_{\Abst{s}}^o(\SinkS \Given s) \coloneqq d^o_{\Abst{s}}(\SinkS \Given s)$, otherwise.
% Interpret the ground items
Block occupancies are perfect candidates for relating the abstract transitions to the options in the ground MDP.
Similarly, abstract rewards will be related to the total return accumulated within the block.
This second term is exactly captured by $V^o_{\Abst{s}}(s)$, the value of the option~$o$ in the block MDP of~$\Abst{s}$.
% Introduce abstract, computed quantities
These two elements, $h^o_{\Abst{s}}$ and $V^o_{\Abst{s}}$,
that are relative to the ground MDP,
will be related to analogous quantities in the abstraction.
Specifically, this work identifies that these quantities should be compared with the probability of the associated abstract transitions and the associated abstract rewards.
% NOTE: after acceptance
%Specifically, this work identifies that the ground block occupancies should be compared with the discounted probability of reaching other abstract states in the abstraction.
%Similarly, the ground block value should be related to the total reward accumulated in the corresponding abstract state.
% TODO: define these better to explain them from what they represent
Expanding these terms for 2-MDPs in any $\Abst{s}_p, \Abst{s}, \Abst{s}' \in \Abst\States$ and $\Abst{a} \in \Abst\Actions$, with $\Abst{s} \neq \Abst{s}'$,
these are:
\begin{align}
  &\tilde{h}_{\Abst{s}_p \Abst{s} \Abst{a}}(\Abst{s}') \coloneqq (1 - \gamma) (\Abst\gamma\, \Abst{T}(\Abst{s}' \Given \Abst{s}_p \Abst{s}, \Abst{a} ) + \Abst\gamma^2\, \Abst{T}_{\Abst{s}_p \Abst{s} \Abst{a}}\, \Abst{T}(\Abst{s}' \Given \Abst{s}\Abst{s}, \Abst{a})) \label{eq:computed-option-occupancy}\\*
  &\tilde{V}_{\Abst{s}_p \Abst{s} \Abst{a}} \coloneqq \Abst{R}(\Abst{s}_p \Abst{s}, \Abst{a}) + \Abst\gamma\, \Abst{T}_{\Abst{s}_p \Abst{s} \Abst{a}}\, \Abst{R}(\Abst{s}\Abst{s}, \Abst{a}) \label{eq:computed-option-value}
\end{align}
with $\Abst{T}_{\Abst{s}_p \Abst{s} \Abst{a}} = \frac{\Abst{T}(\Abst{s} \Given \Abst{s}_p \Abst{s}, \Abst{a})}{1 - \Abst{\gamma}\, \Abst{T}(\Abst{s} \Given \Abst{s}\Abst{s}, \Abst{a})}$.
Their structure is mainly motivated by the fact that these expressions sum over an indefinite number of self-loops in~$\Abst{s}$ before transitioning to a different state in the 2-MDP.
Equation \ref{eq:computed-option-occupancy} encodes the discounted cumulative probability of visiting $\Abst{s}'$ immediately after $\Abst{s}$ in the abstract model, which is $\sum_{t=0}^\infty \gamma^t \Pr(\Abst{s}_t = \Abst{s}' \Given \Abst{s}_{0:t-1} = \Abst{s}, \Abst{a}_{1:t} = \Abst{a}, \Abst{s}_{-1} = \Abst{s}_p, \Abst\MDP )$. This expression is similar to what the occupancy measure captures in the ground MDP, with the difference that the action becomes an option and, instead of a single state, $\Abst{s}$ is associated with all states in the block $\Block{\Abst{s}}$. Expanding the probability above leads to Equation \ref{eq:computed-option-occupancy}. In particular, $\Abst{T}_{\Abst{s}_p \Abst{s} \Abst{a}}$ is the result of the geometric series $\Abst{T}(\Abst{s} \Given \Abst{s}_p \Abst{s}, \Abst{a} ) \sum_t \gamma^t \Abst{T}(\Abst{s} \Given \Abst{s} \Abst{s}, \Abst{a} ) $ that accounts for an indefinite number of self loops in $\Abst{s}$.
Analogously, Equation \ref{eq:computed-option-value} is the discounted cumulative return accumulated in $\Abst{s}$. All self loops in $\Abst{s}$
contribute with a reward of $\Abst{R}(\Abst{s}\Abst{s}, \Abst{a})$, which explains the second term in the sum. The first term is the reward achieved after the first transition in $\Abst{s}$.
Note that if the abstraction is a standard 1-MDP, then $\Abst{R}(\Abst{s}\Abst{s}, \Abst{a}) = \Abst{R}(\Abst{s}_p\Abst{s}, \Abst{a})$ and $\Abst{T}(\Abst{s}' \Given \Abst{s}_p \Abst{s}, \Abst{a}) = \Abst{T}(\Abst{s}' \Given \Abst{s} \Abst{s}, \Abst{a})$ the expressions simplify to:
\begin{align}
	&\tilde{h}_{\Abst{s}_p \Abst{s} \Abst{a}}(\Abst{s}') \coloneqq \frac{(1 - \gamma)\, \Abst\gamma\, \Abst{T}(\Abst{s}' \Given \Abst{s}, \Abst{a} )}{1 - \Abst{\gamma}\, \Abst{T}(\Abst{s} \Given \Abst{s}, \Abst{a})} &
	&\tilde{V}_{\Abst{s}_p \Abst{s} \Abst{a}} \coloneqq \frac{\Abst{R}(\Abst{s}, \Abst{a})}{1 - \Abst{\gamma}\, \Abst{T}(\Abst{s} \Given \Abst{s}, \Abst{a})} \label{eq:computed-option-mdp}
\end{align}
which does not depend on $\Abst{s}_p$.

% Realizable 
\paragraph{Realizable Abstractions}
Using the concepts above, we are now ready to provide a complete description of our MDP abstractions.
We say that an abstract action is \emph{realizable}
if the behavior described by the abstract transitions and rewards can be replicated (realized) in the ground MDP.
% NOTE: after acceptance
%More precisely, realizable actions can be associated with options in the ground MDP which attain the required occupancy and value.
\begin{definition}\label{def:realizability}
  Given an MDP~$\MDP$ and an abstraction~$\Tuple{\Abst\MDP, \Mapping}$,
  any abstract tuple $(\Abst{s}_p \Abst{s}, \Abst{a})$
  is said $(\PRealizableR, \PRealizableT)$-\emph{realizable}
  if there exists a $\Mapping$-relative option~$o \in \Options_{\Abst{s}_p \Abst{s}}$, such that
  \begin{align}
    (1 - \gamma) (\tilde{V}_{\Abst{s}_p \Abst{s} \Abst{a}} - V^o_{\Abst{s}}(s)) &\le \PRealizableR \label{eq:realizabilityR}\\*
    \tilde{h}_{\Abst{s}_p \Abst{s} \Abst{a}}(\Abst{s}') - h^{o}_{\Abst{s}}(\Abst{s}' \Given s) &\le \PRealizableT \label{eq:realizabilityT}
  \end{align}
  for all $\Abst{s}' \neq \Abst{s}$ and $s \in \Entries_{\Abst{s}_p \Abst{s}}$.
  The option~$o$ is called $(\PRealizableR, \PRealizableT)$-\emph{realization} of $(\Abst{s}_p \Abst{s}, \Abst{a})$.
  An abstraction $\Tuple{\Abst\MDP, \Mapping}$ is said $(\PRealizableR, \PRealizableT)$-realizable in~$\MDP$
	if any $(\Abst{s}_p \Abst{s}, \Abst{a}) \in \Abst{\States}^2 \times \Abst\Actions$
	with $\Abst{s}_p \neq \Abst{s}$ also is.
  A $(0,0)$-realizable abstraction is \emph{perfectly realizable}.
\end{definition}

% Bounded optimism
This definition essentially requires that the desired block occupancy and value, computed from the abstraction,
should be similar to the ones that are possible in the ground MDP from each entry state.
The case $\Abst{s}_p \neq \Abst{s}$ is the most important.
The case $\Abst{s}_p = \Abst{s} = s\StartSym$ is only needed to require similar starting distributions at the two levels.
% NOTE: after acceptance
%The importance of entry states comes from the fact that quantifying over all $s \in \Block{\Abst{s}}$ would make \cref{def:realizability} hard to satisfy for large state partitions.
We observe that if the abstraction is an MDP, as is often the case in the literature \citep{li_2006_UnifiedTheoryState,ravindran_2002_ModelMinimizationHierarchical},
\cref{eq:computed-option-occupancy,eq:computed-option-value} simplify and our definition can still be applied.
Finally, we note that the scale factor of $(1-\gamma)$ was added to~\eqref{eq:realizabilityR} to obtain two parameters  $\PRealizableR$ and $\PRealizableT$ in the same range of $[0, 1]$,
although the appropriate magnitude for such values  should scale with $(1-\gamma)$.
Any given $\PRealizableR$ and $\PRealizableT$ put a restriction not only on the abstract decision process but also on the possible mapping functions.
Indeed, some partitions may not admit any dynamics that satisfies \cref{def:realizability} over the induced abstract states.
% NOTE: after acceptance
%especially if some entry state can achieve very different transition probabilities or rewards.
Consider, for example, the ground MDP and the partition of \cref{fig:grid-example}~(right).
If the grid world is deterministic, there exists an option~$o$ for which
$h^o_{\Abst{s}_1}(\Abst{s}_3 \Given s_2) = \gamma^{11} \approx 0.57$,
but, due to the higher number of steps required from~$s_1$,
$h^o_{\Abst{s}_1}(\Abst{s}_3 \Given s_1) = \gamma^{21} \approx 0.34$.
Let $\gamma=0.95$ and $\Abst{\MDP}$ be so that, for some~$\Abst{a}$,
$\tilde{h}_{\Abst{s}_2 \Abst{s}_1 \Abst{a}}(\Abst{s}_3) = 0.6$.
Then, due to the very diverse behaviors from $s_1$ and~$s_2$, the tuple $(\Abst{s}_2 \Abst{s}_1 \Abst{a})$ is not realizable with $\PRealizableT = 0.09$ in \cref{fig:grid-example}~(right),
while it is in the MDP of \cref{fig:grid-example} (left).
% NOTE: after acceptance
%Similar counterexamples can also be constructed for rewards.

% Any def is good. Is our abstraction better?
% NOTE: after acceptance
%We have now defined realizable MDP abstractions and described their meaning.
%However, the HRL literature already includes many notions of abstractions.
%A new definition is only valuable if it is applicable, which we have discussed already,
%and if it possesses strong properties.
%In this section, we show that any abstract policy for a realizable abstraction can be
%transformed into a near-optimal policy for the ground MDP in a purely compositional way.
The most important feature of our abstractions is that
any policy for a Realizable Abstraction can be associated with a near-optimal policy for the ground MDP, which can be expressed as a simple composition of $\Mapping$-relative options.
% Realizations
Indeed, if some abstraction $\Tuple{\Abst\MDP, \Mapping}$ is $(\PRealizableR, \PRealizableT)$-realizable,
then it is possible to associate to each tuple $(\Abst{s}_p \Abst{s}, \Abst{a})$ the option that realizes it.
In the following,
we say that some policy of options~$\Options$ is the \emph{realization} of some abstract policy~$\Abst\Policy$,
if $\Options$ contains the realization of every tuple $(\Abst{s}_p \Abst{s}, \Abst\Policy(\Abst{s}_p \Abst{s}))$.
We can finally state the main property below. Its proof is in the appendix.
\begin{restatable}{theorem}{realizableabstvalue}\label{th:realizab-abstraction-value}
  Let $\Tuple{\Abst\MDP, \Mapping}$ be an $(\PRealizableR, \PRealizableT)$-realizable abstraction of an MDP~$\MDP$.
  Then, if $\Options$ is the realization of some abstract policy $\Abst{\Policy}$,
  then, for any $\Abst{s}_p \in \Abst\States$, $s \in \Exits_{\Abst{s}_p}$, $\Abst{s} = \Mapping(s)$,
  \begin{equation}\label{eq:realizab-abstraction-value}
    \Abst{V}^{\Abst\Policy}(\Abst{s}_p \Abst{s}) - V^{\Options}(s) \le \frac{\PRealizableR}{(1 - \gamma)^2} + \frac{\PRealizableT\, \abs{\Abst{\States}}}{(1 - \gamma)^2 (1 - \Abst\gamma)}
  \end{equation}
	Moreover, if the marginal initial distributions correspond,
	$\forall \Abst{s}.\, \Abst\StartDistrib(\Abst{s}) = \sum_{s \in \Block{\Abst{s}}} \StartDistrib(s)$,
	the same bound also holds for $\Abst{V}^{\Abst\Policy}_{\Abst{\StartDistrib}} - V^{\Options}_\StartDistrib$.
\end{restatable}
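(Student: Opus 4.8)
The plan is to run two parallel recursions --- one for the abstract value $\Abst{V}^{\Abst\Policy}$ and one for the ground value $V^{\Options}$ --- and couple them through \cref{def:realizability}. On the abstract side I would first collapse the self-loops in block $\Abst{s}$. Fixing $\Abst{a} = \Abst\Policy(\Abst{s}_p \Abst{s})$ and unrolling the $2$-MDP Bellman equation over the geometric number of consecutive visits to $\Abst{s}$ before the abstract chain moves to some $\Abst{s}' \neq \Abst{s}$, the resulting geometric series reproduces exactly the quantities defined in \cref{eq:computed-option-value,eq:computed-option-occupancy}. This yields the macro-Bellman identity
\[
  \Abst{V}^{\Abst\Policy}(\Abst{s}_p \Abst{s}) = \tilde{V}_{\Abst{s}_p \Abst{s} \Abst{a}} + \frac{1}{1-\gamma}\sum_{\Abst{s}' \neq \Abst{s}} \tilde{h}_{\Abst{s}_p \Abst{s} \Abst{a}}(\Abst{s}')\, \Abst{V}^{\Abst\Policy}(\Abst{s} \Abst{s}'),
\]
where the $(1-\gamma)^{-1}$ simply undoes the normalization baked into $\tilde{h}$, and where the continuation is evaluated at $\Abst{s}\Abst{s}'$ because $\Abst{s}$ is the previous block --- this is exactly where the second-order structure is used.

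On the ground side, let $o \in \Options_{\Abst{s}_p \Abst{s}}$ be the realization of $(\Abst{s}_p \Abst{s}, \Abst{a})$ and let $\tau \ge 1$ be the first time the trajectory generated by $o$ leaves $\Block{\Abst{s}}$, reaching $s_\tau \in \Exits_{\Abst{s}}$. Since $o$ is $\Mapping$-relative it terminates precisely at $\tau$, and since $\MDP_{\Abst{s}}$ agrees with $\MDP$ inside the block and is reward-free afterwards, the in-block return equals $V^o_{\Abst{s}}(s)$. Splitting $V^{\Options}(s)$ at the exit and grouping exits by their block gives
\[
  V^{\Options}(s) = V^o_{\Abst{s}}(s) + \frac{1}{1-\gamma}\sum_{\Abst{s}' \neq \Abst{s}}\ \sum_{s' \in \Entries_{\Abst{s}\Abst{s}'}} d^o_{\Abst{s}}(s' \Given s)\, V^{\Options}(s'),
\]
using that in $\MDP_{\Abst{s}}$ each exit is visited at most once (it jumps to $\SinkS$), so $d^o_{\Abst{s}}(s'\Given s) = (1-\gamma)\,\Expected[\gamma^\tau \Indicator(s_\tau = s')]$ and $\sum_{s'\in\Entries_{\Abst{s}\Abst{s}'}} d^o_{\Abst{s}}(s'\Given s) = h^o_{\Abst{s}}(\Abst{s}'\Given s)$.

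Now I would define $\Delta \coloneqq \sup\{\Abst{V}^{\Abst\Policy}(\Abst{s}_p\Abst{s}) - V^{\Options}(s)\}$ over all $\Abst{s}_p,\Abst{s}$ and entries $s \in \Entries_{\Abst{s}_p\Abst{s}}$; it is finite since $0 \le V^{\Options}$ and $\Abst{V}^{\Abst\Policy} \le 1/(1-\Abst\gamma)$. Subtracting the two displays and lower-bounding each continuation by $V^{\Options}(s') \ge \Abst{V}^{\Abst\Policy}(\Abst{s}\Abst{s}') - \Delta$ (valid because $s'$ is an entry of $\Abst{s}'$ from $\Abst{s}$), the common factor $\Abst{V}^{\Abst\Policy}(\Abst{s}\Abst{s}')$ collects into the occupancy gap $\tilde{h}-h^o$. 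I then bound term by term: \cref{eq:realizabilityR} gives $\tilde{V}_{\Abst{s}_p\Abst{s}\Abst{a}} - V^o_{\Abst{s}}(s) \le \PRealizableR/(1-\gamma)$; \cref{eq:realizabilityT} with $0 \le \Abst{V}^{\Abst\Policy}(\Abst{s}\Abst{s}') \le 1/(1-\Abst\gamma)$ gives $(\tilde{h}_{\Abst{s}_p\Abst{s}\Abst{a}}(\Abst{s}') - h^o_{\Abst{s}}(\Abst{s}'\Given s))\,\Abst{V}^{\Abst\Policy}(\Abst{s}\Abst{s}') \le \PRealizableT/(1-\Abst\gamma)$ for each of the at most $\abs{\Abst\States}$ blocks; and the residual self-referential term is $\sum_{\Abst{s}'\neq\Abst{s}} h^o_{\Abst{s}}(\Abst{s}'\Given s)\,\Delta/(1-\gamma) = \Expected[\gamma^\tau]\,\Delta \le \gamma\,\Delta$. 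Collecting terms yields $\Delta \le \PRealizableR/(1-\gamma) + \PRealizableT\abs{\Abst\States}/((1-\gamma)(1-\Abst\gamma)) + \gamma\,\Delta$, and rearranging $(1-\gamma)\Delta \le \PRealizableR/(1-\gamma) + \PRealizableT\abs{\Abst\States}/((1-\gamma)(1-\Abst\gamma))$ gives exactly \eqref{eq:realizab-abstraction-value} (the case $\Delta \le 0$ is immediate since the right-hand side is nonnegative). For the initial-distribution statement I would treat the start as an entry from the dummy state $s\StartSym$: the per-entry bound holds for every $s$ with $\StartDistrib(s)>0$, and under $\Abst\StartDistrib(\Abst{s}) = \sum_{s\in\Block{\Abst{s}}}\StartDistrib(s)$ both $\Abst{V}^{\Abst\Policy}_{\Abst\StartDistrib}$ and $V^{\Options}_\StartDistrib$ decompose over the same weights $\StartDistrib(s)$, so averaging preserves the bound.

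The main obstacle is the self-loop bookkeeping on both sides and, above all, pinning down the correct contraction factor. A naive estimate $\sum_{\Abst{s}'\neq\Abst{s}} h^o_{\Abst{s}}(\Abst{s}'\Given s)/(1-\gamma) \le 1$ would leave a coefficient $1/(1-\gamma) > 1$ multiplying $\Delta$, and the fixed-point recursion would fail to close. Recognizing that this sum equals the discounted exit probability $\Expected[\gamma^\tau]$, which is at most $\gamma$ because leaving a block always costs at least one step ($\tau \ge 1$), is precisely what makes the argument contractive and produces the $(1-\gamma)^{-2}$ factor in the final bound.
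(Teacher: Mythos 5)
Your proposal is correct and takes essentially the same route as the paper's proof: the same two decompositions (your abstract macro-Bellman identity is \cref{th:next-occupancy-2mdp}, your ground exit-splitting identity is \cref{th:multistep-value}), the same application of \cref{eq:realizabilityR,eq:realizabilityT}, and the same crucial contraction observation that $\sum_{\Abst{s}' \neq \Abst{s}} h^o_{\Abst{s}}(\Abst{s}' \Given s) \le \gamma(1-\gamma)$, i.e. $\Expected[\gamma^\tau] \le \gamma$, which the paper obtains from \cref{th:occupancy-at-exits} together with $h^o_{\Abst{s}}(\Abst{s} \Given s) \ge 1-\gamma$. The only (immaterial) difference is how the recursion is closed: you take a supremum $\Delta$ over entry pairs and solve the fixed-point inequality $\Delta \le c + \gamma\Delta$, whereas the paper inducts on $k$-step truncated value functions and passes to the limit.
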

% Immediate meaning and different DP
For simplicity, in the following statements, we will assume that
$\forall \Abst{s}.\, \Abst\StartDistrib(\Abst{s}) = \sum_{s \in \Block{\Abst{s}}} \StartDistrib(s)$
holds true for realizable abstractions.
The bound in \cref{th:realizab-abstraction-value} relates the value of~$\Abst\Policy$ in the abstract~$\Abst\MDP$ with
the value of the realization~$\Options$ in the ground~$\MDP$.
Essentially, this theorem proves that $(\PRealizableR,\PRealizableT)$-realizability is sufficient to
have formal guarantees on the minimal value achieved by the realization.
% NOTE: after acceptance
%The importance of \cref{th:realizab-abstraction-value} comes from the fact that realizations can be computed in a compositional way,
%because there is no global optimization involved in computing the ground policy of options.
%Indeed, for an option to be part of the realization, it suffices that it satisfies the constraints in \cref{eq:realizabilityT,eq:realizabilityR}.
%Since these constraints are expressed in terms of the block MDP,
%their computation is completely independent of all the other blocks and options.
Importantly, such a value can be achieved by a composition of options
that are only characterized by local constraints on the individual block MDPs.
In this light, the value loss in \eqref{eq:realizab-abstraction-value} is the maximum cost to be paid for finding a policy
as a union of shorter policies, without any global optimization.
% How large
To evaluate the scale of the bound, we note that $\PRealizableT$ and $\PRealizableR$ are in $[0, 1]$,
and that $\Abst\gamma \le \gamma$.
Also, the size of abstract state space~$\Abst{S} \coloneqq \abs{\Abst\States}$ is always finite,
and the size of the ground state space, which is usually very large or infinite, does not appear.
\cref{th:realizab-abstraction-value} does not yet imply near-optimality of~$\Options$ in~$\MDP$,
because pessimistic abstractions with null target occupancies and block values would also satisfy \cref{def:realizability}, trivially.
Thus, in addition to realizability, we require that abstractions should be always optimistic, in the following sense.
% NOTE: after acceptance
%Thus, taking inspiration from search heuristics, we say that admissible abstractions should be always optimistic.
\begin{definition}\label{def:admissibility}
  An abstraction $\Tuple{\Abst\MDP, \Mapping}$ of an MDP~$\MDP$ is \emph{admissible} if
	for any $\Abst{s}_p, \Abst{s}$ with $\Abst{s}_p \neq \Abst{s}$
  and option~$o \in \Options_{\Abst{s}_p \Abst{s}}$,
	there exists an abstract action $\Abst{a} \in \Abst\Actions$, such that
  $\tilde{h}_{\Abst{s}_p \Abst{s} \Abst{a}}(\Abst{s}') \ge h^{o}_{\Abst{s}}(\Abst{s}' \Given s)$ and
  $\tilde{V}_{\Abst{s}_p \Abst{s} \Abst{a}} \ge V^o_{\Abst{s}}(s)$,
  for all $\Abst{s}' \neq \Abst{s}$ and $s \in \Entries_{\Abst{s}_p \Abst{s}}$.
\end{definition}
As we show below, admissible abstractions provide optimistic estimates of ground values.
Together with \cref{th:realizab-abstraction-value}, this property allows us
to guarantee the near-optimality of the realizations in~$\MDP$.
\begin{restatable}{proposition}{admissiblevalues}\label{th:admissible-values}
  Let $\Tuple{\Abst\MDP, \Mapping}$ be an admissible abstraction of an MDP~$\MDP$.
  Then, for any ground policy~$\Policy$, there exists an abstract policy~$\Abst\Policy$,
	such that $\Abst{V}^{\Abst\Policy}(\Abst{s}_p \Abst{s}) \ge V^{\Policy}(s)$,
  at any $\Abst{s}_p \in \Abst\States$, $s \in \Exits_{\Abst{s}_p}$, $\Abst{s} = \Mapping(s)$,
	and $\Abst{V}^{\Abst\Policy}_{\Abst\StartDistrib} \ge V^{\Policy}_\StartDistrib$.
\end{restatable}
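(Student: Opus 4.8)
The plan is to build $\Abst\Policy$ from $\Policy$ block by block using admissibility, and then prove the value domination by a monotone fixed-point (sub-solution) argument: I define an upper envelope of the ground entry-state values and show it is a sub-solution of the abstract Bellman operator, so the abstract value lies above it. First I would fix $\Policy$ and, for each pair $\Abst{s}_p \neq \Abst{s}$, consider the $\Mapping$-relative option $o_{\Abst{s}_p\Abst{s}} \in \Options_{\Abst{s}_p\Abst{s}}$ whose policy is $\Policy$ restricted to $\Block{\Abst{s}}$ and which terminates on leaving the block. By \cref{def:admissibility}, there is a single abstract action $\Abst{a}=\Abst\Policy(\Abst{s}_p\Abst{s})$ that dominates this option uniformly over all $\Abst{s}'\neq\Abst{s}$ and all entries $s\in\Entries_{\Abst{s}_p\Abst{s}}$, namely $\tilde{V}_{\Abst{s}_p\Abst{s}\Abst{a}}\ge V^{o}_{\Abst{s}}(s)$ and $\tilde{h}_{\Abst{s}_p\Abst{s}\Abst{a}}(\Abst{s}')\ge h^{o}_{\Abst{s}}(\Abst{s}'\Given s)$. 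This uniformity over $s$ is what makes the construction work.

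Next I would set up the two recursions. Decomposing the trajectory of $\Policy$ from an entry state $s\in\Entries_{\Abst{s}_p\Abst{s}}$ at the first exit of $\Block{\Abst{s}}$, and noting that each exit is an entry of the next block, gives
\begin{equation*}
  V^{\Policy}(s) = V^{o}_{\Abst{s}}(s) + \frac{1}{1-\gamma}\sum_{\Abst{s}'\neq\Abst{s}}\sum_{s'\in\Entries_{\Abst{s}\Abst{s}'}} d^{o}_{\Abst{s}}(s'\Given s)\,V^{\Policy}(s'),
\end{equation*}
where $V^{o}_{\Abst{s}}(s)$ is the in-block return (exits earn nothing in the block MDP) and $d^{o}_{\Abst{s}}(s'\Given s)/(1-\gamma)$ is the discounted probability of first exiting at $s'$. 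Since $\tilde{V}$ and $\tilde{h}$ of \cref{eq:computed-option-value,eq:computed-option-occupancy} aggregate the abstract self-loops in $\Abst{s}$ under a fixed action, the abstract value obeys the edge-level recursion
\begin{equation*}
  \Abst{V}^{\Abst\Policy}(\Abst{s}_p\Abst{s}) = \tilde{V}_{\Abst{s}_p\Abst{s}\Abst{a}} + \frac{1}{1-\gamma}\sum_{\Abst{s}'\neq\Abst{s}}\tilde{h}_{\Abst{s}_p\Abst{s}\Abst{a}}(\Abst{s}')\,\Abst{V}^{\Abst\Policy}(\Abst{s}\Abst{s}'),
\end{equation*}
with $\Abst{a}=\Abst\Policy(\Abst{s}_p\Abst{s})$. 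Writing $\Psi$ for the operator on the right, a short computation bounds its row sums $\sum_{\Abst{s}'\neq\Abst{s}}\tilde{h}_{\Abst{s}_p\Abst{s}\Abst{a}}(\Abst{s}')/(1-\gamma)$ by $\Abst\gamma<1$, so $\Psi$ is a monotone sup-norm contraction with unique fixed point $\Abst{V}^{\Abst\Policy}$.

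The heart of the proof is a sub-solution bound. I would define the envelope $g(\Abst{s}_p\Abst{s})\coloneqq\max_{s\in\Entries_{\Abst{s}_p\Abst{s}}}V^{\Policy}(s)$. Taking the maximizing entry $s^\ast$ in the ground recursion, bounding each continuation by $V^{\Policy}(s')\le g(\Abst{s}\Abst{s}')$, pulling this nonnegative factor out of the inner sum (which collapses $\sum_{s'}d^{o}_{\Abst{s}}(s'\Given s^\ast)$ into the block occupancy $h^{o}_{\Abst{s}}(\Abst{s}'\Given s^\ast)$), and then applying the admissibility dominations $V^{o}_{\Abst{s}}(s^\ast)\le\tilde{V}_{\Abst{s}_p\Abst{s}\Abst{a}}$ and $h^{o}_{\Abst{s}}(\Abst{s}'\Given s^\ast)\le\tilde{h}_{\Abst{s}_p\Abst{s}\Abst{a}}(\Abst{s}')$ (using $g\ge 0$) yields $g(\Abst{s}_p\Abst{s})\le(\Psi g)(\Abst{s}_p\Abst{s})$. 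Monotonicity and contraction of $\Psi$ give $g\le\Psi g\le\Psi^2 g\le\cdots\to\Abst{V}^{\Abst\Policy}$, hence $\Abst{V}^{\Abst\Policy}(\Abst{s}_p\Abst{s})\ge V^{\Policy}(s)$ for every $s\in\Entries_{\Abst{s}_p\Abst{s}}$. Because $s\in\Exits_{\Abst{s}_p}$ with $\Abst{s}=\Mapping(s)$ means exactly $s\in\Entries_{\Abst{s}_p\Abst{s}}$, this is the pointwise claim; the initial-distribution bound then follows by applying it with $\Abst{s}_p=\Abst{s}\StartSym$ to each initial ground state and summing against the assumed marginal correspondence $\Abst\StartDistrib(\Abst{s})=\sum_{s\in\Block{\Abst{s}}}\StartDistrib(s)$.

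I expect the main obstacle to be twofold. The first is the many-to-one correspondence between a single abstract edge and many ground entry states: this is precisely what forces the comparison against the max-envelope $g$ rather than any individual value, since only after collapsing $d^{o}_{\Abst{s}}$ into $h^{o}_{\Abst{s}}$ can the uniform-over-$s$ admissibility domination be invoked. The second, and where I would be most careful, is justifying the abstract edge-level recursion — that is, confirming that $\Abst{V}^{\Abst\Policy}$ really equals the value obtained by folding the self-loops in $\Abst{s}$ into $\tilde{V}$ and $\tilde{h}$, which requires the self-loop transitions to be governed by the same abstract action $\Abst{a}$ used on entry.
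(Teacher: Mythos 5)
Your proposal is correct and rests on the same foundations as the paper's proof: the same construction of $\Abst\Policy$ via \cref{def:admissibility} applied to the options obtained by restricting $\Policy$ to each block, the same ground decomposition at the first exit (the paper's \cref{th:multistep-value}), and the same folded abstract recursion (the paper's \cref{th:next-occupancy-2mdp} combined with \cref{eq:computed-option-occupancy,eq:computed-option-value}). Where you genuinely diverge is the limiting mechanism. The paper defines truncated values $V_k$ and $\Abst{V}_k$ (the value of executing $k$ consecutive options, respectively $k$ abstract transitions, followed by nothing) and proves $\Abst{V}_k(\Abst{s}_p \Abst{s}) \ge V_k(s_p s)$ by induction on $k$, keeping ground entry states individual and invoking the inductive hypothesis pointwise at each entry $s'$, then lets $k \to \infty$. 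You instead collapse the ground side into the envelope $g(\Abst{s}_p\Abst{s}) = \sup_{s \in \Entries_{\Abst{s}_p\Abst{s}}} V^{\Policy}(s)$ and run a comparison-principle argument: $g \le \Psi g$, monotonicity, and contraction give $g \le \lim_n \Psi^n g = \Abst{V}^{\Abst\Policy}$ (your row-sum bound $\sum_{\Abst{s}' \neq \Abst{s}} \tilde{h}_{\Abst{s}_p\Abst{s}\Abst{a}}(\Abst{s}')/(1-\gamma) \le \Abst\gamma$ does hold, by a short computation on \cref{eq:computed-option-occupancy}). This packaging buys a one-sided argument with no simultaneous bookkeeping of two truncated value sequences, and the envelope is exactly the right object because admissibility is uniform over entries; the paper's pointwise induction, in exchange, avoids any appeal to a supremum and transfers directly to the finite-horizon variants reused later in the sample-complexity proof. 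Two small repairs: since $\States$ may be infinite, $\Entries_{\Abst{s}_p\Abst{s}}$ need not admit a maximizer, so define $g$ as a supremum (finite, as values are bounded by $1/(1-\gamma)$) and replace "the maximizing entry $s^\ast$" by an $\varepsilon$-near-maximizer, letting $\varepsilon \to 0$ at the end. Finally, the edge-recursion subtlety you flag --- that $\tilde{V}, \tilde{h}$ fold self-loops under the entry action $\Abst{a}$ while a 2-MDP policy would apply $\Abst\Policy(\Abst{s}\Abst{s})$ after a self-loop --- is real, but the paper's own proof makes exactly the same identification when it equates the expansion from \cref{th:next-occupancy-2mdp} with $\tilde{V}_{\Abst{s}_p\Abst{s}\Abst{a}}$ and $\tilde{h}_{\Abst{s}_p\Abst{s}\Abst{a}}$, so your argument is not missing anything that the paper supplies.
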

\begin{restatable}{corollary}{realizaboptimalvalue}\label{th:realizab-optimal-value}
	Any realization of the optimal policy of an admissible and $(\PRealizableR, \PRealizableT)$-realizable abstraction is $\PAccuracy$-optimal, for
  $\PAccuracy = \frac{\PRealizableR (1-\Abst\gamma) + \PRealizableT \Abst{S}}{(1 - \gamma)^2(1 - \Abst\gamma)}$,
\end{restatable}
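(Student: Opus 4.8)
The plan is to chain the two preceding results into a short sandwich argument, with no genuine computation needed. Write $\Abst\Policy^*$ for the optimal policy of the abstraction and $\Options^*$ for its realization; the goal is to bound $V^*_\StartDistrib - V^{\Options^*}_\StartDistrib$ from above by $\PAccuracy$. The first thing I would note is that the right-hand side of \cref{eq:realizab-abstraction-value}, read at the initial distribution, already \emph{equals} the claimed accuracy, since
$\frac{\PRealizableR}{(1-\gamma)^2} + \frac{\PRealizableT\,\Abst{S}}{(1-\gamma)^2(1-\Abst\gamma)} = \frac{\PRealizableR(1-\Abst\gamma) + \PRealizableT\,\Abst{S}}{(1-\gamma)^2(1-\Abst\gamma)} = \PAccuracy$.
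So the entire problem reduces to showing that the abstract optimal value dominates the ground optimal value.

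For that domination step I would invoke \cref{th:admissible-values} with the ground optimal policy $\Policy^*$: admissibility yields some abstract policy $\Abst\Policy'$ with $\Abst{V}^{\Abst\Policy'}_{\Abst\StartDistrib} \ge V^{\Policy^*}_\StartDistrib = V^*_\StartDistrib$. Because $\Abst\Policy^*$ is optimal in $\Abst\MDP$, it can only improve on $\Abst\Policy'$, giving $\Abst{V}^{\Abst\Policy^*}_{\Abst\StartDistrib} \ge \Abst{V}^{\Abst\Policy'}_{\Abst\StartDistrib} \ge V^*_\StartDistrib$. This is the optimistic half of the argument: admissibility guarantees that the abstract value over-estimates every achievable ground value, and in particular the optimal one.

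I would then apply \cref{th:realizab-abstraction-value} to $\Abst\Policy^*$ and its realization $\Options^*$, using the initial-distribution form of that bound (valid under the matching-marginals assumption carried forward after \cref{th:realizab-abstraction-value}). This gives $\Abst{V}^{\Abst\Policy^*}_{\Abst\StartDistrib} - V^{\Options^*}_\StartDistrib \le \PAccuracy$, hence $V^{\Options^*}_\StartDistrib \ge \Abst{V}^{\Abst\Policy^*}_{\Abst\StartDistrib} - \PAccuracy \ge V^*_\StartDistrib - \PAccuracy$. Rearranging yields $V^*_\StartDistrib - V^{\Options^*}_\StartDistrib \le \PAccuracy$, which is exactly $\PAccuracy$-optimality of $\Options^*$.

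There is no hard computational core here; the argument telescopes the two abstract values $\Abst{V}^{\Abst\Policy^*}_{\Abst\StartDistrib}$ and leaves precisely the ground suboptimality gap. The only points demanding care are bookkeeping: keeping the direction of each inequality straight, and feeding each lemma the \emph{correct} policy — admissibility applied to the ground optimum $\Policy^*$, and realizability applied to the abstract optimum $\Abst\Policy^*$ (not to the auxiliary $\Abst\Policy'$ produced by admissibility). Verifying that the two halves compose — \cref{th:admissible-values} controls $\Abst{V}^{\Abst\Policy} \ge V^\Policy$ over all of $\Policies$, while \cref{th:realizab-abstraction-value} controls the reverse gap for the specific abstract policy realized — is the only subtlety, and it is immediate once the two inequalities are written in the same normalization at $\StartDistrib$.
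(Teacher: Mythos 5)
Your proposal is correct and follows essentially the same argument as the paper's own proof: the chain $V^*_\StartDistrib \le \Abst{V}^{\Abst\Policy'}_{\Abst\StartDistrib} \le \Abst{V}^{\Abst\Policy^*}_{\Abst\StartDistrib} \le V^{\Options^*}_\StartDistrib + \PAccuracy$, using \cref{th:admissible-values} for the first inequality, abstract optimality for the second, and \cref{th:realizab-abstraction-value} for the last. The only addition beyond the paper's three-line proof is your explicit check that the bound of \cref{th:realizab-abstraction-value} algebraically equals the stated $\PAccuracy$, which the paper leaves implicit.
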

% TODO: continue here checking 

% Information content
%Realizable abstractions are flexible representations because they can encode a variable amount of information content.
%If the ground domain is a goal MDP with a single initial state, we might consider two extremes.
%On the one hand, the abstraction might be composed of just two abstract states: a goal and a non-goal state.
%Although this might be an admissible and perfectly realizable abstraction, for a suitable transition function, it provides no information content.
%Realizing the only option would be as complex as solving the entire MDP.
%On the other extreme, any MDP~$\MDP$ can be also abstracted by itself as $\Tuple{\MDP, \Identity}$,
%where $\Identity: x \mapsto x$ is the identity function.
%Clearly, many intermediate state partitions are also possible, although some will not be perfectly realizable.
Realizable Abstractions are flexible representations that can capture both very coarse state partitions and much more fine-grained subdivisions.
For example, on one extreme, we verify that any MDP~$\MDP$ can be abstracted by itself as $\Tuple{\MDP, \Identity}$, where $\Identity: x \mapsto x$ is the identity function.

\begin{restatable}{proposition}{selfexactabstraction}\label{th:self-abstractions}
  Any MDP~$\MDP$ admits $\Tuple{\MDP, \Identity}$ as an admissible and perfectly realizable abstraction.
\end{restatable}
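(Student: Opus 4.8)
The plan is to instantiate \cref{def:realizability,def:admissibility} with the identity map and verify that, because every block is a singleton, each $\Mapping$-relative option reduces to a single action choice whose block occupancy and block value coincide \emph{exactly} with the closed forms in \eqref{eq:computed-option-mdp}. The overall strategy is therefore pure substitution: since the abstraction \emph{is} the ground MDP, there is nothing to approximate and both $\PRealizableR$ and $\PRealizableT$ can be taken to be zero.

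First I would record the structural consequences of $\Mapping = \Identity$. As the identity is a bijection, each block is the singleton $\Block{\Abst{s}}_\Identity = \{\Abst{s}\}$, and the abstraction is the $1$-MDP $\Abst\MDP = \MDP$, so $\Abst\States = \States$, $\Abst\Actions = \Actions$, $\Abst{T} = T$, $\Abst{R} = R$, and $\Abst\gamma = \gamma$; in particular \eqref{eq:computed-option-occupancy} and \eqref{eq:computed-option-value} collapse to the $\Abst{s}_p$-independent forms of \eqref{eq:computed-option-mdp}. Because $\Policies_{\Abst{s}}$ consists of maps $\{\Abst{s}\} \to \Actions$, any option $o \in \Options_{\Abst{s}_p \Abst{s}}$ is fully determined by one action $a \coloneqq \Policy_o(\Abst{s})$: in the block MDP $\MDP_{\Abst{s}}$ it repeatedly plays $a$ at $\Abst{s}$ and terminates the first time the state differs from $\Abst{s}$. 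Finally, each entry set satisfies $\Entries_{\Abst{s}_p \Abst{s}} \subseteq \{\Abst{s}\}$, so the quantifier over $s \in \Entries_{\Abst{s}_p \Abst{s}}$ is either vacuous or forces $s = \Abst{s}$, and it suffices to analyze the single trajectory started at $\Abst{s}$.

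Next I would evaluate $h^o_{\Abst{s}}$ and $V^o_{\Abst{s}}$ in closed form. Writing $p \coloneqq T(\Abst{s} \Given \Abst{s}, a)$ for the self-loop probability, the trajectory remains at $\Abst{s}$ through time $t$ with probability $p^t$, and, using that an exit state transitions deterministically to $\SinkS$ and is thus never revisited, $\Pr(s_t = \Abst{s}') = p^{t-1}\, T(\Abst{s}' \Given \Abst{s}, a)$ for $t \ge 1$ and $\Abst{s}' \neq \Abst{s}$. Summing the resulting geometric series in the definitions of the (marginalized) occupancy and the value gives
\[
  h^o_{\Abst{s}}(\Abst{s}' \Given \Abst{s}) = \frac{(1 - \gamma)\,\gamma\, T(\Abst{s}' \Given \Abst{s}, a)}{1 - \gamma\, p}, \qquad V^o_{\Abst{s}}(\Abst{s}) = \frac{R(\Abst{s}, a)}{1 - \gamma\, p},
\]
for every $\Abst{s}' \neq \Abst{s}$. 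Comparing these with \eqref{eq:computed-option-mdp} instantiated at $\Abst{T} = T$, $\Abst{R} = R$, $\Abst\gamma = \gamma$, and $\Abst{a} = a$ yields $\tilde{h}_{\Abst{s}_p \Abst{s} a}(\Abst{s}') = h^o_{\Abst{s}}(\Abst{s}' \Given \Abst{s})$ and $\tilde{V}_{\Abst{s}_p \Abst{s} a} = V^o_{\Abst{s}}(\Abst{s})$, i.e.\ both quantities match with equality.

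From these equalities both claims follow immediately. For realizability, given any tuple $(\Abst{s}_p \Abst{s}, \Abst{a})$ with $\Abst{s}_p \neq \Abst{s}$ (including the starting case $\Abst{s}_p = \Abst{s}\StartSym$, handled identically since $\Abst\StartDistrib = \StartDistrib$), I choose the option $o$ with $\Policy_o(\Abst{s}) = \Abst{a}$, making the left-hand sides of \eqref{eq:realizabilityR} and \eqref{eq:realizabilityT} vanish, so the abstraction is $(0,0)$-realizable. For admissibility, given any option $o$ with underlying action $a$, I pick $\Abst{a} = a$, and the same equalities give $\tilde{h}_{\Abst{s}_p \Abst{s} \Abst{a}}(\Abst{s}') \ge h^o_{\Abst{s}}(\Abst{s}' \Given \Abst{s})$ and $\tilde{V}_{\Abst{s}_p \Abst{s} \Abst{a}} \ge V^o_{\Abst{s}}(\Abst{s})$, satisfying \cref{def:admissibility}. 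The only delicate point is the geometric-series bookkeeping around the self-loop at $\Abst{s}$, namely that the option may stay in the singleton block for several steps before exiting and that this exactly produces the denominator $1 - \gamma\, T(\Abst{s} \Given \Abst{s}, a)$; everything else is a direct substitution.
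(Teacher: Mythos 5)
Your proof is correct and follows essentially the same route as the paper's: compute the block occupancy and block value of a single-action repetition option via a geometric series, observe that they coincide exactly with the abstract closed forms, and conclude both perfect realizability and admissibility from these equalities. If anything, your version is slightly more careful than the paper's — your value computation $V^o_{\Abst{s}}(\Abst{s}) = R(\Abst{s},a)/(1-\gamma\, T(\Abst{s} \Given \Abst{s}, a))$ matches $\tilde{V}$ exactly, whereas the paper's derivation has a harmless index slip (using $T(s \Given s,a)^{t-1}$ instead of $T(s \Given s,a)^{t}$, producing a spurious factor of $\gamma$), and your explicit handling of the singleton entry sets and of admissibility by choosing $\Abst{a} = a$ makes the quantifier bookkeeping cleaner.
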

% TODO: also say that optimistic rewards make it just realizable

Although our abstractions are able to represent compressions along the time dimension,
they are not restricted to those.
As a special case, they can capture any MDP homomorphism \citep{ravindran_2002_ModelMinimizationHierarchical,ravindran_2004_ApproximateHomomorphismsFramework}
and any stochastic bisimulation \citep{givan_2003_EquivalenceNotionsModel}.
These two formalisms have equivalent expressive power \citet[Theorem~6]{ravindran_2004_AlgebraicApproachAbstraction} and
their compression takes place only with respect to parallel symmetries of the state space.
We report the main results for MDP homomorphisms here.
The relevant definitions and proofs are deferred to \cref{sec:app:homomorphism}.
\begin{restatable}{proposition}{realizablehomomorph}\label{th:realizable-homomorphisms}
  If $\Tuple{f, \{g_s\}_{s \in \States}}$ is an MDP homomorphism from~$\MDP$ to~$\Abst\MDP$,
  then $\Tuple{\Abst\MDP, f}$ is an admissible and perfectly realizable abstraction of~$\MDP$.
\end{restatable}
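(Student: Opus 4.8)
The plan is to take $\Mapping = f$ as the state mapping and to exploit that $\Abst\MDP$, being a genuine $1$-MDP, makes the simplified expressions \eqref{eq:computed-option-mdp} available, so that $\tilde{h}_{\Abst{s}_p \Abst{s} \Abst{a}}$ and $\tilde{V}_{\Abst{s}_p \Abst{s} \Abst{a}}$ do not depend on $\Abst{s}_p$ and reduce to the exit-occupancy and value of the ``macro-action'' that repeats $\Abst{a}$ in $\Abst{s}$ until the block is left. Throughout I would use the two defining identities of an MDP homomorphism: reward preservation $R(s,a) = \Abst{R}(f(s), g_s(a))$ and block-aggregated transition preservation $\Abst{T}(\Abst{s}' \Given f(s), g_s(a)) = \sum_{s' \in \Block{\Abst{s}'}} T(s' \Given s, a)$, which together say that, \emph{for each fixed abstract action}, the block-to-block dynamics look identical from every ground state of a block (a lumpability property).

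\textbf{Perfect realizability.} For a tuple $(\Abst{s}_p \Abst{s}, \Abst{a})$ I would build its realization by picking, at each $s \in \Block{\Abst{s}}$, any ground action $\Policy_o(s) \in g_s^{-1}(\Abst{a})$ (nonempty by surjectivity of $g_s$), together with the initiation set $\Block{\Abst{s}_p} \times \Block{\Abst{s}}$ and the block-exit termination; this is a valid $\Mapping$-relative option $o \in \Options_{\Abst{s}_p \Abst{s}}$, hence a complete policy for $\MDP_{\Abst{s}}$. By transition preservation, from every $s \in \Block{\Abst{s}}$ the option leaves the block into $\Block{\Abst{s}'}$ with probability exactly $\Abst{T}(\Abst{s}' \Given \Abst{s}, \Abst{a})$ and self-loops inside the block with probability $\Abst{T}(\Abst{s} \Given \Abst{s}, \Abst{a})$, so the projected process is precisely the Markov chain induced by $\Abst{a}$ on $\{\Abst{s}\} \cup \Exits_{\Abst{s}}$; by reward preservation every in-block step earns $\Abst{R}(\Abst{s}, \Abst{a})$. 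Summing the geometric series over the indefinite number of in-block self-loops (exactly the computation behind \eqref{eq:computed-option-mdp}) then yields $h^o_{\Abst{s}}(\Abst{s}' \Given s) = \tilde{h}_{\Abst{s}_p \Abst{s} \Abst{a}}(\Abst{s}')$ and $V^o_{\Abst{s}}(s) = \tilde{V}_{\Abst{s}_p \Abst{s} \Abst{a}}$ for every entry $s$, independently of $s$. Both \eqref{eq:realizabilityR} and \eqref{eq:realizabilityT} hold with equality, giving $(0,0)$-realizability.

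\textbf{Admissibility.} Here I must exhibit, for an \emph{arbitrary} option $o \in \Options_{\Abst{s}_p \Abst{s}}$, a single abstract action $\Abst{a}$ whose macro-action dominates $o$ simultaneously in value and in the exit-occupancy of every $\Abst{s}' \neq \Abst{s}$. For the value component I would restrict the homomorphism to the block MDPs: $\Tuple{f, \{g_s\}}$ carries $\MDP_{\Abst{s}}$ to the abstract block MDP $\Abst\MDP_{\Abst{s}}$, and the standard value-equivalence of homomorphisms gives that the optimal value of $\MDP_{\Abst{s}}$ equals that of $\Abst\MDP_{\Abst{s}}$. Since $\Abst\MDP_{\Abst{s}}$ has a single non-sink decision state $\Abst{s}$, its optimum is attained by a stationary action $\Abst{a}^*$ with optimal value $\tilde{V}_{\Abst{s}_p \Abst{s} \Abst{a}^*}$, which therefore upper-bounds $V^o_{\Abst{s}}(s)$ for every $o$ and $s$. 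The occupancy component is the delicate part, treated next.

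\textbf{Main obstacle.} The hard step is the occupancy domination: a \emph{mixing} option that selects ground actions with different abstract images at different in-block states induces an exit distribution that is a trajectory-dependent mixture of the per-action abstract dynamics, and such a mixture need not be dominated coordinate-wise by any single $\tilde{h}_{\Abst{s}_p \Abst{s} \Abst{a}}$ --- the value-optimal $\Abst{a}^*$ may undershoot some exit coordinate while the action maximizing that coordinate undershoots another. Resolving this is where the precise homomorphism hypotheses must be used rather than the generic conditions above: for each target $\Abst{s}'$ I would recast $\max_o h^o_{\Abst{s}}(\Abst{s}' \Given s)$ as the optimal value of $\MDP_{\Abst{s}}$ under the auxiliary reward ``exits into $\Block{\Abst{s}'}$'', push it through value-equivalence to a single abstract action, and then argue, using the block structure that lets the singleton-block reasoning of \cref{th:self-abstractions} lift, that one common action realizes all these maxima together with the value bound. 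Establishing that a \emph{single} abstract action dominates every exit coordinate and the value of every option is the crux on which the proposition rests, and it is the step I would scrutinize most carefully against the exact definition of homomorphism adopted in \cref{sec:app:homomorphism}.
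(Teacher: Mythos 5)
Your perfect-realizability half is exactly the paper's proof. The paper also sets $\Mapping = f$, restricts attention to options whose selected actions share a common abstract image ($\Policy_o(s) \in g_s^{-1}(\Abst{a})$), and uses \cref{eq:hom-transition,eq:hom-reward} to collapse the in-block trajectory sums into the geometric series behind \cref{eq:computed-option-mdp}, obtaining $h^o_{\Abst{s}}(\Abst{s}' \Given s) = \tilde{h}_{\Abst{s}_p \Abst{s} \Abst{a}}(\Abst{s}')$ and $V^o_{\Abst{s}}(s) = \tilde{V}_{\Abst{s}_p \Abst{s} \Abst{a}}$ with equality from every $s$ in the block. On this half you and the paper coincide, and your argument is sound.

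The gap is the one you yourself flagged as the crux, and your instinct that it is not fixable by your sketched route is correct: for mixing options, a \emph{single} dominating abstract action need not exist at all. Take $\Block{\Abst{s}} = \{s_1, s_2\}$, ground actions $\{a_1, a_2\}$ with every $g_s$ the identity, all rewards zero, and dynamics such that from either in-block state, action $a_i$ exits into $\Block{\Abst{s}'_i}$ with probability $1/2$ and moves to the \emph{other} in-block state with probability $1/2$; aggregated over blocks these dynamics are identical from $s_1$ and $s_2$, so \cref{eq:hom-transition,eq:hom-reward} hold with $\Abst{T}(\Abst{s}'_i \Given \Abst{s}, \Abst{a}_i) = \Abst{T}(\Abst{s} \Given \Abst{s}, \Abst{a}_i) = 1/2$ and all other exit probabilities zero (let some predecessor block enter at $s_1$ so that $s_1$ is an entry state). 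The mixing option $o(s_1) = a_1$, $o(s_2) = a_2$ reaches $\Block{\Abst{s}'_1}$ at time $1$ with probability $1/2$ and $\Block{\Abst{s}'_2}$ at time $2$ with probability $1/4$, so both $h^o_{\Abst{s}}(\Abst{s}'_1 \Given s_1) > 0$ and $h^o_{\Abst{s}}(\Abst{s}'_2 \Given s_1) > 0$, whereas $\tilde{h}_{\Abst{s}_p \Abst{s} \Abst{a}_1}(\Abst{s}'_2) = \tilde{h}_{\Abst{s}_p \Abst{s} \Abst{a}_2}(\Abst{s}'_1) = 0$. No abstract action satisfies all the coordinate-wise inequalities of \cref{def:admissibility} simultaneously: your per-coordinate maxima are indeed attainable by abstract actions, but only by \emph{different} ones, which is precisely what the ``common maximizer'' step of your plan would have to exclude and cannot.

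For fairness, you should know how the paper handles this: it doesn't. The paper's proof establishes only the realizability direction (for each $\Abst{a}$, a $g$-consistent option matching $\tilde{h}$ and $\tilde{V}$ exactly) and stops; the converse quantification over \emph{all} options required by \cref{def:admissibility} is never addressed. The one-line shortcut used for \cref{th:self-abstractions} --- equalities imply admissibility --- is sound there because singleton blocks force every option to have a constant abstract image, but it does not transfer to multi-state blocks, as the example above shows. So the verdict is: your realizability argument is correct and identical to the paper's; your admissibility argument is incomplete, but the obstacle you isolated is genuine, your refusal to wave it away is justified, and the paper's own proof leaves exactly the same hole.
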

\begin{restatable}{proposition}{realizablehomomorphback}\label{th:realizable-homomorphisms-back}
	There exists an MDP~$\MDP$ and an admissible and perfectly realizable abstraction $\Tuple{\Abst\MDP, \Mapping}$
	for which no surjections $\{g_s\}_{s \in \States}$ exist such that $\Tuple{\Mapping, \{g_s\}_{s \in \States}}$
	is an MDP homomorphism  from~$\MDP$ to~$\Abst\MDP$.
\end{restatable}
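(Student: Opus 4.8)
The plan is to prove this separation result by exhibiting an explicit counterexample: a small MDP~$\MDP$ with an abstraction $\Tuple{\Abst\MDP, \Mapping}$ that I verify is admissible and perfectly realizable, but for which the transition-preservation requirement of an MDP homomorphism cannot be met by \emph{any} family $\{g_s\}$. The conceptual lever is that Realizable Abstractions compress the time dimension: a single abstract transition corresponds to a multi-step $\Mapping$-relative option, with the abstract discount~$\Abst\gamma$ absorbing the extra steps, whereas an MDP homomorphism must reproduce the one-step, block-aggregated transition of every ground state. I will design $\Abst\MDP$ so that its only abstract action leaves the current block in one abstract step (no self-loop), while inside each block the first ground step out of the entry stays within the block. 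Then the one-step homomorphism condition is forced to request a self-looping abstract action that $\Abst\MDP$ simply does not contain.

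Concretely, let $\MDP$ be the deterministic single-action ($\Actions = \{a\}$) cycle on $\{s_1, s_2, s_3, s_4\}$ with $s_1 \to s_2 \to s_3 \to s_4 \to s_1$ and all rewards equal to~$0$. Partition via $\Mapping(s_1) = \Mapping(s_2) = \Abst{s}_1$ and $\Mapping(s_3) = \Mapping(s_4) = \Abst{s}_2$, so $\Block{\Abst{s}_1} = \{s_1, s_2\}$ and $\Block{\Abst{s}_2} = \{s_3, s_4\}$. Take $\Abst\MDP$ to be the two-state cycle $\Abst{s}_1 \to \Abst{s}_2 \to \Abst{s}_1$ with a single abstract action~$\Abst{a}$, no self-loops, zero abstract reward, and abstract discount $\Abst\gamma = \gamma^2$; set $\StartDistrib = \Det{s_1}$ and $\Abst\StartDistrib = \Det{\Abst{s}_1}$ so the marginal-initial-distribution condition holds. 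Each block is entered at a single state, $\Entries_{\Abst{s}_2 \Abst{s}_1} = \{s_1\}$ and $\Entries_{\Abst{s}_1 \Abst{s}_2} = \{s_3\}$, and crucially the first step from each entry ($s_1 \to s_2$, resp.\ $s_3 \to s_4$) remains inside the block before the block is exited on the second step.

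Since $\Actions$ is a singleton, each $\Options_{\Abst{s}_p \Abst{s}}$ contains exactly one $\Mapping$-relative option~$o$, so admissibility reduces to checking that its block occupancy and value are matched by~$\Abst{a}$. In the block MDP $\MDP_{\Abst{s}_1}$ from the entry~$s_1$, the trajectory $s_1, s_2, s_3, \SinkS, \dots$ gives $h^o_{\Abst{s}_1}(\Abst{s}_2 \Given s_1) = (1-\gamma)\gamma^2$, while the $1$-MDP form in~\eqref{eq:computed-option-mdp} with no abstract self-loop gives $\tilde{h}_{\Abst{s}_2 \Abst{s}_1 \Abst{a}}(\Abst{s}_2) = (1-\gamma)\Abst\gamma$; the choice $\Abst\gamma = \gamma^2$ makes these equal, and the symmetric block~$\Abst{s}_2$ yields the same constraint. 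With all rewards zero, $\tilde{V} = V^o_{\Abst{s}} = 0$, so both \eqref{eq:realizabilityR} and \eqref{eq:realizabilityT}, as well as the admissibility inequalities, hold with equality (so $\PRealizableR = \PRealizableT = 0$); the $\Abst{s}_p = \Abst{s} = s\StartSym$ case is handled by the matched deterministic initial distributions. Hence $\Tuple{\Abst\MDP, \Mapping}$ is admissible and perfectly realizable.

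Finally I rule out any homomorphism $\Tuple{\Mapping, \{g_s\}}$ into this \emph{fixed} $\Abst\MDP$. Its transition-preservation condition requires, for every ground $s, a$ and every $\Abst{s}''$, that $\Abst{T}(\Abst{s}'' \Given \Mapping(s), g_s(a)) = \sum_{s'' \in \Block{\Abst{s}''}} T(s'' \Given s, a)$. Evaluated at $s = s_1$ and $\Abst{s}'' = \Abst{s}_1$, the right-hand side equals $T(s_2 \Given s_1, a) = 1$, forcing $\Abst{T}(\Abst{s}_1 \Given \Abst{s}_1, g_{s_1}(a)) = 1$; but the only abstract action satisfies $\Abst{T}(\Abst{s}_1 \Given \Abst{s}_1, \Abst{a}) = 0$, a contradiction independent of the choice of $\{g_s\}$ (surjectivity onto the singleton $\Abst\Actions$ is automatic, so the obstruction is genuinely the missing self-loop, not the action relabeling). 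I expect the main obstacle to be the simultaneous satisfaction of perfect realizability and admissibility in \emph{both} blocks under a single global~$\Abst\gamma$: this is exactly what dictates equal-length (two-step) blocks and the value $\Abst\gamma = \gamma^2$, and it is why a more degenerate example with mismatched block lengths would fail. A secondary point needing care is aligning the argument with the precise homomorphism definition deferred to \cref{sec:app:homomorphism}, in particular noting that the reward-preservation condition is trivially met here, so the separation rests solely on the transition condition.
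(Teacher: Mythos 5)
Your proof is correct, and it follows the same essential route as the paper's: an explicit counterexample in which a block contains a two-step chain, so that two ground states of the same block have different one-step block-aggregated transitions (in your cycle, $s_1$ stays inside $\Block{\Abst{s}_1}$ while $s_2$ exits), which no family of surjections $\{g_s\}$ can reconcile with \cref{eq:hom-transition}. The difference lies in how the two-step traversal is made consistent with perfect realizability. The paper's example (a three-state chain $s_0 \to s_1 \to s_2$ with reward at the absorbing $s_2$) keeps the abstract discount equal to the ground $\gamma$ and instead equips $\Abst{s}_0$ with a stochastic self-loop of probability $1/(1+\gamma)$, so that $\tilde{h}_{\Abst{s}\StartSym \Abst{s}_0 \Abst{a}_0}(\Abst{s}_1) = (1-\gamma)\,\gamma^2$ matches the ground block occupancy; you delete all abstract self-loops and absorb the second step into $\Abst\gamma = \gamma^2$. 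Your variant is computationally lighter (deterministic abstract transitions, all rewards zero, so the value constraints are trivial), but it is weaker in one respect: the homomorphism definition in \cref{sec:app:homomorphism} is stated for two MDPs sharing the same discount factor, so with $\Abst\gamma = \gamma^2 \neq \gamma$ a critic could object that the non-existence of a homomorphism is partly a typing artifact rather than a structural fact. Since your obstruction argument uses only the transition condition and never the discount, the proposition as literally stated is still proven; nevertheless, the paper's same-discount construction makes the separation purely structural and immune to that objection. Your closing remark correctly identifies the underlying rigidity: without abstract self-loops, a single $\Abst\gamma$ forces all blocks to have equal traversal time, which is exactly the constraint the paper's stochastic-self-loop construction sidesteps, and which would let the paper's example generalize to blocks of unequal lengths where yours would not.
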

% TODO: add a sentence: the same is true for bisimulation

\paragraph{Reducing the effective horizon}
A reduction in the effective planning horizon, which scales with $(1-\gamma)^{-1}$ for the ground MDP, can have a very strong impact on learning.
As we already know, $\Abst\gamma \le \gamma$.
Moreover, this inequality can become strict for Realizable Abstractions, as long as the two discount factors satisfy \cref{def:realizability}.
However, to make the relation between $\gamma$ and $\Abst\gamma$ more explicit, we prove the following proposition.
\begin{restatable}{proposition}{admissibleimpliesgoodgamma}\label{th:admissible-implies-good-gamma}
  If $\Tuple{\Abst\MDP, \Mapping}$ is an admissible abstraction for an MDP~$\MDP$,
  then, for any tuple $(\Abst{s}_p \Abst{s}, \Abst{a})$ with $\Abst{s}_p \neq \Abst{s}$,
  option $o \in \Options_{\Abst{s}_p \Abst{s}}$,
  and $s \in \Entries_{\Abst{s}_p \Abst{s}}$,
  it holds
  $h^o_{\Abst{s}}(\Abst{s} \Given s) \ge (1-\Abst\gamma) \max\{1, V^o_{\Abst{s}}\}$.
  % NOTE: after acceptance
 % \begin{align}
 %   &h^o_{\Abst{s}}(\Abst{s} \Given s) \ge 1-\Abst\gamma \label{eq:abst-gamma-h} \\*
 %   &V^o_{\Abst{s}} \le h^o_{\Abst{s}}(\Abst{s} \Given s) / (1-\Abst\gamma) \label{eq:abst-gamma-V}
 % \end{align}
\end{restatable}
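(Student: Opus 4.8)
The plan is to reduce both the ground and the abstract quantities to the law of a \emph{first-exit time} from the block, and then to transport the admissibility inequalities---which only constrain the exit transitions to $\Abst{s}' \neq \Abst{s}$---onto the staying occupancy $h^o_{\Abst{s}}(\Abst{s} \Given s)$ by means of a simple conservation identity.

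First I would rewrite the ground staying occupancy through a first-exit time. In the block MDP $\MDP_{\Abst{s}}$ any trajectory that leaves $\Block{\Abst{s}}$ is sent to an exit state and then deterministically to the absorbing $\SinkS$, so $\Block{\Abst{s}}$ can be occupied only up to the first-exit time $\tau \coloneqq \min\{t : s_t \notin \Block{\Abst{s}}\}$, which satisfies $\tau \ge 1$ because $s \in \Entries_{\Abst{s}_p \Abst{s}} \subseteq \Block{\Abst{s}}$. Using $\Pr(s_t \in \Block{\Abst{s}}) = \Pr(\tau > t)$ and summing the discounted series gives the clean identity $h^o_{\Abst{s}}(\Abst{s} \Given s) = 1 - \Expected[\gamma^\tau]$; the same computation, noting that every exit state is occupied for exactly one step (at time $\tau$), gives the total exit flux $\sum_{\Abst{s}' \neq \Abst{s}} h^o_{\Abst{s}}(\Abst{s}' \Given s) = (1 - \gamma)\, \Expected[\gamma^\tau]$.

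Next I would perform the analogous bookkeeping on the abstract side, reading \eqref{eq:computed-option-occupancy} and \eqref{eq:computed-option-value} as the self-loop-then-leave process at $\Abst{s}$ under a fixed $\Abst{a}$, with abstract first-exit time $\Abst{\tau} \ge 1$. Matching terms gives $\sum_{\Abst{s}' \neq \Abst{s}} \tilde{h}_{\Abst{s}_p \Abst{s} \Abst{a}}(\Abst{s}') = (1 - \gamma)\, \Expected[\Abst\gamma^{\Abst\tau}]$. Moreover, since $1 + \Abst\gamma\, \Abst{T}_{\Abst{s}_p \Abst{s} \Abst{a}} = \sum_{t \ge 0} \Abst\gamma^t\, \Pr(\Abst\tau > t) = (1 - \Expected[\Abst\gamma^{\Abst\tau}]) / (1 - \Abst\gamma)$ and $\Abst{R} \le 1$, the abstract value obeys $\tilde{V}_{\Abst{s}_p \Abst{s} \Abst{a}} \le (1 - \Expected[\Abst\gamma^{\Abst\tau}]) / (1 - \Abst\gamma)$.

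Finally I would assemble the bounds. Admissibility supplies an $\Abst{a}$ with $\tilde{h}_{\Abst{s}_p \Abst{s} \Abst{a}}(\Abst{s}') \ge h^o_{\Abst{s}}(\Abst{s}' \Given s)$ for every $\Abst{s}' \neq \Abst{s}$ and $\tilde{V}_{\Abst{s}_p \Abst{s} \Abst{a}} \ge V^o_{\Abst{s}}(s)$. Summing the transition inequalities over $\Abst{s}' \neq \Abst{s}$ and cancelling the common $(1 - \gamma)$ yields $\Expected[\Abst\gamma^{\Abst\tau}] \ge \Expected[\gamma^\tau]$, so that $h^o_{\Abst{s}}(\Abst{s} \Given s) = 1 - \Expected[\gamma^\tau] \ge 1 - \Expected[\Abst\gamma^{\Abst\tau}]$. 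The right-hand side is at least $1 - \Abst\gamma$ because $\Abst\tau \ge 1$ forces $\Expected[\Abst\gamma^{\Abst\tau}] \le \Abst\gamma$, and it is at least $(1 - \Abst\gamma)\, \tilde{V}_{\Abst{s}_p \Abst{s} \Abst{a}} \ge (1 - \Abst\gamma)\, V^o_{\Abst{s}}(s)$ by the value bound; taking the larger of the two gives $h^o_{\Abst{s}}(\Abst{s} \Given s) \ge (1 - \Abst\gamma) \max\{1, V^o_{\Abst{s}}(s)\}$. I expect the main difficulty to lie in the abstract-side bookkeeping of the previous paragraph: correctly identifying \eqref{eq:computed-option-occupancy}--\eqref{eq:computed-option-value} with the first-exit representation and keeping the two normalizations straight---the ground occupancy carries a factor $(1 - \gamma)$ while the abstract discounting produces $(1 - \Abst\gamma)$---so that a single summed constraint on the exit transitions suffices to control the staying term.
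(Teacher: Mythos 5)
Your proof is correct and follows essentially the same route as the paper's: you sum the admissibility constraints on the exit occupancies and convert that sum into the staying occupancy via the conservation identity $h^o_{\Abst{s}}(\Abst{s} \Given s) = 1 - \Expected[\gamma^\tau]$ (which is exactly the content of \cref{th:occupancy-at-exits}), and you handle the value part by bounding $\Abst{R} \le 1$ so that $\tilde{V}_{\Abst{s}_p \Abst{s} \Abst{a}} \le 1 + \Abst\gamma\, \Abst{T}_{\Abst{s}_p \Abst{s} \Abst{a}}$, just as the paper does. Your stopping-time bookkeeping is only a more probabilistic packaging of the paper's explicit geometric-series algebra; in particular your intermediate bound $h^o_{\Abst{s}}(\Abst{s} \Given s) \ge 1 - \Expected[\Abst\gamma^{\Abst\tau}]$ coincides with the paper's inequality \eqref{eq:admissible-implies-proof1}.
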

This statement is composed of two results,
$h^o_{\Abst{s}}(\Abst{s} \Given s) \ge 1-\Abst\gamma$,
which only constrains the occupancy,
and $V^o_{\Abst{s}} \le h^o_{\Abst{s}}(\Abst{s} \Given s) / (1-\Abst\gamma)$ which also involves value.
These inequalities encode the necessary conditions for reducing the effective horizon in the abstraction.
The first inequality says that $\Abst\gamma$ can only be low if the occupancy in every block is high.
In particular, if there exists an option~$o$ that leaves some $\Block{\Abst{s}}$ in one step,
then $h^o_{\Abst{s}}(\Abst{s} \Given s) = 1-\gamma$ and $\Abst\gamma = \gamma$ is the only feasible choice.
The second says that $\Abst\gamma$ can only be low if $V^o_{\Abst{s}}$ is also low with respect to $h^o_{\Abst{s}}$.
In particular, if $o$ collects in $\Block{\Abst{s}}$ a reward of 1 at each step,
then $V^o_{\Abst{s}} = h^o_{\Abst{s}}(\Abst{s} \Given s) / (1-\gamma)$ and $\Abst\gamma = \gamma$ is the only feasible choice.
This allows us to conclude that a time compression in the abstraction is possible only if:
\begin{enumerate*}[label=(\roman*)]
  \item the changes between blocks occur at some lower timescale;
  \item rewards are temporally sparse.
\end{enumerate*}
This confirms some common intuitions among the HRL literature, while it shows that sparse rewards are also important.
% NOTE: after acceptance
%In fact, if either of the conditions above are not verified, no time compression is possible.
%Importantly, $\Abst\gamma \coloneqq \gamma$ is always a feasible choice.

\paragraph{Learning realizations}
% NOTE: after acceptance
%Realizable abstractions have very desirable properties for HRL, including compositionality and near-optimality.
To conclude this section,
we now study how each realizing option can be learned from experience.
Although the constraints in \cref{def:realizability} could be direcly used,
here we propose a slight relaxation that is more suitable for online learning.
Specifically, instead of quantifying for each entry state $\Entries_{\Abst{s}_p \Abst{s}}$ of the block,
we consider some initial distribution $\nu \in \Distribution{\Entries_{\Abst{s}_p \Abst{s}}}$.
% NOTE: after accept
%This distribution allows us to marginalize with respect to initial states.
%and express realizability without mentioning individual ground states.
\begin{definition}
  Given an MDP~$\MDP$ and an abstraction~$\Tuple{\Abst\MDP, \Mapping}$,
  an abstract tuple $(\Abst{s}_p \Abst{s}, \Abst{a})$, with $\Abst{s}_p \neq \Abst{s}$,
  is $(\PRealizableR, \PRealizableT)$-\emph{realizable from} a distribution 
  $\nu \in \Distribution{\Entries_{\Abst{s}_p \Abst{s}}}$,
  if there exists a $\Mapping$-relative option~$o \in \Options_{\Abst{s}_p \Abst{s}}$, such that 
  $\tilde{h}_{\Abst{s}_p \Abst{s} \Abst{a}}(\Abst{s}') - h^{o}_{\nu}(\Abst{s}') \le \PRealizableT$
  and $(1 - \gamma) (\tilde{V}_{\Abst{s}_p \Abst{s} \Abst{a}} - V^o_{\nu}) \le \PRealizableR$,
  for all $\Abst{s}' \neq \Abst{s}$,
  where $h^{o}_{\nu}(\Abst{s}') \coloneqq \sum_{s} h^o_{\Abst{s}}(\Abst{s}' \Given s)\, \nu(s)$ and 
  $V^{o}_{\nu} \coloneqq \sum_{s} V^o_{\Abst{s}}(s)\, \nu(s)$.
  The option~$o$ is the realization of $(\Abst{s}_p \Abst{s}, \Abst{a})$ from~$\nu$.
  \label{def:realizability-from}
\end{definition}
Being a relaxed definition,
every realization is a realization from any distribution, but not vice versa.
However, given some policy of options~$\Options$,
if each $o \in \Options$ is a $(\PRealizableR, \PRealizableT)$-realization of a tuple $(\Abst{s}_p \Abst{s}, \Abst{a})$ from some~$\nu$, and $\nu$ is the entry distribution of $\Block{\Abst{s}}$ from~$\Block{\Abst{s}_p}$ given~$\Options$,
then, $\Abst{V}^{\Abst\Policy}_{\Abst{\StartDistrib}} - V^{\Options}_\StartDistrib$
still satisfy the bound in \cref{th:realizab-abstraction-value}.
The advantage is that, due to marginalization, the terms $h^o_\nu$ and $V^o_\nu$ are no longer dependent on ground states.
This means that realizability from distribution consists exactly of $\abs{\Abst\States}$ constraints,
of which $\abs{\Abst\States}-1$ come from the block occupancies and one from the value.

In this work, we identify two techniques for learning realizations: by solving Constrained MDPs, or by Linear Programming.
The Linear Programming formulation provides insteresting insights and is discussed in \cref{sec:app:lp}.
However, realizing with Constrained MDPs is the preferred approach, and it is the one discussed here.
Unlike standard RL, CMDPs allow the encoding of both soft and hard constraints.
This field has received attention because of its relevance for RL and the encoding of hard constraints in safety-critical systems.
By expressing the realizability problem as a CMDP, we do not restrict ourselves to a specific technique.
Rather, we could realize abstract actions with any online RL algorithm for CMDPs.
This is especially relevant since the ground MDP may be non-tabular.
% NOTE: after acc
%in general, thus preventing the application of the most standard techniques.
Fortunately, there are many general RL algorithms for CMDPs already available
\citep{achiam_2017_ConstrainedPolicyOptimization,zhang_2020_FirstOrderConstrained,ding_2020_NaturalPolicyGradient,ding_2022_ConvergenceSampleComplexity,ding_2023_LastiterateConvergentPolicy,wachi_2024_SurveyConstraintFormulations}.
%NOTE: after acc
%and more are expected to be developed in the near future due to the relevance of this framework.

Among all $\Abst{S}$ constraints of \cref{def:realizability-from},
we choose to represent the $\Abst{S}-1$ inequalities for the target occupancies as hard constraints
and the single inequality for the value as a soft constraint.
By assuming the realizability of each abstract tuple,
% NOTE: after acc
%Since we will assume that each $(\Abst{s}_p \Abst{s}, \Abst{a})$ is indeed $(\PRealizableR, \PRealizableT)$-realizable,
the option $o^* \in \Options_{\Abst{s}_p \Abst{s}}$, obtained as the maximization of the soft objective~$V^o_\nu$,
will satisfy all the $\Abst{S}$ original constraints.
To express the hard constraints, we observe that
$h^{o}_{\nu}(\Abst{s}') = \sum_{s, s' \in \States} d^o_{\Abst{s}}(s' \Given s)\, \Indicator(s' \in \Block{\Abst{s}'})\, \nu(s) = (1-\gamma)\, V^o_{\nu,\Abst{s}'}$,
% NOTE: after accept make dismplay math
%\begin{align}\label{eq:realize-cmdp-reduction-constraints}
%   h^{o}_{\nu}(\Abst{s}') =
%   \sum_{s \in \States} h^o_{\Abst{s}}(\Abst{s}' \Given s)\, \nu(s) =
%   \sum_{s, s' \in \States} d^o_{\Abst{s}}(s' \Given s)\, \Indicator(s' \in \Block{\Abst{s}'})\, \nu(s) =
%   (1-\gamma)\, V^o_{\nu,\Abst{s}'}
%\end{align}
where $V^o_{\nu,\Abst{s}'}$ is the value function of~$o$ in the MDP
$\Tuple{\States_{\Abst{s}}, \Actions, T_{\Abst{s}}, R'_{\Abst{s}'}, \gamma }$, with
$R'_{\Abst{s}'}(s, a) \coloneqq \Indicator(s \in \Block{\Abst{s}'})$.
The only difference from this MDP and the block MDP $\MDP_{\Abst{s}}$ is that a reward of 1 is placed in~$\Block{\Abst{s}'}$, while every other internal reward is~0.
This means that we can reformulate the problem of realizing any tuple $(\Abst{s}_p \Abst{s}, \Abst{a})$ in~$\MDP$ as:
\begin{equation}\label{eq:realize-cmdp-reduction}
  \argmax_{\Policy \in \Policies} V^\Policy_{\nu} \qquad s.t. \quad V^\Policy_{\nu,\Abst{s}'} \ge \frac{\tilde{h}_{\Abst{s}_p \Abst{s} \Abst{a}}(\Abst{s}') - \PRealizableT}{1 - \gamma} \quad \forall \Abst{s}' \neq \Abst{s}
\end{equation}
In other words, this is a CMDP with auxiliary reward functions~$R'_{\Abst{s}'}$ and associated lower limits $l_{\Abst{s}'} \coloneqq (\tilde{h}_{\Abst{s}_p \Abst{s} \Abst{a}}(\Abst{s}') - \PRealizableT) / (1 - \gamma)$.
Its solution can be seen as a $\Mapping$-relative option for~$\MDP$.
% NOTE after acc
%The output is a policy for the block MDP~$\MDP_{\Abst{s}}$, which can be equivalently seen as a $\Mapping$-relative option for the full MDP~$\MDP$.

\section{\Ouralg: a new HRL algorithm}
\label{sec:algorithm}

\begin{algorithm}[t]
  \caption{\Ouralg}
  \label{alg:main}
  \KwIns{\,MDP simulator $\MDP$, abstraction $\Tuple{\Abst\MDP, \Mapping}$, and parameters $\PRealizableR$, $\PRealizableT$, $\POptionSubopt$.}
  \BlankLine
  \ForEach{$(\Abst{s}_p \Abst{s}, \Abst{a})$}{
    $\Algorithm(\Abst{s}_1 \Abst{s}_2, \Abst{a}_1) \gets \FRealizer(\MDP, \Block{\Abst{s}}, \tilde{h}_{s_p s a}, \PRealizableT)$ \tcp*{Online RL algorithm for CMDPs}
    $\Realization(\Abst{s}_1 \Abst{s}_2, \Abst{a}_1) \gets \Const{null}$ \tcp*{policy of options}
  }
  $s_p \gets s\StartSym$;\, $s \gets \MDP.\FReset()$ \;
  \Repeat{
    $\Abst\Policy \gets \FValueIteration(\Abst\MDP, \frac{1}{1-\gamma} \log \frac{2}{(1-\gamma) \PAccuracy})$ \label{alg:main-vi}\;
    \Repeat{
      $\Abst{s}_p \Abst{s} \gets \Mapping(s_p)\, \Mapping(s)$ \label{alg:main-exploit-start}\;
      $\Abst{a} \gets \Abst\Policy(\Abst{s}_p \Abst{s})$\;
      \uIf{$\Realization(\Abst{s}_p \Abst{s}, \Abst{a})$ is not null}{
        $s_p s \gets \FRollout(\MDP, \Realization(\Abst{s}_p \Abst{s}, \Abst{a}))$ \label{alg:main-exploit} \tcp*{until $s \in \Exits_{\Abst{s}}$}
      }
      \Else{
        $s_p s \gets \Algorithm(\Abst{s}_p \Abst{s}, \Abst{a}).\FRollout()$ \label{alg:main-explore} \tcp*{until $s \in \Exits_{\Abst{s}}$}
        \If(\label{alg:line:main-new-option}){$\Algorithm(\Abst{s}_p \Abst{s}, \Abst{a}).\DEnough$}{
          $\Realization(\Abst{s}_p \Abst{s}, \Abst{a}), \Est{V} \gets \Algorithm(\Abst{s}_p \Abst{s}, \Abst{a}).\FGet()$ \label{alg:line:main-return-option}\;
					$\Est{V}_{\text{opt}} \gets \min \{ \Est{V} + \frac{\PRealizableR}{1 - \gamma} + \POptionSubopt,\, \frac{1}{1-\gamma} \}$\;
          \If(\label{alg:line:main-update-block}){$\tilde{V}_{\Abst{s}_p \Abst{s} \Abst{a}} > \Est{V}_{\text{opt}}$}{
            $\Abst\MDP \gets \FAbstractOneR(\Abst\MDP, (\Abst{s}_p \Abst{s}, \Abst{a}), \Est{V}_{\text{opt}})$\;
            $s_p s \gets \FRollout(\MDP)$ \tcp*{conclude episode}
            \Break\;
          }
        }
        $s_p s \gets \FRollout(\MDP)$ \tcp*{conclude episode}
      }
    }
  }
  %
  %\vspace{1ex} % NOTE: after accept
  \CustomFn{\FAbstractOneR{$\Abst\MDP$, $(\Abst{s}_p \Abst{s}, \Abst{a})$, ${V}$}}{
    \lForEach{$\Abst{s}_p' \notin \{\Abst{s}_p, \Abst{s}\}$}{
      $V^-_{\Abst{s}_p' \Abst{s} \Abst{a}} \gets \tilde{V}_{\Abst{s}_p' \Abst{s} \Abst{a}}$
    }
    $\Abst{R}(\Abst{s}_p \Abst{s}, \Abst{a}) \gets \max\{0, \Abst{R}(\Abst{s}_p \Abst{s}, \Abst{a}) + {V} - \tilde{V}_{\Abst{s}_p \Abst{s} \Abst{a}} \}$\;
    \lIf{$\Abst{R}(\Abst{s}_p \Abst{s}, \Abst{a}) = 0$}{
      $\Abst{R}(\Abst{s} \Abst{s}, \Abst{a}) \gets {V} \left( \frac{\Abst\gamma \Abst{T}(\Abst{s} \Given \Abst{s}_p \Abst{s}, \Abst{a})}{1 - \Abst\gamma \Abst{T}(\Abst{s} \Given \Abst{s} \Abst{s}, \Abst{a})} \right)^{-1}$ \label{alg:line:abstractoner-after-ss}
    }
    \lForEach{$\Abst{s}_p' \notin \{\Abst{s}_p, \Abst{s}\}$}{
      $\Abst{R}(\Abst{s}'_p \Abst{s}, \Abst{a}) \gets \min\{1, \Abst{R}(\Abst{s}'_p \Abst{s}, \Abst{a}) + V^-_{\Abst{s}_p' \Abst{s} \Abst{a}} - \tilde{V}_{\Abst{s}_p' \Abst{s} \Abst{a}} \}$ \label{alg:line:abstractoner-other-sp}
    }
    %\KwRet $\Abst\MDP$\;
  }
\end{algorithm}

Taking advantage of the properties of Realizable Abstractions, in this section, we develop a new sample efficient HRL algorithm called \emph{RARL} (Realizable Abstractions RL).
The algorithm learns a ground policy of options in a compositional way.
Moreover, in case the rewards of the abstraction given in input are strongly overestimated,
\Ouralg\ can correct them and update the abstraction accordingly.
The complete procedure is shown in \cref{alg:main}.
In input, we assume that some abstraction $\Tuple{\Abst\MDP, \Mapping}$ is given but it is m,
the algorithm only accesses the ground MDP~$\MDP$ through online simulations.
The appropriate values for the other input parameters will be described later in \cref{as:good-initial-abstraction,as:good-realizer}.
Initially, for each abstract tuple, \Ouralg\ instantiates one individual online RL algorithm for CMDPs. 
% NOTE: after acc
%These will be used to optimize the problem in Eq.~\eqref{eq:realize-cmdp-reduction} with online samples.
The dictionary~$O$, which contains all the realizing options, is initially empty.
At convergence,
after all relevant tuples have been realized,
the algorithm repeatedly executes the lines \ref{alg:main-exploit-start}--\ref{alg:main-exploit}.
In this exploitation phase,
the abstract policy is responsible for selecting the option to execute.
During the exploration phase, instead, the algorithm reaches some block for which no option is already known.
In this case, the online CMDP solver has full control over the samples collected in block~$\Block{s}$ (line~\ref{alg:main-explore}).
When the CMDP solver finds a near-optimal option for the block (\cref{alg:line:main-new-option}), as in \cref{as:good-realizer},
the option is returned, along with its associated value (\cref{alg:line:main-return-option}).
These are the estimated $\Policy$ and $V^\Policy_\nu$ of Eq.~\eqref{eq:realize-cmdp-reduction}.
Importantly, each tuple is realized at most once.
Then, whenever the block value of the realization is below some threshold (line~\ref{alg:line:main-update-block}),
the algorithm calls \FAbstractOneR, which updates the rewards of $\Abst\MDP$ to correct for this mismatch.
In this case, the break statement triggers a new re-planning in $\Abst\MDP$ with Value Iteration,
which is executed for the number of iterations specified in input.
%When exploring, since realizing options are not known, little guarantees can be made.
%Thus, we discard all the following samples for that episode.

\paragraph{Sample complexity}
In this conclusive section, we provide formal guarantees on the sampe efficiency of \Ouralg.
% NOTE: after acceptance
%We first define how we measure sample complexity.
%Let $\Policy_t$ be the policy that some algorithm~$\Algorithm$ executes in episode~$t$.
%For some $\PAccuracy > 0$, the sample complexity of $\Algorithm$ is defined as the total number of episodes~$t$ in which $\Policy_t$ is not $\PAccuracy$-optimal, meaning
%$N_\PAccuracy \coloneqq \abs{\{t \in \Naturals_+: V^* - V^{\Policy_t} > \PAccuracy\}}$.
%This is analogous to the sample complexity of exploration of \citet{kakade_2003_SampleComplexityReinforcement}, expressed over episodes.
%
%An RL algorithm $\Algorithm$ is PAC if,
%for any unknown MDP~$\MDP$ and positive parameters $\PAccuracy$ and~$\PConfidence$,
%with probability exceeding $1-\PConfidence$,
%the sample complexity~$N_\PAccuracy$ of~$\Algorithm$ is less than some polynomial in the relevant quantities.
%%
%The notion just presented captures the sample complexity for the outer RL algorithm,
%which continues running indefinitely in the MDP.
%On the other hand, we define sample complexity for CMDPs as the number of samples required to find near-optimal policies.
%
Since the algorithm is modular and depends on the specific CMDP algorithm adopted,
we first characterize PAC online algorithms for CMDPs.
An RL algorithm $\Algorithm$ is PAC-Safe if, for any unknown CMDP~$\MDP$
and positive parameters $\PMaxViolation, \POptionSubopt$ and $\PConfidence$, whenever $\Feasible\Policies$ is not empty,
with probability exceeding $1 - \PConfidence$,
$\Algorithm$ returns some $\POptionSubopt$-optimal and $\PMaxViolation$-feasible policy in~$\Feasible[,\PMaxViolation]\Policies$.
Moreover, the number of episodes collected from~$\MDP$
must be less than some polynomial in the relevant quantities.

\begin{assumption}[PAC-Safe algorithms]\label{as:good-realizer}
  \FRealizer is a PAC-Safe online RL algorithm with parameters $\POptionSubopt, \PMaxViolation$ and confidence ${1 - \PConfidence/(2\Abst{S}^2 \Abst{A})}$,
  where $\POptionSubopt$ is the input of \Ouralg.
\end{assumption}
%Before stating the main result, we list all assumptions.
%The first one simply characterizes the CMDP solvers, which should possess formal guarantees in order to obtain guarantees for the global algorithm.

The second assumption ensures that the input abstraction is admissible,
and the transition function $\Abst{T}$ is $\PRealizableT$-realizable.
The same is not assumed for rewards, which can be severely overestimated by~$\Abst\MDP$.
\begin{assumption}[Optimism of $\Abst\MDP$]\label{as:good-initial-abstraction}
  Let $\Tuple{\Abst\MDP, \Mapping}$ and $\PRealizableT, \PRealizableR$ be the inputs of \Ouralg.
  We assume that $\Tuple{\Abst\MDP, \Mapping}$ is admissible and that there exists some admissible
	$(\PRealizableR, \PRealizableT)$-realizable abstraction $\Tuple{\Abst{\MDP}^*, \Mapping}$,
	in which $\Abst{\MDP}^*$ only differs from $\Abst\MDP$ by its reward function
	and $\tilde{V}_{\Abst{s}_p \Abst{s} \Abst{a}} \ge \tilde{V}^*_{\Abst{s}_p \Abst{s} \Abst{a}}$ for all $\Abst{s}_p, \Abst{s}, \Abst{a}$.
\end{assumption}
% TODO: say that M^* is unknown!
The main intuition that we use to prove the sample complexity is that,
although sampling occurs in $\MDP$, all decisions of \Ouralg\ take place at the level of blocks and high-level states.
This allows us to use $\Abst\MDP$ as a proxy to refer to the returns that are possible in~$\MDP$.
Moreover, thanks to the admissibility ensured by \cref{as:good-initial-abstraction},
we can show that \Ouralg\ is optimistic in the face of uncertainty,
% NOTE: after accept
%similarly to other PAC algorithms, such as \textsc{R-Max} \citep{brafman_2003_RMaxGeneralPolynomial},
because the overestimated rewards of $\Abst\MDP$ play the role of exploration bonuses for tuples that have not yet been realized.
%
%Then, we can define the set of ``known'' abstract tuples as those for which the algorithm has found a realizing option:
%$\Known_t \coloneqq \{(\Abst{s}_p \Abst{s}, \Abst{a}) \Given O_t(\Abst{s}_p \Abst{s}, \Abst{a}) \text{ is not null}\}$.
%
% NOTE: after accept
%To discuss the third and last assumption, we observe that the algorithm does not try to realize consecutive blocks in the same episode.
%In other words, the control is passed to some \FRealizer in line~\ref{alg:main-explore}
%only if an option for block $\Block{\Abst{s}_p}$ has been found already.
%This property ensures that the entry distribution associated with each $\Entries_{\Abst{s}_p \Abst{s}}$ remains mostly fixed.
%Specifically, if the algorithm reaches \cref{alg:main-explore} in episode~$t$,
%then, the entry distribution equals $\nu_{t,\Abst{s}_p \Abst{s}} \coloneqq \Pr(s \Given s_p \in \Block{\Abst{s}_p}, s \in \Block{\Abst{s}}, O_t)$.
%Unfortunately, this distribution is not completely fixed over time,
%and, at some $t' > t$, $\nu_{t',\Abst{s}_p \Abst{s}}$ might be different from $\nu_{t,\Abst{s}_p \Abst{s}}$,
%because, in addition to the options that are present in $O_t$, new options for $(\Abst{s}_p \Abst{s}, \Abst{a})$ might be available in $O_{t'}$ for $\Block{\Abst{s}_p}$ or the preceding blocks.
%This is a quite nuanced dependency, and it only arises when learning realizations from specific entry distributions, instead of all entry states.
%We omit the treatment of this marginal issue here.
Finally, to discuss the third and last assumption,
we consider each $\nu_{t,\Abst{s}_p \Abst{s}}$, that represents the entry distribution for block~$\Block{\Abst{s}}$ from~$\Block{\Abst{s}_p}$ at episode~$t$.
Due to the way the algorithm is constructed,
these distributions can remain mostly fixed, and they only depend on the available options in~$O$ at the beginning of episode~$t$ (let $O_t$ represent this set).
Still, since the addition of new options might change such distributions, we assume that the old realizations remain valid in the future, as follows.
\begin{assumption}\label{as:new-options-dont-hurt-initiation}
  During any execution of $\Ouralg$,
  if $O_t(\Abst{s}_p \Abst{s}, \Abst{a})$ is an $(\PRealizableR, \PRealizableT)$-realization of
  $(\Abst{s}_p \Abst{s}, \Abst{a})$ in $\Abst{\MDP}^*$ from $\nu_{t,\Abst{s}_p \Abst{s}}$,
  then the same is true from $\nu_{t',\Abst{s}_p \Abst{s}}$, for any $t' > t$.
\end{assumption}
This is a quite nuanced dependency, and it only arises when learning realizations from specific entry distributions, instead of all entry states.
We omit the treatment of this marginal issue here.
We can finally state our bound, which limits the sample complexity of exploration \citep{kakade_2003_SampleComplexityReinforcement} of \Ouralg.
\begin{restatable}{theorem}{samplecomplexity}\label{th:algorithm-sample-complexity}
  Under \cref{as:good-initial-abstraction,as:good-realizer,as:new-options-dont-hurt-initiation},
  and any positive inputs $\PAccuracy, \PConfidence$,
  with probability exceeding $1 - \PConfidence$,
  \Ouralg\ is $\PAccuracy'$-optimal with
  % NOTE: after acc make display
  $\PAccuracy' = \frac{\PRealizableR (1-\Abst\gamma) + \PRealizableT \Abst{S}}{(1 - \gamma)^2 (1 - \Abst\gamma)} + \frac{3 \PAccuracy}{1-\gamma}$
  on all but the following number of episodes
	$\frac{2 \Abst{S}^2 \Abst{A}}{\PAccuracy} \left( \RealizerComplexity(\POptionSubopt, \PMaxViolation)\,\Abst{S}^2 + \log \frac{2S^2A}{\PConfidence} \right)$,
  where $\RealizerComplexity(\POptionSubopt, \PMaxViolation)$ is the sample complexity of the realization algorithm.
\end{restatable}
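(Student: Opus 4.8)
The plan is to treat the finite abstract 2-MDP $\Abst\MDP$ as an optimistic proxy for the ground MDP and run a standard optimism-based sample-complexity-of-exploration argument with two moving parts: (i) at every episode the abstract optimal value upper-bounds the ground optimal value $V^*_\StartDistrib$, and (ii) whenever the current greedy abstract policy is executed through already-realized options, the induced ground value stays close to its abstract value. The target accuracy then decomposes as in \cref{th:realizab-optimal-value}: the first term $\frac{\PRealizableR(1-\Abst\gamma)+\PRealizableT\Abst{S}}{(1-\gamma)^2(1-\Abst\gamma)}$ is the intrinsic realizability gap, while the $\frac{3\PAccuracy}{1-\gamma}$ remainder collects the error of the truncated Value Iteration of line~\ref{alg:main-vi} together with the realizer's $\POptionSubopt$-suboptimality and $\PMaxViolation$-infeasibility.

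First I would establish and then maintain optimism. By \cref{as:good-initial-abstraction} the input $\Abst\MDP$ is admissible and its block values dominate those of the admissible, $(\PRealizableR,\PRealizableT)$-realizable reference $\Abst\MDP^*$; \cref{th:admissible-values} then gives that the abstract optimal value of $\Abst\MDP$ is at least that of $\Abst\MDP^*$, which in turn upper-bounds $V^*_\StartDistrib$. The delicate point is to preserve this domination through every reward correction: I would show that the target $\Est{V}_{\text{opt}} = \Est{V} + \frac{\PRealizableR}{1-\gamma} + \POptionSubopt$ used in line~\ref{alg:line:main-update-block}, together with the PAC-Safe guarantee of \cref{as:good-realizer} controlling the estimate $\Est{V}$ and the realizability of $\Abst\MDP^*$, certifies that the corrected $\tilde{V}_{\Abst{s}_p \Abst{s} \Abst{a}}$ never drops below $\tilde{V}^*_{\Abst{s}_p \Abst{s} \Abst{a}}$, so optimism is an invariant of the run. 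Since each abstract tuple is realized, and hence corrected, at most once, the number of corrections is bounded by the number of tuples, $O(\Abst{S}^2\Abst{A})$.

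Next I would control exploitation. When an episode only rolls out realized options (line~\ref{alg:main-exploit}), \cref{th:realizab-abstraction-value}, applied through the relaxed \cref{def:realizability-from} and the entry-distribution stability of \cref{as:new-options-dont-hurt-initiation}, bounds $\Abst{V}^{\Abst\Policy}_{\Abst\StartDistrib} - V^{\Options}_\StartDistrib$ by the realizability gap, while the truncated Value Iteration and the per-option slacks $\POptionSubopt,\PMaxViolation$ supply the $\frac{3\PAccuracy}{1-\gamma}$ remainder. Combined with optimism, an episode can fail to be $\PAccuracy'$-optimal only if the discounted \emph{escape probability} $w_t$ — the chance that the trajectory under the current policy reaches a not-yet-realized tuple — exceeds a threshold of order $\PAccuracy/\Abst{S}^2$; below this threshold the missing-tuple contributions are negligible and the greedy policy is already near-optimal. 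To turn escape into a count I would bound the total number of visits to unrealized tuples: each visit either advances that tuple's CMDP realizer (at most $\RealizerComplexity(\POptionSubopt,\PMaxViolation)$ visits per tuple over $O(\Abst{S}^2\Abst{A})$ tuples) or triggers one of the $O(\Abst{S}^2\Abst{A})$ reward corrections. An Azuma/Hoeffding argument over episodes then converts the per-episode escape probability into an episode count, giving $\frac{2\Abst{S}^2\Abst{A}}{\PAccuracy}\bigl(\RealizerComplexity(\POptionSubopt,\PMaxViolation)\,\Abst{S}^2 + \log\frac{2S^2A}{\PConfidence}\bigr)$, where the logarithmic term is the concentration slack. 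A final union bound charges $\PConfidence/2$ to the $\Abst{S}^2\Abst{A}$ realizer failures (each run at confidence $1-\PConfidence/(2\Abst{S}^2\Abst{A})$ by \cref{as:good-realizer}) and $\PConfidence/2$ to the concentration failure, for total failure at most $\PConfidence$.

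The hard part will be the optimism invariant together with the escape-probability accounting. For the former, I must verify that the reward manipulations inside \textsc{AbstractOneR} — which rewrite $\Abst{R}(\Abst{s}_p \Abst{s}, \Abst{a})$, possibly $\Abst{R}(\Abst{s}\Abst{s}, \Abst{a})$ through the self-loop factor, and the remaining entries $\Abst{R}(\Abst{s}'_p \Abst{s}, \Abst{a})$ in line~\ref{alg:line:abstractoner-other-sp} — jointly keep all the second-order quantities $\tilde{V}$ and $\tilde{h}$ of \cref{eq:computed-option-occupancy,eq:computed-option-value} above their realizable targets, so optimism is never broken by a single-tuple update. For the latter, I must relate the abstract-level escape probability to the actual ground value loss through those same $(1-\gamma)$-discounted, self-loop-laden expressions, while accounting for the entry distributions $\nu_{t,\Abst{s}_p \Abst{s}}$ drifting as new options are added — precisely the difficulty that \cref{as:new-options-dont-hurt-initiation} is designed to neutralize.
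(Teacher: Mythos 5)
Your proposal follows essentially the same route as the paper's proof: maintain optimism of the abstract 2-MDP as an invariant across the \textsc{AbstractOneR} reward corrections (the paper's lemmas on \textsc{AbstractOneR} and the induction in its realization lemma), bound the value of non-escape episodes by combining the (horizon-truncated) realizability theorem with admissibility, and convert per-episode escape probabilities into an episode count via a Bernoulli concentration inequality, with the union bound splitting $\PConfidence$ between realizer failures and concentration exactly as you describe. The only differences are bookkeeping: you threshold the escape probability at order $\PAccuracy/\Abst{S}^2$ with global visit counting where the paper thresholds at $\PAccuracy$ and uses a per-tuple pigeonhole (both give the stated bound up to constants), and the paper charges the realizer slacks $\POptionSubopt, \PMaxViolation$ to inflated realizability parameters $\PRealizableR', \PRealizableT'$ rather than to the $3\PAccuracy/(1-\gamma)$ term, which there collects the Value Iteration error, the horizon truncation error, and the escape-probability threshold.
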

% TODO: explain what this becomes for MDPs
% TODO: check why \Optionsubopt and max violation do not appear in \PAccuracy'
Thanks to the compositional property of Realizable Abstractions,
the sample complexity of each CMDP learner, which is $\RealizerComplexity(\POptionSubopt, \PMaxViolation)$, only contributes linearly to the bound, with the number of tuples to realize.
This number, which we bound with $\Abst{S}^2 \Abst{A}$, may be often much smaller because not all tuples are relevant for near-optimal behavior.
For example, \citet{wen_2020_EfficiencyHierarchicalReinforcement} shows that HRL has an advantage over \emph{flat} RL when the subMDPs can be grouped into $K$~equivalence classes
and $K \ll \Abst{S}$.
The same argument can be applied here.
When two block MDPs are equivalent, they can be regarded as one,
the collected samples can be shared, and the resulting options can be used in both blocks.
Therefore, the above bound can also be written with a multiplying factor of $\Abst{S} \Abst{A} K$ instead of $\Abst{S}^2 \Abst{A}$.

\section{Conclusion}
This work answers one important open question for HRL regarding how to relate the abstract actions with the ground options.
The answer is \emph{to relate the probability of abstract transitions with the probability of the temporally-extended transitions that can be obtained with options in the ground MDP}.
More specifically, this is given by the Realizable Abstractions, described in this paper.
This notion also implies suitable state abstractions that formally guarantee near-optimal and sample efficient solutions of the ground MDP.
In future work, the sample complexity of \cref{th:algorithm-sample-complexity} could be expressed as an instance-dependent bound.
This would highlight when HRL can be more efficient than standard RL, in presence of accurate abstractions, even without relying on equivalence classes.
%\Ouralg\ could be generalized for correcting for strongly inaccurate transitions.
% NOTE: after acc maybe say
%Compositionality is essential for lifelong learning}

% NOTE: After acceptance. acknowledgments before references in ack environment, only in camera ready

\small
\bibliographystyle{iclr25_sty/iclr2025_conference}
\bibliography{bib}
\normalsize

\clearpage
\appendix

\section*{Appendices}

\startcontents[appendix]
\printcontents[appendix]{l}{1}{}
\vspace{5ex}

\section{Properties of Realizable Abstractions}

\realizableabstvalue*
\begin{proof}
  To relate the value functions of different decision processes, we inductively define a sequence of functions $V_0, V_1, \dots $
  as $V_0(s_p s) \coloneqq 0$, and, if $k \in \Naturals_+$,
  \begin{equation}
    V_k(s_p s) \coloneqq \Expected \left[ g^o + \gamma^j\, V_{k-1}(s_{j-1} s_j) \Given s_p s, o \in \Options \cap \Options_{\Mapping(s_p) \Mapping(s)} \right]
  \end{equation}
  where $g^o$ is the cumulative discounted return of the option~$o$ and $j$~its random duration.
  In practice, $V_k$ is the value of executing $k$~consecutive options, then computing the value on the abstraction.
	Analogously, we also define $\Abst{V}_0, \Abst{V}_1, \dots$ for the abstraction as $\Abst{V}_0(\Abst{s}_p \Abst{s}) \coloneqq 0$ and
	\begin{equation}
		\Abst{V}_k(\Abst{s}_p \Abst{s}) \coloneqq \Expected \left[ \Abst{R}(\Abst{s}_p \Abst{s}, \Abst{a}) + \Abst\gamma\, \Abst{V}_{k-1}(\Abst{s} \Abst{s}') \Given \Abst{s}_p \Abst{s}, \Abst{a} \sim \Abst\Policy) \right]
	\end{equation}
	We observe that $\Abst{V}^{\Abst\Policy}(\Abst{s}_p \Abst{s}) = \lim_{k \to \infty} \Abst{V}_k(\Abst{s}_p \Abst{s})$ and
	$V^\Policy(s) = V^{\Options}(s) = \lim_{k \to \infty} V_k(s_p s)$.
  Now, with an inductive proof, we show that,
	for every $k \in \Naturals$, $\Abst{s}_p \in \Abst{\States}$, $s_p \in \Block{\Abst{s}_p}$,  $s \in \Exits_{\Abst{s}_p}$,
  \begin{equation}
    \Abst{V}_k(\Abst{s}_p \Abst{s}) - V_k(s_p s) \le \sum_{i = 0}^k \gamma^i\, \frac{\PRealizableR\, (1 - \Abst\gamma) + \PRealizableT\, \Abst{S}}{(1-\gamma)(1 - \Abst\gamma)}
  \end{equation}
  where, for this derivation, we are using the syntactic abbreviation $\Abst{s} \coloneqq \Mapping(s)$ and $\Abst{S} \coloneqq \abs{\Abst{\States}}$.
	For the base case, $k=0$ and $V_0(s_p s) = \Abst{V}_k(\Abst{s}_p \Abst{s})$.
  Now, for the inductive step, we apply \cref{th:next-occupancy-2mdp} and \cref{th:multistep-value} to the two value functions, respectively.
  We also use $\Abst{T}_{\Abst{s}_3 | \Abst{s}_1 \Abst{s}_2}$ and $\Abst{R}_{\Abst{s}_1 \Abst{s}_2}$, the same abbreviations of \cref{th:next-occupancy-2mdp}. Then,
  \begin{align}
		& \Abst{V}_k(\Abst{s}_p \Abst{s}) - V_k(s_p s) =\\
      &= \Abst{R}_{\Abst{s}_p \Abst{s}} + \frac{\Abst{\gamma}\, \Abst{T}_{\Abst{s} \Given \Abst{s}_p \Abst{s}}}{1 - \Abst{\gamma}\, \Abst{T}_{\Abst{s} \Given \Abst{s} \Abst{s}}}\, \Abst{R}_{\Abst{s}\Abst{s}}
          + \sum_{\Abst{s}' \in \Abst{\States} \setminus \{\Abst{s}\}} \left( \Abst{\gamma}\, \Abst{T}_{\Abst{s}' \Given \Abst{s}_p \Abst{s}} +
					\frac{\Abst{\gamma}^2\, \Abst{T}_{\Abst{s} \Given \Abst{s}_p \Abst{s}}\, \Abst{T}_{\Abst{s}' \Given \Abst{s} \Abst{s}}}{1 - \Abst{\gamma}\, \Abst{T}_{\Abst{s} \Given \Abst{s} \Abst{s}}} \right)\, \Abst{V}_{k-1}(\Abst{s} \Abst{s}') \notag\\
        &\qquad - \sum_{s' \in \States_{\Abst{s}}} \frac{d^{o}_{\Abst{s}}(s' \Given s)}{1-\gamma}\, (\Indicator(s' \in \Block{\Abst{s}})\, R(s', o(s')) + \Indicator(s' \in \Exits_{\Abst{s}})\, V_{k-1}(s')) \\
      &= \Abst{R}_{\Abst{s}_p \Abst{s}} + \frac{\Abst{\gamma}\, \Abst{T}_{\Abst{s} \Given \Abst{s}_p \Abst{s}}}{1 - \Abst{\gamma}\, \Abst{T}_{\Abst{s} \Given \Abst{s} \Abst{s}}}\, \Abst{R}_{\Abst{s}\Abst{s}}
          - \sum_{s' \in \Block{\Abst{s}}} \frac{d^{o}_{\Abst{s}}(s' \Given s)}{1-\gamma}\, R(s', o(s')) \notag\\
        &\qquad + \hspace{-2mm}\sum_{\Abst{s}' \in \Abst{\States} \setminus \{\Abst{s}\}} \left( \Abst{\gamma}\, \Abst{T}_{\Abst{s}' \Given \Abst{s}_p \Abst{s}} +
						\frac{\Abst{\gamma}^2\, \Abst{T}_{\Abst{s} \Given \Abst{s}_p \Abst{s}}\, \Abst{T}_{\Abst{s}' \Given \Abst{s} \Abst{s}}}{1 - \Abst{\gamma}\, \Abst{T}_{\Abst{s} \Given \Abst{s} \Abst{s}}} \right)\, \Abst{V}_{k-1}(\Abst{s} \Abst{s}')
          - \sum_{s' \in \Exits_{\Abst{s}}} \frac{d^{o}_{\Abst{s}}(s' \Given s)}{1-\gamma}\, V_{k-1}(s')
      \intertext{If $V^o_{\Abst{s}}$ is the value function of~$o$ in the block-restricted MDP~$\MDP_{\Abst{s}}$,}
      &= \Abst{R}_{\Abst{s}_p \Abst{s}} + \frac{\Abst{\gamma}\, \Abst{T}_{\Abst{s} \Given \Abst{s}_p \Abst{s}}}{1 - \Abst{\gamma}\, \Abst{T}_{\Abst{s} \Given \Abst{s} \Abst{s}}}\, \Abst{R}_{\Abst{s}\Abst{s}}
          - V^o_{\Abst{s}}(s) \notag\\
        &\hspace{-1ex} + \sum_{\Abst{s}' \in \Abst{\States} \setminus \{\Abst{s}\}} \Biggl(
					\left( \Abst{\gamma}\, \Abst{T}_{\Abst{s}' \Given \Abst{s}_p \Abst{s}} + \frac{\Abst{\gamma}^2\, \Abst{T}_{\Abst{s} \Given \Abst{s}_p \Abst{s}}\, \Abst{T}_{\Abst{s}' \Given \Abst{s} \Abst{s}}}{1 - \Abst{\gamma}\, \Abst{T}_{\Abst{s} \Given \Abst{s} \Abst{s}}} \right)\, \Abst{V}_{k-1}(\Abst{s} \Abst{s}')
          - \sum_{s' \in \Entries_{\Abst{s} \Abst{s}'}} \frac{d^{o}_{\Abst{s}}(s' \Given s)}{1-\gamma}\, V_{k-1}(s') \Biggr) \\
      \intertext{using the fact that $s' \in \Entries_{\Abst{s} \Abst{s}'}$, and $\Tuple{\Abst\MDP, \Mapping}$ is an $(\PRealizableR, \PRealizableT)$-realizable abstraction,}
      &\le \frac{\PRealizableR}{1-\gamma} + \sum_{\Abst{s}' \in \Abst{\States} \setminus \{\Abst{s}\}} \Biggl(
					\left( \Abst{\gamma}\, \Abst{T}_{\Abst{s}' \Given \Abst{s}_p \Abst{s}} + \frac{\Abst{\gamma}^2\, \Abst{T}_{\Abst{s} \Given \Abst{s}_p \Abst{s}}\, \Abst{T}_{\Abst{s}' \Given \Abst{s} \Abst{s}}}{1 - \Abst{\gamma}\, \Abst{T}_{\Abst{s} \Given \Abst{s} \Abst{s}}} \right)\, \Abst{V}_{k-1}(\Abst{s} \Abst{s}')
          - \sum_{s' \in \Entries_{\Abst{s} \Abst{s}'}} \frac{d^{o}_{\Abst{s}}(s' \Given s)}{1-\gamma}\, V_{k-1}(s') \Biggr) \\
      \intertext{Now, we add and subtract
					$\sum_{\Abst{s}' \in \Abst{\States} \setminus \{\Abst{s}\}} \sum_{s' \in \Entries_{\Abst{s} \Abst{s}'}} \frac{d^{o}_{\Abst{s}}(s' \Given s)}{1-\gamma}\, \Abst{V}_{k-1}(\Abst{s} \Abst{s}')$,}
      &= \frac{\PRealizableR}{1-\gamma} + \sum_{\Abst{s}' \in \Abst{\States} \setminus \{\Abst{s}\}} \Biggl(
					\sum_{s' \in \Entries_{\Abst{s} \Abst{s}'}} \frac{d^{o}_{\Abst{s}}(s' \Given s)}{1-\gamma}\, \Abst{V}_{k-1}(\Abst{s} \Abst{s}')
          - \sum_{s' \in \Entries_{\Abst{s} \Abst{s}'}} \frac{d^{o}_{\Abst{s}}(s' \Given s)}{1-\gamma}\, V_{k-1}(s') \Biggr) \notag\\
        &\qquad + \sum_{\Abst{s}' \in \Abst{\States} \setminus \{\Abst{s}\}} \Biggl(
					\left( \Abst{\gamma}\, \Abst{T}_{\Abst{s}' \Given \Abst{s}_p \Abst{s}} + \frac{\Abst{\gamma}^2\, \Abst{T}_{\Abst{s} \Given \Abst{s}_p \Abst{s}}\, \Abst{T}_{\Abst{s}' \Given \Abst{s} \Abst{s}}}{1 - \Abst{\gamma}\, \Abst{T}_{\Abst{s} \Given \Abst{s} \Abst{s}}} \right)\, \Abst{V}_{k-1}(\Abst{s} \Abst{s}')
					- \sum_{s' \in \Entries_{\Abst{s} \Abst{s}'}} \frac{d^{o}_{\Abst{s}}(s' \Given s)}{1-\gamma}\, \Abst{V}_{k-1}(\Abst{s} \Abst{s}') \Biggr) \\
      &= \frac{\PRealizableR}{1-\gamma} + \sum_{\Abst{s}' \in \Abst{\States} \setminus \{\Abst{s}\}} \sum_{s' \in \Entries_{\Abst{s} \Abst{s}'}}
					\frac{d^{o}_{\Abst{s}}(s' \Given s)}{1-\gamma}\, \bigl(\Abst{V}_{k-1}(\Abst{s} \Abst{s}') - V_{k-1}(s') \bigr)\notag\\
				&\qquad + \sum_{\Abst{s}' \in \Abst{\States} \setminus \{\Abst{s}\}} \Abst{V}_{k-1}(\Abst{s} \Abst{s}') \Biggl(
          \left( \Abst{\gamma}\, \Abst{T}_{\Abst{s}' \Given \Abst{s}_p \Abst{s}} + \frac{\Abst{\gamma}^2\, \Abst{T}_{\Abst{s} \Given \Abst{s}_p \Abst{s}}\, \Abst{T}_{\Abst{s}' \Given \Abst{s} \Abst{s}}}{1 - \Abst{\gamma}\, \Abst{T}_{\Abst{s} \Given \Abst{s} \Abst{s}}} \right)
          - \frac{h^{o}_{\Abst{s}}(\Abst{s}' \Given s)}{1-\gamma} \Biggr)\\
      \intertext{applying the inductive hypothesis to the first line and the definition of an $(\PRealizableR, \PRealizableT)$-realizable abstraction to the second line,}
      &\le \frac{\PRealizableR}{1-\gamma}
          + \sum_{s' \in \Exits_{\Abst{s}}} \frac{d^{o}_{\Abst{s}}(s' \Given s)}{1-\gamma}\, \sum_{i = 0}^{k-1} \gamma^i\, \frac{\PRealizableR\, (1 - \Abst\gamma) + \PRealizableT\, \Abst{S}}{(1-\gamma)(1 - \Abst\gamma)}
          + \frac{\PRealizableT\, \Abst{S}}{(1-\gamma)(1 - \Abst\gamma)}
          \label{eq:realized-value-inductive-occupancy}
  \end{align}
  It only remains to quantify $\sum_{s' \in \Exits_{\Abst{s}}} d^{o}_{\Abst{s}}(s' \Given s)$.
  To do this, we apply \cref{th:occupancy-at-exits} which gives,
  \begin{equation}
    \sum_{s' \in \Exits_{\Abst{s}}} d^{o}_{\Abst{s}}(s' \Given s) = (1 - h^{o}_{\Abst{s}}(\Abst{s} \Given s))\, (1 - \gamma)
  \end{equation}
  However, since the option starts in $s \in \Block{\Abst{s}}$, the occupancy $h^{o}_{\Abst{s}}(\Abst{s} \Given s)$ cannot be less than $(1 - \gamma)$.
  This allows us to complete the inequality and obtain
  \begin{align}
    \Abst{V}_k(\Abst{s}_p \Abst{s}) - V_k(s_p s)
      &\le \frac{\PRealizableR\, (1 - \Abst\gamma) + \PRealizableT\, \Abst{S}}{(1-\gamma)(1 - \Abst\gamma)} + \gamma \sum_{i = 0}^{k-1} \gamma^i\, \frac{\PRealizableR\, (1 - \Abst\gamma) + \PRealizableT\, \Abst{S}}{(1-\gamma)(1 - \Abst\gamma)}\\
      &= \sum_{i = 0}^{k} \gamma^i\, \frac{\PRealizableR\, (1 - \Abst\gamma) + \PRealizableT\, \Abst{S}}{(1-\gamma)(1 - \Abst\gamma)}
  \end{align}
  This concludes the inductive step.
	To verify \cref{eq:realizab-abstraction-value},
	We observe that $\Abst{V}^{\Abst\Policy}(\Abst{s}_p \Abst{s}) = \lim_{k \to \infty} \Abst{V}_k(\Abst{s}_p \Abst{s})$ and
	$V^\Policy(s) = V^{\Options}(s) = \lim_{k \to \infty} V_k(s_p s)$.

  We conclude by verifying the statement for the values from the respective initial distributions.
  \begin{align}
    & \Abst{V}_{\Abst\StartDistrib}^{\Abst\Policy} - V_\StartDistrib^{\Options}
			= \sum_{\Abst{s} \in \Abst\States} \Abst\StartDistrib(\Abst{s})\, \Abst{V}^{\Abst\Policy}(\Abst{s}\StartSym \Abst{s}) - \sum_{s \in \States} \StartDistrib(s)\, V^{\Options}(s) \label{eq:admissible-values-init1}\\
      &= \sum_{\Abst{s} \in \Abst\States} \Abst\StartDistrib(\Abst{s})\, \Abst{V}^{\Abst\Policy}(\Abst{s}\StartSym \Abst{s})
          - \sum_{\Abst{s} \in \Abst\States} \sum_{s \in \Block{\Abst{s}}} \StartDistrib(s)\, \Abst{V}^{\Abst\Policy}(\Abst{s}\StartSym \Abst{s})\notag\\*
        &\qquad + \sum_{\Abst{s} \in \Abst\States} \sum_{s \in \Block{\Abst{s}}} \StartDistrib(s)\, \Abst{V}^{\Abst\Policy}(\Abst{s}\StartSym \Abst{s})
          - \sum_{s \in \States} \StartDistrib(s)\, V^{\Options}(s)\\
      &= \sum_{\Abst{s} \in \Abst\States} \Abst{V}^{\Abst\Policy}(\Abst{s}\StartSym \Abst{s})\, (\Abst\StartDistrib(\Abst{s})\,  - \sum_{s \in \Block{\Abst{s}}} \StartDistrib(s) )
          + \sum_{s \in \States} \StartDistrib(s)\, (\Abst{V}^{\Abst\Policy}(\Abst{s}\StartSym \Mapping(s)) - V^{\Options}(s)) \label{eq:admissible-values-init2}
  \end{align}
  Using the assumption on initial distributions and the derivation above we obtain the second result.
\end{proof}

\admissiblevalues*
\begin{proof}
  To start, we observe that any ground policy~$\Policy$ can be equivalently represented as a unique policy of options~$\Options$.
  Therefore, to relate the values in the two decision processes, we inductively define a sequence of functions $V_0, V_1, \dots $
	as $V_0(s_p s) \coloneqq 0$, and, if $k \in \Naturals_+$,
  \begin{equation}
    V_k(s_p s) \coloneqq \Expected \left[ g^o + \gamma^j\, V_{k-1}(s_{j-1} s_j) \Given s_p s, o \in \Options \cap \Options_{\Mapping(s_p) \Mapping(s)} \right]
  \end{equation}
  where $g^o$ is the cumulative discounted return of the option~$o$ and $j$~its random duration.
  In practice, $V_k$ is the value of executing $k$~consecutive options, then collecting null rewards.
	Analogously, for some $\Abst\Policy$, we also define $\Abst{V}_0, \Abst{V}_1, \dots$ for the abstraction as $\Abst{V}_0(\Abst{s}_p \Abst{s}) \coloneqq 0$ and
	\begin{equation}
		\Abst{V}_k(\Abst{s}_p \Abst{s}) \coloneqq \Expected \left[ \Abst{R}(\Abst{s}_p \Abst{s}, \Abst{a}) + \Abst\gamma\, \Abst{V}_{k-1}(\Abst{s} \Abst{s}') \Given \Abst{s}_p \Abst{s}, \Abst{a} \sim \Abst\Policy) \right]
	\end{equation}
	We observe that $\Abst{V}^{\Abst\Policy}(\Abst{s}_p \Abst{s}) = \lim_{k \to \infty} \Abst{V}_k(\Abst{s}_p \Abst{s})$ and
	$V^\Policy(s) = V^{\Options}(s) = \lim_{k \to \infty} V_k(s_p s)$.
	Thus, to prove the result, it suffices to show that, for some $\Abst\Policy$, $\Abst{V}_k(\Abst{s}_p \Abst{s}) \ge V_k(s_p s)$, for all $k \in \Naturals$
  and every $\Abst{s}_p \in \Abst\States$, $s_p \in \Block{\Abst{s}_p}$, $s \in \Exits_{\Abst{s}_p}$, $\Abst{s} = \Mapping(s)$.
	We first define $\Abst\Policy$, then prove the result by induction.

	For any $\Abst{s}_p \Abst{s}$, we define the abstract policy $\Abst\Policy(\Abst{s}_p \Abst{s}) \coloneqq \Abst{a}$,
	where $\Abst{a}$ is any action satisfying 
  $\tilde{h}_{\Abst{s}_p \Abst{s} \Abst{a}}(\Abst{s}') \ge h^{o}_{\Abst{s}}(\Abst{s}' \Given s)$ and
  $\tilde{V}_{\Abst{s}_p \Abst{s} \Abst{a}} \ge V^o_{\Abst{s}}(s)$,
  for all $\Abst{s}' \neq \Abst{s}$ and $s \in \Entries_{\Abst{s}_p \Abst{s}}$,
	with $o$ being the only option in~$\Options \cap \Options_{\Abst{s}_p \Abst{s}}$.
	These abstract actions exist because it is assumed that $\Tuple{\Abst\MDP, \Mapping}$ is admissible.

	The rest of the proof is inductive.
	For $k=0$, the base case trivially holds because $\Abst{V}_k \equiv V_k$.
  For $k>0$,
  we compute $\Abst{V}_k(\Abst{s}_p \Abst{s}) - V_k(s_p s)$ and expand it as in the proof of \cref{th:realizab-abstraction-value}.
  This can be done by respecting all equalities up until
  \begin{align}
     \begin{split}
       \MoveEqLeft \Abst{V}_k(\Abst{s}_p \Abst{s}) - V_k(s_p s) = \\
        &= \Abst{R}_{\Abst{s}_p \Abst{s}} + \frac{\Abst{\gamma}\, \Abst{T}_{\Abst{s} \Given \Abst{s}_p \Abst{s}}}{1 - \Abst{\gamma}\, \Abst{T}_{\Abst{s} \Given \Abst{s} \Abst{s}}}\, \Abst{R}_{\Abst{s}\Abst{s}} - V^o_{\Abst{s}}(s) \\
        &\qquad + \sum_{\Abst{s}' \in \Abst{\States} \setminus \{\Abst{s}\}} \sum_{s' \in \Entries_{\Abst{s} \Abst{s}'}}
          \frac{d^{o}_{\Abst{s}}(s' \Given s)}{1-\gamma}\, \bigl(\Abst{V}^{\Abst{\Policy}}(\Abst{s} \Abst{s}') - V_{k-1}(s') \bigr)\\
        &\qquad + \sum_{\Abst{s}' \in \Abst{\States} \setminus \{\Abst{s}\}} \Abst{V}^{\Abst{\Policy}}(\Abst{s} \Abst{s}') \Biggl(
          \left( \Abst{\gamma}\, \Abst{T}_{\Abst{s}' \Given \Abst{s}_p \Abst{s}} + \frac{\Abst{\gamma}^2\, \Abst{T}_{\Abst{s} \Given \Abst{s}_p \Abst{s}}\, \Abst{T}_{\Abst{s}' \Given \Abst{s} \Abst{s}}}{1 - \Abst{\gamma}\, \Abst{T}_{\Abst{s} \Given \Abst{s} \Abst{s}}} \right)
          - \frac{h^{o}_{\Abst{s}}(\Abst{s}' \Given s)}{1-\gamma} \Biggr)
     \end{split}\\
     \begin{split}
       &= \tilde{V}_{\Abst{s}_p \Abst{s} \Abst{a}} - V^o_{\Abst{s}}(s) \\
        &\qquad + \sum_{\Abst{s}' \in \Abst{\States} \setminus \{\Abst{s}\}} \sum_{s' \in \Entries_{\Abst{s} \Abst{s}'}}
          \frac{d^{o}_{\Abst{s}}(s' \Given s)}{1-\gamma}\, \bigl(\Abst{V}^{\Abst{\Policy}}(\Abst{s} \Abst{s}') - V_{k-1}(s') \bigr)\\
        &\qquad + \sum_{\Abst{s}' \in \Abst{\States} \setminus \{\Abst{s}\}} \Abst{V}^{\Abst{\Policy}}(\Abst{s} \Abst{s}')
          \left( \frac{\tilde{h}_{\Abst{s}_p' \Abst{s} \Abst{a}}(\Abst{s}')}{1-\gamma} - \frac{h^{o}_{\Abst{s}}(\Abst{s}' \Given s)}{1-\gamma} \right)
     \end{split}
  \end{align}
	Using both the definition of $\Abst{a}$, which is admissible for~$o$ and the inductive hypothesis,
  we can confirm that all three terms are positive.
	Finally, using a similar expansion as \cref{eq:admissible-values-init1,eq:admissible-values-init2},
	we can verify that the same is also true for the respective initial distributions.
\end{proof}

\realizaboptimalvalue*
\begin{proof}
	Let $\Options$ be an $(\PRealizableR, \PRealizableT)$-realization of the optimal abstract policy $\Abst{\Policy}^*$.
	Then, using \cref{th:admissible-values} for the first inequality and \cref{th:realizab-abstraction-value} for the last:
	\begin{equation}
		V^*_\StartDistrib \le \Abst{V}^{\Abst{\Policy}}_{\Abst\StartDistrib} \le \Abst{V}^{\Abst{\Policy}^*}_{\Abst\StartDistrib} \le V^\Options_\StartDistrib + \PAccuracy
	\end{equation}
\end{proof}

\begin{lemma}\label{th:next-occupancy-2mdp}
  In any 2-MDP~$\MDP$ and deterministic policy~$\Policy$, for any two states $s_p, s \in \States$,
  \begin{equation}
    V^\Policy(s_p s) =
      R_{s_p s} + \frac{\gamma\, T_{s|s_p s}}{1 - \gamma\, T_{s|ss}}\, R_{ss}
          + \sum_{s' \in \States \setminus \{s\}} \left( \gamma\, T_{s'|s_p s} +
          \frac{\gamma^2\, T_{s|s_p s}\, T_{s'|ss}}{1 - \gamma\, T_{s|ss}} \right)\, V^\Policy(s s')
  \end{equation}
  where $T_{s_3|s_1 s_2} \coloneqq T(s_3 \Given s_1 s_2, \Policy(s_1 s_2))$ and $R_{s_1 s_2} \coloneqq R(s_1 s_2, \Policy(s_1 s_2))$.
\end{lemma}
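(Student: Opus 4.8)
The plan is to apply the second-order Bellman equation twice and to eliminate the recurrent self-transition at $s$ by solving the resulting geometric series in closed form. The starting point is the one-step recursion that the value function of any 2-MDP satisfies: since rewards and transitions depend only on the last two states, we have $V^\Policy(s_p s) = R_{s_p s} + \gamma \sum_{s' \in \States} T_{s'|s_p s}\, V^\Policy(s s')$, where I use the abbreviations introduced in the statement.

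First I would isolate the contribution of the self-transition to $s$ by splitting off the $s' = s$ term from the sum, writing $V^\Policy(s_p s) = R_{s_p s} + \gamma\, T_{s|s_p s}\, V^\Policy(s s) + \gamma \sum_{s' \neq s} T_{s'|s_p s}\, V^\Policy(s s')$. The only summand that still couples back to the ``diagonal'' value is $V^\Policy(s s)$, so the next step is to obtain a closed form for that single quantity, after which everything reduces to values $V^\Policy(s s')$ with $s' \neq s$.

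Second, I would apply the same recursion to $V^\Policy(s s)$ and again separate the $s' = s$ term, giving $V^\Policy(s s) = R_{ss} + \gamma\, T_{s|ss}\, V^\Policy(s s) + \gamma \sum_{s' \neq s} T_{s'|ss}\, V^\Policy(s s')$. Because $V^\Policy(s s)$ appears on both sides -- this is exactly the ``indefinite number of self-loops in $\Abst{s}$'' discussed around \cref{eq:computed-option-occupancy,eq:computed-option-value} -- I would collect it and solve, using that $1 - \gamma\, T_{s|ss} > 0$ (which holds since $0 < \gamma < 1$ and $T_{s|ss} \le 1$, hence $\gamma\, T_{s|ss} \le \gamma < 1$). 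This yields the closed form $V^\Policy(s s) = \bigl(R_{ss} + \gamma \sum_{s' \neq s} T_{s'|ss}\, V^\Policy(s s')\bigr) / (1 - \gamma\, T_{s|ss})$, which is precisely the summed geometric series of self-loops at $s$.

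Finally, substituting this expression for $V^\Policy(s s)$ back into the formula from the first step and grouping, for each $s' \neq s$, the two routes by which $V^\Policy(s s')$ is reached -- the direct weight $\gamma\, T_{s'|s_p s}$ and the weight $\gamma^2\, T_{s|s_p s}\, T_{s'|ss} / (1 - \gamma\, T_{s|ss})$ routed through the self-loop -- reproduces the claimed identity, with the reward term collapsing to $R_{s_p s} + \gamma\, T_{s|s_p s}\, R_{ss} / (1 - \gamma\, T_{s|ss})$. I do not expect a genuine obstacle: the lemma is an exact algebraic rearrangement of the Bellman equation, and the only point deserving an explicit word is the invertibility of $1 - \gamma\, T_{s|ss}$, i.e. convergence of the self-loop series, which follows immediately from $\gamma < 1$.
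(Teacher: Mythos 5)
Your proof is correct and follows essentially the same route as the paper: both start from the second-order Bellman equation, split off the self-transition at $s$, and resolve it to produce the factor $1/(1-\gamma\, T_{s|ss})$. The only (immaterial) difference is that you solve the linear equation for $V^\Policy(ss)$ in closed form, whereas the paper unrolls the self-loop recursion and sums the geometric series $\sum_{t}\gamma^t\, T_{s|ss}^t$ explicitly — two presentations of the identical step.
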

\begin{proof}
  We use the abbreviations $T_{s_1|s_1 s_2}$ and $R_{s_1 s_2}$ to avoid excessive verbosity. Then,
  \begin{align}
    & V^\Policy(s_p s) = \sum_{s' \in \States} T_{s'|s_p s}\, (R_{s_p s} + \gamma\, V^\Policy(s s')) \\
      &= R_{s_p s} + \sum_{s' \in \States \setminus \{s\}} T_{s'|s_p s}\, \gamma\, V^\Policy(s s') + T_{s|s_p s}\, \gamma\, V^\Policy(s s) \\
      &= R_{s_p s} + \sum_{s' \in \States \setminus \{s\}} T_{s'|s_p s}\, \gamma\, V^\Policy(s s') + T_{s|s_p s}\, \gamma\, R_{ss}\\
        &\qquad + T_{s|s_p s}\, \gamma\, T_{s|ss}\, \gamma\, V^\Policy(ss) + T_{s|s_p s}\, \gamma\, \sum_{s' \in \States \setminus \{s\}} T_{s'|s_p s}\, \gamma\, V^\Policy(s s') \\
      &= R_{s_p s} + \gamma\, T_{s|s_p s}\, R_{ss} \sum_{t=0}^\infty \gamma^t\, T_{s|ss}^t \notag\\
        &\qquad + \sum_{s' \in \States \setminus \{s\}} \gamma\, T_{s'|s_p s}\, V^\Policy(s s') +
           \gamma\, T_{s|s_p s} \sum_{s' \in \States \setminus \{s\}} \gamma\, T_{s'|ss}\, V^\Policy(s s') \sum_{t=0}^\infty \gamma^t\, T_{s|ss}^t \\
      &= R_{s_p s} + \frac{\gamma\, T_{s|s_p s}}{1 - \gamma\, T_{s|ss}}\, R_{ss}
          + \sum_{s' \in \States \setminus \{s\}} \left( \gamma\, T_{s'|s_p s} +
          \frac{\gamma^2\, T_{s|s_p s}\, T_{s'|ss}}{1 - \gamma\, T_{s|ss}} \right)\, V^\Policy(s s')
  \end{align}
\end{proof}

\begin{lemma}\label{th:multistep-value}
  Consider any MDP~$\MDP$ and surjective function $\Mapping: \States \to \Abst\States$.
  Then, from any state $s \in \States$,
  the value of any deterministic $\Mapping$-relative option $o \in \Options_{\Mapping(s)}$ and policy~$\Policy$ is
  \begin{equation}
    {Q}^\Policy(s, o) = \sum_{s' \in \States} \frac{d^o_{\Mapping(s)}(s' \Given s)}{1 - \gamma} \bigl(\Indicator(s' \in \Block{\Mapping(s)})\, R(s', o(s')) + \Indicator(s' \in \Exits_{\Mapping(s)})\, V^\Policy(s')\bigr)
  \end{equation}
  where $d^o_{\Mapping(s)}$ is the state occupancy measure of~$\Policy_o$ in the block-restricted MDP~$\MDP_{\Mapping(s)}$.
\end{lemma}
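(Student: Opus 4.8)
The plan is to expand $Q^\Policy(s, o)$ directly from the semantics of an option and then recognize the resulting discounted visitation frequencies as the block occupancy measure $d^o_{\Mapping(s)}$. Write $\Abst{s} \coloneqq \Mapping(s)$ and let $j$ be the random first time at which the option leaves $\Block{\Abst{s}}$; since $o$ is $\Mapping$-relative, $\beta_o$ fires exactly when the agent leaves the block, so $s_0, \dots, s_{j-1} \in \Block{\Abst{s}}$ while $s_j \in \Exits_{\Abst{s}}$. By the definition of the value of executing $\Policy_o$ until termination and then continuing with $\Policy$,
\[
  Q^\Policy(s, o) = \Expected\!\left[ \sum_{\tau=0}^{j-1} \gamma^\tau R(s_\tau, o(s_\tau)) + \gamma^j\, V^\Policy(s_j) \,\Big|\, s_0 = s, \Policy_o \right].
\]
First I would treat the two terms separately, rewriting each expectation as a sum over $s'$ weighted by the discounted probability of visiting $s'$ along the option's trajectory.

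The crucial step is to identify these visitation probabilities with $d^o_{\Abst{s}}$, the occupancy of $\Policy_o$ in the block MDP $\MDP_{\Abst{s}}$. That process keeps the true dynamics on $\Block{\Abst{s}}$ and sends every exit state deterministically to the absorbing sink $\SinkS$. Consequently the first-exit time $j$ coincides with the first hitting time of $\Exits_{\Abst{s}}$ in $\MDP_{\Abst{s}}$, and two facts hold: (i) for $s' \in \Block{\Abst{s}}$ the agent occupies $s'$ only while $\tau < j$ (after absorption it never returns), so the block-MDP probability $\Pr(s_\tau = s')$ equals the genuine in-block visitation probability; and (ii) for $s' \in \Exits_{\Abst{s}}$ the agent occupies $s'$ for exactly one step --- the exit step --- before absorption, so $\Pr(s_\tau = s')$ in $\MDP_{\Abst{s}}$ equals the probability of exiting through $s'$ at time $\tau$. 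Summing against $\gamma^\tau$ and invoking the definition $d^o_{\Abst{s}}(s' \mid s) = (1-\gamma)\sum_\tau \gamma^\tau \Pr(s_\tau = s')$ turns the reward term into $\sum_{s' \in \Block{\Abst{s}}} \frac{d^o_{\Abst{s}}(s'\mid s)}{1-\gamma}\, R(s', o(s'))$ and the continuation term into $\sum_{s' \in \Exits_{\Abst{s}}} \frac{d^o_{\Abst{s}}(s'\mid s)}{1-\gamma}\, V^\Policy(s')$, since $d^o_{\Abst{s}}(s'\mid s)/(1-\gamma) = \Expected[\gamma^j \Indicator(s_j = s')]$ at exits.

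Combining the two pieces and noting that $d^o_{\Abst{s}}(\SinkS \mid s)$ multiplies a vanishing indicator (the sink lies in neither $\Block{\Abst{s}}$ nor $\Exits_{\Abst{s}}$), I can extend the summation harmlessly to all of $\States$ and recover exactly the claimed expression. I expect the main obstacle to be the bookkeeping in step (ii): making precise that the single unit of per-step occupancy mass placed on an exit state encodes a one-time continuation value rather than a repeated reward. This hinges on the absorbing modification --- because each exit transitions to $\SinkS$ rather than self-looping, it is occupied for exactly one step, so the occupancy there is precisely the discounted first-exit probability needed to reconstruct $\gamma^j V^\Policy(s_j)$. Had exits self-looped, this mass would diverge geometrically; the non-absorbing-exit convention adopted earlier is exactly what keeps $d^o_{\Abst{s}}$ equal to the true occupancy at the exits.
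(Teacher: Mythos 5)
Your proof is correct and takes essentially the same approach as the paper's: both decompose $Q^\Policy(s,o)$ into the in-block discounted rewards plus the discounted continuation value $\gamma^j V^\Policy(s_j)$ at the first exit, transfer the visitation probabilities to the block MDP $\MDP_{\Mapping(s)}$ (where in-block dynamics are unchanged and each exit state is occupied exactly once before absorption in $\SinkS$), and then identify the resulting discounted sums with $d^o_{\Mapping(s)}$. The only cosmetic difference is that you organize the expansion around the stopping time $j$, whereas the paper unrolls the Bellman recursion into explicit trajectory sums; your closing observation about the non-absorbing-exit convention being what keeps the exit occupancy equal to the discounted first-exit probability is exactly the point the paper itself emphasizes when defining block MDPs.
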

\begin{proof}
  Let $\Blocksteps{\Abst{s}}{t} \coloneqq \Block{\Abst{s}}^{t-1} \times (\States \setminus \Block{\Abst{s}})$
  be the set that includes all trajectories leaving the block in exactly $t$ transitions.
  We also abbreviate $\Abst{s} \coloneqq \Mapping(s)$. Then,
  \begin{align}
    & {Q}^\Policy(s, o)
      = R(s, o(s)) + \gamma \Expected_{s'}[\Indicator(s' \in \Block{\Abst{s}})\, {Q}^\Policy(s', o) + \Indicator(s' \not\in \Block{\Abst{s}})\, {V}^\Policy(s'))] \\
      &= R(s, o(s)) + \gamma \sum_{s' \in \Block{\Abst{s}}} T(s' \Given s, o(s))\, {Q}^\Policy(s', o)
        + \gamma \sum_{s' \not\in \Block{\Abst{s}}} T(s' \Given s, o(s))\, {V}^\Policy(s') \\
      &= \sum_{t = 0}^\infty \gamma^t\, \sum_{s_{1:t} \in \Block{\Abst{s}}^t} \Pr(s_{1:t} \Given s_0=s, o, \MDP)\, R(s_t, o(s_t))\notag\\
      &\qquad + \sum_{t = 1}^\infty \gamma^t\, \sum_{s_{1:t} \in \Blocksteps{\Abst{s}}{t}} \Pr(s_{1:t} \Given s_0=s, o, \MDP)\, V^\Policy(s_{t})\\
      &= \sum_{t = 0}^\infty \gamma^t\, \sum_{s_{1:t} \in \Block{\Abst{s}}^t} \Pr(s_{1:t} \Given s_0=s, o, \MDP_{\Abst{s}})\, R_{\Abst{s}}(s_t, o(s_t))\notag\\
      &\qquad + \sum_{t = 1}^\infty \gamma^t\, \sum_{s_{1:t} \in \Blocksteps{\Abst{s}}{t}} \Pr(s_{1:t} \Given s_0=s, o, \MDP_{\Abst{s}})\, V^\Policy(s_{t})
  \end{align}
  In the last equation, all probabilities are computed on the block-restricted MDP $\MDP_{\Abst{s}}$.
  This is equivalent, since all probabilities of transitions from $\Block{\Abst{s}}$ are preserved.
  Since every trajectory that leaves the block may only reach $\SinkS$, without further rewards in $\MDP_{\Abst{s}}$, we can simplify as follows.
  \begin{align}
    \MoveEqLeft {Q}^\Policy(s, o)
      = \sum_{t = 0}^\infty \gamma^t\, \sum_{s_{1:t} \in \States_{\Abst{s}}^t} \Pr(s_{1:t} \Given s_0=s, o, \MDP_{\Abst{s}})\, R_{\Abst{s}}(s_t, o(s_t))\notag\\
      &\qquad + \sum_{t = 1}^\infty \gamma^t\, \sum_{s' \in \Exits_{\Abst{s}}} \Pr(s_{t} = s' \Given s_0=s, o, \MDP_{\Abst{s}})\, V^\Policy(s')\\
      &= \Expected \left[ \sum_{t = 0}^\infty \gamma^t\, r_t \Given s, o, \MDP_{\Abst{s}} \right]\notag\\
      &\qquad + \sum_{s' \in \States} \sum_{t = 1}^\infty \gamma^t\, \Pr(s_{t} = s' \Given s_0=s, o, \MDP_{\Abst{s}})\, \Indicator(s' \in \Exits_{\Abst{s}})\, V^\Policy(s')\\
      &= V^{o}_{\Abst{s}}(s) + \sum_{s' \in \States} \sum_{t = 0}^\infty \gamma^t\, \Pr(s_{t} = s' \Given s_0=s, o, \MDP_{\Abst{s}})\, \Indicator(s' \in \Exits_{\Abst{s}})\, V^\Policy(s')\\
      &= (1 - \gamma)^{-1} \sum_{s' \in \Block{\Abst{s}}} d^o_{\Abst{s}}(s' \Given s)\, R(s', o(s'))\notag\\
      &\qquad + (1 - \gamma)^{-1} \sum_{s' \in \States} d^o_{\Abst{s}}(s' \Given s)\, \Indicator(s' \in \Exits_{\Abst{s}})\, V^\Policy(s')\\
      &= \sum_{s' \in \States} (1 - \gamma)^{-1}\, d^o_{\Abst{s}}(s' \Given s)\, (\Indicator(s' \in \Block{\Abst{s}})\, R(s', o(s')) + \Indicator(s' \in \Exits_{\Abst{s}})\, V^\Policy(s'))
  \end{align}
\end{proof}

\begin{lemma}\label{th:occupancy-at-exits}
  Let $\MDP_{\Abst{s}}$ be any block MDP, computed from some MDP $\MDP$, mapping function $\Mapping$ and abstract state $\Abst{s}$.
  Then, for any option $o \in \Options_{\Abst{s}}$ and $s \in \Block{\Abst{s}}$, it holds:
  \begin{align}
    d^o_{\Abst{s}}(\SinkS \Given s) &= ( 1 - h^{o}_{\Abst{s}}(\Abst{s} \Given s))\, \gamma \label{eq:occupancy-at-sink}\\*
    \sum_{s' \in \Exits_{\Abst{s}}} d^{o}_{\Abst{s}}(s' \Given s) &= (1 - h^{o}_{\Abst{s}}(\Abst{s} \Given s))\, (1 - \gamma) \label{eq:occupancy-at-exits}
  \end{align}
\end{lemma}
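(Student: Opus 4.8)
The plan is to derive both identities from two elementary facts about the discounted state occupancy measure in the block MDP $\MDP_{\Abst{s}}$: the global normalization $\sum_{s' \in \States_{\Abst{s}}} d^o_{\Abst{s}}(s' \Given s) = 1$, and the Bellman flow constraint specialized to the absorbing sink. Since the option starts at $s \in \Block{\Abst{s}}$ and $\States_{\Abst{s}} = \Block{\Abst{s}} \cup \Exits_{\Abst{s}} \cup \{\SinkS\}$ partitions the state space, normalization together with the definition $h^o_{\Abst{s}}(\Abst{s} \Given s) = \sum_{s' \in \Block{\Abst{s}}} d^o_{\Abst{s}}(s' \Given s)$ immediately yields $h^o_{\Abst{s}}(\Abst{s} \Given s) + \sum_{s' \in \Exits_{\Abst{s}}} d^o_{\Abst{s}}(s' \Given s) + d^o_{\Abst{s}}(\SinkS \Given s) = 1$. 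This is one linear equation in the two unknowns I want to isolate, so I need a second independent relation.

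The second relation comes from the flow (continuity) equation for occupancy measures, $d(s') = (1-\gamma)\mu(s') + \gamma \sum_{s''} d(s'')\, T_{\Abst{s}}(s' \Given s'', \Policy_o(s''))$, applied at $s' = \SinkS$. Here the initial distribution is $\Det{s}$, so the initial mass at $\SinkS$ is $0$. The crucial structural observation is that the sink receives probability mass only from the exit states and from itself: every state in $\Block{\Abst{s}}$ transitions, under $T_{\Abst{s}} = T$, either within the block or to an exit (by the very definition of $\Exits_{\Abst{s}}$ as all out-of-block states reachable in one step), hence places zero mass on $\SinkS$; meanwhile each exit and the sink transition deterministically to $\SinkS$. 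The flow equation therefore collapses to $d^o_{\Abst{s}}(\SinkS \Given s) = \gamma\bigl(\sum_{s' \in \Exits_{\Abst{s}}} d^o_{\Abst{s}}(s' \Given s) + d^o_{\Abst{s}}(\SinkS \Given s)\bigr)$, which rearranges to $\sum_{s' \in \Exits_{\Abst{s}}} d^o_{\Abst{s}}(s' \Given s) = \tfrac{1-\gamma}{\gamma}\, d^o_{\Abst{s}}(\SinkS \Given s)$.

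Substituting this into the normalization equation gives $h^o_{\Abst{s}}(\Abst{s} \Given s) + \tfrac{1}{\gamma}\, d^o_{\Abst{s}}(\SinkS \Given s) = 1$, which solves to $d^o_{\Abst{s}}(\SinkS \Given s) = \gamma(1 - h^o_{\Abst{s}}(\Abst{s} \Given s))$, establishing \eqref{eq:occupancy-at-sink}; plugging this back into the exit relation yields $\sum_{s' \in \Exits_{\Abst{s}}} d^o_{\Abst{s}}(s' \Given s) = (1-\gamma)(1 - h^o_{\Abst{s}}(\Abst{s} \Given s))$, which is \eqref{eq:occupancy-at-exits}.

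I expect the only real subtlety, rather than a genuine obstacle, to be justifying the flow equation at the sink: carefully arguing that no mass flows from the block directly into $\SinkS$ and correctly including the sink's own self-loop term. If one prefers to avoid invoking the flow constraint as a black box, the same relation follows directly from the series definition of $d^o_{\Abst{s}}$: writing $q_t \coloneqq \Pr(s_t \in \Exits_{\Abst{s}})$ and $p_t \coloneqq \Pr(s_t = \SinkS)$ in $\MDP_{\Abst{s}}$, the deterministic exit-to-sink transitions give the recursion $p_{t+1} = p_t + q_t$ with $p_0 = 0$, and evaluating $(1-\gamma)\sum_t \gamma^t p_t$ after swapping the order of summation reproduces the factor $\gamma/(1-\gamma)$ between the sink and exit occupancies. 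Either route reduces the lemma to the two-equation linear system above.
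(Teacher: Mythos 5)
Your proof is correct and follows essentially the same route as the paper's: both combine the normalization of $d^o_{\Abst{s}}$ over the partition $\Block{\Abst{s}} \cup \Exits_{\Abst{s}} \cup \{\SinkS\}$ with the one-step relation $d^o_{\Abst{s}}(\SinkS \Given s) = \gamma\bigl(\sum_{s' \in \Exits_{\Abst{s}}} d^{o}_{\Abst{s}}(s' \Given s) + d^{o}_{\Abst{s}}(\SinkS \Given s)\bigr)$, which the paper derives by shifting the index in the series (using $\Pr(s_t = \SinkS) = \Pr(s_{t-1} \in \Exits_{\Abst{s}} \cup \{\SinkS\})$) and you obtain from the Bellman flow equation at the sink. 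Your fallback series argument is precisely the paper's derivation, so the two proofs coincide in substance.
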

\begin{proof}
  In a block MDP, we remind that the occupancy measure is spread between the block $\Block{\Abst{s}}$, the exits and the sink state~$\SinkS$.
  In other words,
  \begin{equation}\label{eq:exit-occupancy-expression}
    \sum_{s' \in \Exits_{\Abst{s}}} d^{o}_{\Abst{s}}(s' \Given s) = 1 - \sum_{s' \in \Block{\Abst{s}}} d^{o}_{\Abst{s}}(s' \Given s) - d^{o}_{\Abst{s}}(\SinkS \Given s) =
    1 - h^{o}_{\Abst{s}}(\Abst{s} \Given s) - d^{o}_{\Abst{s}}(\SinkS \Given s)
  \end{equation}
  From the definition of occupancy, we also know that
  \begin{align}
    d^o_{\Abst{s}}(\SinkS \Given s) &= (1 - \gamma) \sum_{t = 0}^\infty \gamma^t\, \Pr(s_t = \SinkS \Given s_0 = s, o, \MDP_{\Abst{s}})\\
      &= (1 - \gamma) \sum_{t = 1}^\infty \gamma^t\, \Pr(s_t = \SinkS \Given s_0 = s, o, \MDP_{\Abst{s}})\\
      &= (1 - \gamma) \sum_{t = 1}^\infty \gamma^t\, \Pr(s_{t-1} \in \Exits_{\Abst{s}} \cup \{\SinkS\} \Given s_0 = s, o, \MDP_{\Abst{s}})\\
      &= \gamma\, (1 - \gamma) \sum_{t = 0}^\infty \gamma^t\, \Pr(s_{t} \in \Exits_{\Abst{s}} \cup \{\SinkS\} \Given s_0 = s, o, \MDP_{\Abst{s}})\\
      &= \gamma\, \left( \sum_{s' \in \Exits_{\Abst{s}}} d^{o}_{\Abst{s}}(s' \Given s) + d^{o}_{\Abst{s}}(\SinkS \Given s) \right) \label{eq:sink-occupancy-expression}
  \end{align}
  Substituting \cref{eq:exit-occupancy-expression} into \cref{eq:sink-occupancy-expression} gives the result.
\end{proof}

\selfexactabstraction*
\begin{proof}
  The ground domain is $\MDP = \Tuple{\States, \Actions, T, R, \gamma}$
  and the abstraction is $\Tuple{\MDP, \Identity}$.
	We will first show that this is a perfectly realizable abstraction.
  The identity function induces the naive partitioning, in which each state is in a separate block: $\Block{s}_\Identity = \{s\}$.
  Also, if we just consider deterministic $\Identity$-relative options, we see that these are simple repetitions of the same action for the same state.
  We can now compute the un-normalized block occupancy measure at any state $s \in \States$ and deterministic $o \in \Options_s$.
  Then, for $s' \neq s$,
  \begin{align}
    \frac{h_{\Identity(s)}^o(s' \Given s)}{1 - \gamma}
      &= \sum_{s' \in \Block{\Identity(s')}} \sum_{t = 0}^\infty \gamma^t\, \Pr(s_t = s' \Given s_0 = s, o, \MDP_{\Identity(s)})\\
      &= \sum_{t = 1}^\infty \gamma^t\, \Pr(s_{0:t-1} \in \Block{s}^t, s_t = s' \Given s_0 = s, o, \MDP_{s})\\
      &= \sum_{t = 1}^\infty \gamma^t\, T(s \Given s, o(s))^{t-1}\, T(s' \Given s, o(s))\\
      &= \frac{\gamma\, T(s' \Given s, o(s))}{1 - \gamma\, T(s \Given s, o(s))}
  \end{align}
  Now we compute un-normalized \cref{eq:computed-option-occupancy} for~$\MDP$.
  Importantly, since $T(s_p s, a) = T(s s, a)$, we can just write $T(s, a)$:
  \begin{align}
    \frac{\tilde{h}_{{s}_p {s} {a}}({s}')}{1-\gamma}
      = \gamma\, T(s' \Given s, a) + \frac{\gamma^2\, T(s \Given s, a)\, T(s' \Given s, a)}{1 - \gamma\, T(s \Given s, a)}
      = \frac{\gamma\, T(s' \Given s, a)}{1 - \gamma\, T(s \Given s, a)}
      \label{eq:self-abstractions-2mdp}
  \end{align}
  This proves that $\Policy_o(s) = a$ is a perfect realization of~$a$ with respect to \cref{eq:realizabilityT}.
  We now consider rewards.
  The term $V_s^o(s)$, appearing in \cref{eq:realizabilityR}, is the cumulative return obtained by repeating action~$a$
  (since it is the only reward in $\MDP_s$).
  \begin{align}
    V_s^o(s) &= \sum_{t = 0}^\infty \gamma^t\, \Pr(s_t = s \Given s_0 = s, a, \MDP_{\Abst{s}})\, R(s, a)\\
      &= \sum_{t = 0}^\infty \gamma^t\, T(s \Given s, a)^{t-1}\, R(s, a)\\
      &= \frac{\gamma\, R(s, a)}{1 - \gamma\, T(s \Given s, a)}
  \end{align}
  Following a similar procedure of \cref{eq:self-abstractions-2mdp}, we also verify \cref{eq:realizabilityR}.
	We have just verified perfect realizability only using equality relations.
	Therefore, it is proven that admissibility is also satisfied.
\end{proof}

\admissibleimpliesgoodgamma*
\begin{proof}
  Let us fix any $\Abst{s}_p, \Abst{s} \in \Abst\States$, with $\Abst{s}_p \neq \Abst{s}$,
	an option $o \in \Options_{\Abst{s}_p \Abst{s}}$, and an $s \in \Entries_{\Abst{s}_p \Abst{s}}$.
	Since the abstraction is admissible, we know there exists an~$\Abst{a}$, such that
  $\tilde{h}_{\Abst{s}_p \Abst{s} \Abst{a}}(\Abst{s}') \ge h^{o}_{\Abst{s}}(\Abst{s}' \Given s)$ and
  $\tilde{V}_{\Abst{s}_p \Abst{s} \Abst{a}} \ge V^o_{\Abst{s}}(s)$,
	at any $\Abst{s}' \neq \Abst{s}$.
  Using the abbreviation $\Abst{T}_{\Abst{s}_3|\Abst{s}_1 \Abst{s}_2} \coloneqq \Abst{T}(\Abst{s}_3 \Given \Abst{s}_1 \Abst{s}_2, \Abst{a})$,
  we expand the first inequality with~\eqref{eq:computed-option-occupancy},
  \begin{align}
    &h^{o}_{\Abst{s}}(\Abst{s}' \Given s) \le (1 - \gamma) \left( \Abst\gamma \Abst{T}_{\Abst{s}' | \Abst{s}_p \Abst{s}}
      + \Abst\gamma^2 \frac{ \Abst{T}_{\Abst{s}' | \Abst{s}\Abst{s}} \Abst{T}_{\Abst{s} | \Abst{s}_p \Abst{s}}}{1 - \Abst\gamma \Abst{T}_{\Abst{s} | \Abst{s} \Abst{s}}} \right) \\
    \iff\,\, & \frac{h^{o}_{\Abst{s}}(\Abst{s}' \Given s)}{(1 - \gamma)} \le
      \frac{ \Abst\gamma \Abst{T}_{\Abst{s}' | \Abst{s}_p \Abst{s}}
      - \Abst\gamma^2 \Abst{T}_{\Abst{s}' | \Abst{s}_p \Abst{s}} \Abst{T}_{\Abst{s} | \Abst{s} \Abst{s}}
      + \Abst\gamma^2 \Abst{T}_{\Abst{s}' | \Abst{s}\Abst{s}} \Abst{T}_{\Abst{s} | \Abst{s}_p \Abst{s}}}{1 - \Abst\gamma \Abst{T}_{\Abst{s} | \Abst{s} \Abst{s}}} \\
    \intertext{by summing all such inequalities over $\Abst{s}' \neq \Abst{s}$, and using \cref{eq:occupancy-at-exits} for the left-hand side, we obtain}
    \implies\,\, & 1 - h^{o}_{\Abst{s}}(\Abst{s} \Given s) \le
      \frac{ \Abst\gamma (1 - \Abst{T}_{\Abst{s} | \Abst{s}_p \Abst{s}})
      - \Abst\gamma^2 \Abst{T}_{\Abst{s} | \Abst{s} \Abst{s}} (1 - \Abst{T}_{\Abst{s} | \Abst{s}_p \Abst{s}})
      + \Abst\gamma^2 \Abst{T}_{\Abst{s} | \Abst{s}_p \Abst{s}} (1 - \Abst{T}_{\Abst{s} | \Abst{s}\Abst{s}})}{1 - \Abst\gamma \Abst{T}_{\Abst{s} | \Abst{s} \Abst{s}}} \\
    \iff\,\, & 1 - \Abst\gamma \Abst{T}_{\Abst{s} | \Abst{s} \Abst{s}} - h^{o}_{\Abst{s}}(\Abst{s} \Given s) (1 - \Abst\gamma \Abst{T}_{\Abst{s} | \Abst{s} \Abst{s}}) \le
      \Abst\gamma - \Abst\gamma \Abst{T}_{\Abst{s} | \Abst{s}_p \Abst{s}}
      - \Abst\gamma^2 \Abst{T}_{\Abst{s} | \Abst{s} \Abst{s}} 
      + \Abst\gamma^2 \Abst{T}_{\Abst{s} | \Abst{s}_p \Abst{s}}  \\
    \iff\,\, & h^{o}_{\Abst{s}}(\Abst{s} \Given s) (1 - \Abst\gamma \Abst{T}_{\Abst{s} | \Abst{s} \Abst{s}}) \ge
      (1 - \Abst\gamma \Abst{T}_{\Abst{s} | \Abst{s} \Abst{s}}) - \Abst\gamma (1 - \Abst\gamma \Abst{T}_{\Abst{s} | \Abst{s} \Abst{s}})
      + \Abst\gamma \Abst{T}_{\Abst{s} | \Abst{s}_p \Abst{s}} (1 - \Abst\gamma) \\
    \iff\,\, & h^{o}_{\Abst{s}}(\Abst{s} \Given s) \ge
      1 - \Abst\gamma 
      + \frac{\Abst\gamma \Abst{T}_{\Abst{s} | \Abst{s}_p \Abst{s}} (1 - \Abst\gamma)}{1 - \Abst\gamma \Abst{T}_{\Abst{s} | \Abst{s} \Abst{s}}} \label{eq:admissible-implies-proof1} \\
    \implies\,\, & h^{o}_{\Abst{s}}(\Abst{s} \Given s) \ge
      1 - \Abst\gamma 
  \end{align}
  For the second statement, we expand $\tilde{V}_{\Abst{s}_p \Abst{s} \Abst{a}} \ge V^o_{\Abst{s}}(s)$,
  \begin{align}
    &V^o_{\Abst{s}}(s) \le \Abst{R}(\Abst{s}_p \Abst{s}, \Abst{a}) + \Abst\gamma \Abst{R}(\Abst{s}\Abst{s}, \Abst{a}) \frac{ \Abst{T}_{\Abst{s} | \Abst{s}_p \Abst{s}}}{1 - \Abst\gamma \Abst{T}_{\Abst{s} | \Abst{s} \Abst{s}}} \\
    \implies\,\, & V^o_{\Abst{s}}(s) \le 1 + \frac{ \Abst\gamma \Abst{T}_{\Abst{s} | \Abst{s}_p \Abst{s}}}{1 - \Abst\gamma \Abst{T}_{\Abst{s} | \Abst{s} \Abst{s}}} \\
    \iff\,\, & (1 - \Abst\gamma) V^o_{\Abst{s}}(s) \le (1 - \Abst\gamma) + \frac{ \Abst\gamma \Abst{T}_{\Abst{s} | \Abst{s}_p \Abst{s}} (1 - \Abst\gamma)}{1 - \Abst\gamma \Abst{T}_{\Abst{s} | \Abst{s} \Abst{s}}} \\
    \intertext{using \eqref{eq:admissible-implies-proof1},}
    \implies\,\, & V^o_{\Abst{s}}(s) \le h^{o}_{\Abst{s}}(\Abst{s} \Given s) / (1 - \Abst\gamma)
  \end{align}
\end{proof}

\section{Connection with MDP Homomorphisms and bisimulation}
\label{sec:app:homomorphism}
In this section,
we relate our new concept of realizable abstractions with two existing formal definitions of MDP abstractions,
namely, MDP homomorphisms \citep{ravindran_2002_ModelMinimizationHierarchical} and stochastic bisimulation \citep{givan_2003_EquivalenceNotionsModel}.
As we demonstrate in this section,
both MDP homomorphisms and stochastic bisimulation are strictly less expressive (meaning, their assumptions are strictly more stringent)
than realizable abstractions.
As we show below, any MDP abstraction obtained via an homomorphisms or a stochastic bisimulation is also an admissible and perfectly realizable abstraction.
On the other hand, there exists MDP-abstraction pairs, $\MDP$ and $\Tuple{\Abst\MDP, \Mapping}$, that
satisfy the realizability assumptions but no associated MDP homomorphisms or bisimulation exists for the two.
% TODO: write this in main body
In an effort to make both MDP homomorphisms and stochastic bisimulation more widely applicable,
they have been generalized to approximate the strict relations,
respectively in \citet{ravindran_2004_ApproximateHomomorphismsFramework} and \citet{ferns_2004_MetricsFiniteMarkov}.
Nonetheless, they only allow small variations around the strict equality, similarly to what we have done in this paper for realizable abstractions,
but they cannot capture very different relations from their original definition.
Therefore, we will relate the strict relations of the various formalisms: admissible and perfectly realizable abstractions, MDP homomorphisms, and stochastic bisimulation.

\paragraph{MDP homomorphisms}
MDP homomorphisms are a classic formalism for MDP minimization \citep{ravindran_2002_ModelMinimizationHierarchical}.
A homomorphism from an MDP $\MDP = \Tuple{\States, \Actions, T, R, \gamma}$
to another $\Abst\MDP = \Tuple{\Abst\States, \Abst\Actions, \Abst{T}, \Abst{R}, \gamma}$
is a pair $\Tuple{f, \{g_s\}_{s \in \States}}$,
with a function $f: \States \to \Abst\States$
and surjections $g_s: \Actions \to \Abst\Actions$, satisfying
\begin{align}
  \Abst{T}(f(s') \Given f(s), g_s(a)) &= \sum_{s'' \in \Block{f(s')}} T(s'' \Given s, a)
  \label{eq:hom-transition}\\
  \Abst{R}(f(s), g_s(a)) &= R(s, a)
  \label{eq:hom-reward}
\end{align}
for all ${s, s' \in \States}$, ${a \in \Actions}$.
For simplicity, here we assumed that all actions are applicable in any state.
MDP homomorphisms can also be generalized to be approximate as shown in \citet{ravindran_2004_ApproximateHomomorphismsFramework}.

\realizablehomomorph*
\begin{proof}
  If the ground domain is $\MDP = \Tuple{\States, \Actions, T, R, \gamma}$,
  we choose as abstraction $\Tuple{\Abst\MDP, f}$.
  We compute the un-normalized block occupancy measure at any state $s \in \States$ and deterministic option $o \in \Options_{f(s)}$.
  We also assume that~$o$ selects the same action for every $\Block{f(s)}$.
  Then, for $\Abst{s}' \neq f(s)$,
  \begin{align}
    \MoveEqLeft \frac{h_{f(s)}^o(\Abst{s}' \Given s)}{1 - \gamma}
      = \sum_{s' \in \Block{\Abst{s}'}} \sum_{t = 0}^\infty \gamma^t\, \Pr(s_t = s' \Given s_0 = s, o, \MDP_{f(s)})\\
      &= \sum_{s' \in \Block{\Abst{s}'}} \sum_{t = 1}^\infty \gamma^t\, \Pr(s_{0:t-1} \in \Block{f(s)}^t, s_t = s' \Given s_0 = s, o, \MDP_{f(s)})\\
      &= \sum_{t = 1}^\infty \gamma^t \sum_{s_{0:t-1} \in \Block{f(s)}^t} \sum_{s' \in \Block{\Abst{s}'}} \Pr(s_{0:t-1}, s_t = s' \Given s_0 = s, o, \MDP_{f(s)})\\
    \intertext{Now, summing from $s'$ to $s_{t-1}$ back to $s_0$ and substituting \cref{eq:hom-transition},}
      &= \sum_{t = 1}^\infty \gamma^t\, \Abst{T}(f(s) \Given f(s)\, g_s(o(s))^{t-1}\, \Abst{T}(\Abst{s}' \Given f(s)\, g_s(o(s)))\\
      &= \frac{\gamma\, \Abst{T}(\Abst{s}' \Given f(s)\, g_s(o(s)))}{1 - \gamma\, \Abst{T}(f(s) \Given f(s)\, g_s(o(s)))}
  \end{align}
  Now we compute un-normalized \cref{eq:computed-option-occupancy} for~$\MDP$.
  As in \cref{eq:self-abstractions-2mdp}, since $\Abst{T}(\Abst{s}_p \Abst{s}, \Abst{a}) = T(\Abst{s} \Abst{s}, \Abst{a})$, we can just write $T(\Abst{s}, \Abst{a})$ and:
  \begin{align}
    \frac{\tilde{h}_{{\Abst{s}}_p {\Abst{s}} {\Abst{a}}}({\Abst{s}}')}{1-\gamma}
      = \frac{\gamma\, \Abst{T}(\Abst{s}' \Given \Abst{s}\, \Abst{a})}{1 - \gamma\, \Abst{T}(\Abst{s} \Given \Abst{s}\, \Abst{a})}
  \end{align}
  This proves that $\Policy_o(s) \in g_s^{-1}(\Abst{a})$ is a perfect realization of~$\Abst{a}$ with respect to \cref{eq:realizabilityT}.
  We now consider rewards.
  The term $V_{f(s)}^o(s)$, appearing in \cref{eq:realizabilityR}, is
  \begin{align}
    V_{f(s)}^o(s)
      &= \sum_{t = 0}^\infty \gamma^t \sum_{s_{0:t} \in \Block{f(s)}^{t+1}} \Pr(s_{0:t} \Given s_0 = s, o, \MDP_{f(s)})\, R(s, a)\\
      &= \sum_{t = 0}^\infty \gamma^t\, \Abst{T}(f(s) \Given f(s)\, o(a))^{t}\, \Abst{R}(f(s)\, o(a))\\
      &= \frac{\gamma\, \Abst{R}(f(s)\, o(a))}{1 - \gamma\, \Abst{T}(f(s) \Given f(s)\, o(a))}
  \end{align}
  By comparison with $\tilde{V}_{\Abst{s}_p \Abst{s} \Abst{a}}$ in \cref{eq:realizabilityR},
  the same choice $\Policy_o(s) \in g_s^{-1}(\Abst{a})$ also satisfies the second constraint.
\end{proof}

\realizablehomomorphback*
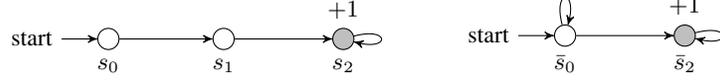
\begin{figure}
	\centering
	\begin{tikzpicture}
		\matrix [column sep=1cm] {
			\node (s0) [state, initial, label=below:$s_0$] {}; \&
			\node (s1) [state, label=below:$s_1$] {}; \&
			\node (s2) [state, label=below:$s_2$, label=above:$+1$, fill=lightgray] {}; \\
		};
		\draw [->] (s0) -- (s1);
		\draw [->] (s1) -- (s2);
		\draw [->] (s2) edge [loop right] (s2);
	\end{tikzpicture}
	\qquad
	\begin{tikzpicture}
		\matrix [column sep=1cm] {
			\node (s0) [state, initial, label=below:$\Abst{s}_0$] {}; \&
			\node (s1) [state, label=below:$\Abst{s}_2$, label=above:$+1$, fill=lightgray] {}; \\
		};
		\draw [->] (s0) -- (s1);
		\draw [->] (s0) edge [loop above] (s1);
		\draw [->] (s1) edge [loop right] (s2);
	\end{tikzpicture}
	\caption{The ground MDP (left) and the abstract MDP (right)
	used in the proof of \cref{th:realizable-homomorphisms-back}.}
	\label{fig:mdp-row}
\end{figure}
\begin{proof}
	Consider the MDP in \cref{fig:mdp-row}.
	This is a very simple MDP $\MDP = \Tuple{\States, \Actions, T, R, \gamma}$
	with three states $\States = \{s_0, s_1, s_2\}$, an action $\Actions = \{a_0\}$,
	deterministic transitions as indicated by the figure,
	and a reward of $+1$ in state~$s_2$, zero otherwise.
	The state mapping function is constructed as $\Mapping(s_0) = \Mapping(s_1) = \Abst{s}_0$ and $\Mapping(s_2) = \Abst{s}_1$.
	The abstract MDP is $\Abst\MDP = \Tuple{\{\Abst{s}_0, \Abst{s}_1\}, \{\Abst{a}_0\}, \Abst{T}, \Abst{R}, \gamma}$,
	where the reward function returns $+1$ in state $\Abst{s}_1$, zero otherwise,
	and the transition function $\Abst{T}$ only allows the transitions indicated by the arrows.
	The exact values for the stochastic transitions in $\Abst{s}_0$ are 
	$\Abst{T}(\Abst{s}_1 \Given \Abst{s}_0, \Abst{a}_0) \coloneqq \gamma / (1 + \gamma)$ and
	$\Abst{T}(\Abst{s}_0 \Given \Abst{s}_0, \Abst{a}_0) \coloneqq 1 / (1 + \gamma)$.

	We now show that $\Tuple{\Abst\MDP, \Mapping}$ is admissible and perfectly realizable.
	Let us remind that $\Abst{s}\StartSym$ is the dummy state symbol that represents the beginning of an episode in the abstract MDP.
	Then, we apply the equations in~\eqref{eq:computed-option-mdp} to get:
	\begin{align}
		\tilde{h}_{\Abst{s}\StartSym \Abst{s}_0 \Abst{a}_0}(\Abst{s}_1)
			&= \frac{(1 - \gamma)\, \gamma\, \Abst{T}(\Abst{s}_1 \Given \Abst{s}_0, \Abst{a}_0 )}{1 - \gamma\, \Abst{T}(\Abst{s}_0 \Given \Abst{s}_0, \Abst{a}_0)}
			= \frac{(1 - \gamma)\, \gamma^2 / (1 + \gamma)}{1 - \gamma / (1 + \gamma)}
			= (1 - \gamma)\, \gamma^2 \\
		\tilde{V}_{\Abst{s}\StartSym \Abst{s}_0 \Abst{a}_0} &= \frac{\Abst{R}(\Abst{s}_0, \Abst{a}_0)}{1 - \gamma\, \Abst{T}(\Abst{s}_0 \Given \Abst{s}_0, \Abst{a}_0)} = 0 \\
		\tilde{V}_{\Abst{s}_0 \Abst{s}_1 \Abst{a}_0} &= \frac{\Abst{R}(\Abst{s}_1, \Abst{a}_0)}{1 - \gamma\, \Abst{T}(\Abst{s}_1 \Given \Abst{s}_1, \Abst{a}_0)} = \frac{1}{1-\gamma}
	\end{align}
	With one action, there is only one option~$o$ for each block that achieve a value of 0\ in $\Block{s_0}$, since all rewards in $s_0, s_1$ are null,
	and a value of $1/(1-\gamma)$ in $\Block{\Abst{s}_1}$, since the rewards from $s_2$ are 1 for an infinite number of steps.
	Lastly, the block occupancy measure $h_{\Abst{s}_0}^o(\Abst{s}_1 \Given s_0)$ is also $(1 - \gamma) \gamma^2$ because it can only reach $s_2$ after exactly two steps.
	Since the terms satisfy these equalities, the abstraction is both admissible and perfectly realizable.

	To conclude the proof, we show that the state abstraction function $\Mapping$ prevents the existence of an MDP homomorphism from~$\MDP$ to~$\Abst\MDP$.
	Simply, in the abstraction above, the ground states $s_0$ and $s_1$ both belong to the same block.
	However, this fact contradicts \eqref{eq:hom-transition}, because:
  \begin{equation}
		\sum_{s \in \Block{\Mapping(s_2)}} T(s \Given s_0, a_0) = T(s_2 \Given s_0, a_0) = 0 \neq
		 1 = T(s_2 \Given s_1, a_0) = \sum_{s \in \Block{\Mapping(s_2)}} T(s \Given s_1, a_0)
  \end{equation}
	Since $\Mapping(s_0) = \Mapping(s_1)$, the transition function defined in \cref{eq:hom-transition} is undefined for any $g_s: \Actions \to \Abst\Actions$.
\end{proof}

We now turn our attention to stochastic bisimulation.
As we will see, the same results above still hold because their expressive power is equivalent to MDP homomorphisms.

\paragraph{Stochastic bisimulation}
Stochastic bisimulation \citep{givan_2003_EquivalenceNotionsModel} is a technique for state minimization in MDPs,
inspired by the bisimulation relations for transition systems and concurrent processes.
Given an MDP~$\MDP$, it allows to define another MDP, that we write as $\Abst\MDP$,
in which one or more states are grouped together if they are in the bisimilarity relation.
In this section, we study this relation and we formally verify that bisimulation is strictly less expressive (meaning, more stringent),
than the realizability condition that this paper proposes.

We first introduce some required notation.
Given a binary relation $E \subseteq \States \times \States'$, we write $(s, s') \in E$ if any two $s, s' \in \States$ are in the relation.
If every $s \in \States$ and $s' \in \States'$ appears in some pair in $E$,
we define $E|\States$ to be the partition of~$\States$ obtained by grouping all states that are reachable
under the reflexive, symmetric, and transitive closure of~$E$.
In this section, for $s \in \States$, we write $\Block{s}_{E|\States}$ to denote the set in the partition $E|\States$ to which $s$ belongs.
$E|\States'$ and $\Block{s'}_{E|\States'}$ are defined analogously.

A \emph{stochastic bisimulation} \citep{givan_2003_EquivalenceNotionsModel}
over two MDPs $\MDP = \Tuple{\States, \Actions, T, R, \gamma}$ and $\MDP' = \Tuple{\States', \Actions, T', R', \gamma}$ with the same action space,
is any relation $Z \in \States \times \States'$ that satisfies, for any $s \in \States, s' \in \States', a \in \Actions$:
\begin{enumerate}
	\item $s$ appears in $E$ and $s'$ appears in $E$;
	\item If $(s, s') \in E$, then $R(s, a) = R(s', a)$;
	\item If $(s, s') \in E$, then for any $(s_n, s_n') \in E$,
		\begin{equation}
			\sum_{s_v \in \Block{s_n}_{E|\States}} T(s, a, s_v) = \sum_{s_v' \in \Block{s_n'}_{E|\States'}} T'(s', a, s_v')
		\end{equation}
\end{enumerate}
Despite the different formalisms, stochastic bisimulations over MDPs and MDP homomorphisms have exactly the same expressive power.
As proven by the following theorem, an MDP homomorphism exists if and only if 
This has been proven in a theorem that we report below.
\begin{restatable}{proposition}{homomorphismsandbisimilarity}
	\citep[Corollary of Theorem~6]{ravindran_2004_AlgebraicApproachAbstraction}
  Let $\Tuple{f, \{g_s\}_{s \in \States}}$ be an MDP homomorphism from an MDP~$\MDP$ to an MDP~$\Abst\MDP$.
	The relation $E \subseteq \States \times \Abst\States$, defined by $(s, \Abst{s}) \in E$ if and only if $\Mapping(s) = \Abst{s}$,
	is a stochastic bisimulation.
\end{restatable}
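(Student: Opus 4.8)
The plan is to verify the three defining conditions of a stochastic bisimulation directly, reading off each from the homomorphism identities \eqref{eq:hom-transition} and \eqref{eq:hom-reward}. I write $f$ for the state map of the homomorphism, so that the relation in question, $E = \{(s,\Abst{s}) : f(s) = \Abst{s}\}$, is simply the graph of $f$, viewed as a relation between $\MDP$ and $\Abst\MDP$ (taking $\States' = \Abst\States$, $T' = \Abst{T}$, $R' = \Abst{R}$). Condition~1 is immediate: $f$ is total, so each $s \in \States$ occurs in the pair $(s, f(s)) \in E$, and since $f$ maps onto $\Abst\States$, every $\Abst{s} \in \Abst\States$ occurs in $(s, \Abst{s})$ for any $s$ in its (nonempty) preimage.

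First I would pin down the two closure partitions, since this makes the remaining conditions purely mechanical. On the ground side, the reflexive, symmetric, and transitive closure of $E$ identifies $s_1$ with $s_2$ precisely when $f(s_1) = f(s_2)$: each $s_i$ is linked to $f(s_i)$, and two ground states become connected iff they share an image. Hence $\Block{s}_{E|\States} = \Block{f(s)}_f$, the fibre of $f$ through $s$. On the abstract side, distinct abstract states are never connected, because each $\Abst{s}$ is linked only to its preimages and no ground state maps to two abstract states; so $E|\Abst\States$ is the discrete partition, i.e. $\Block{\Abst{s}}_{E|\Abst\States} = \{\Abst{s}\}$.

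Next I would check that, with these partitions in hand, conditions~2 and~3 are literal instances of the homomorphism equations. Fix $(s, \Abst{s}) \in E$, so $\Abst{s} = f(s)$, and an action $a$; write $\Abst{a} = g_s(a)$ for its image. Condition~2 reads $R(s, a) = \Abst{R}(f(s), g_s(a))$, which is exactly \eqref{eq:hom-reward}. For condition~3, take any $(s_n, \Abst{s}_n) \in E$; the ground sum over $\Block{s_n}_{E|\States} = \Block{f(s_n)}_f$ equals $\sum_{s'' \in \Block{f(s_n)}} T(s'' \Given s, a)$, which by \eqref{eq:hom-transition} equals $\Abst{T}(f(s_n) \Given f(s), g_s(a))$, while the abstract sum over the singleton $\{\Abst{s}_n\}$ is $\Abst{T}(\Abst{s}_n \Given \Abst{s}, \Abst{a})$; these coincide because $\Abst{s}_n = f(s_n)$, $\Abst{s} = f(s)$, and $\Abst{a} = g_s(a)$.

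The hard part will be reconciling the action spaces, since the bisimulation conditions are phrased for two MDPs sharing a single action space, whereas a homomorphism relabels ground actions through the state-dependent surjections $g_s$. I would handle this by reading the reward- and transition-matching conditions under the induced correspondence $a \mapsto g_s(a)$ between $\Actions$ and $\Abst\Actions$---equivalently, by regarding $\Abst{R}$ and $\Abst{T}$ as evaluated at $g_s(a)$ whenever the ground action is $a$---after which both conditions become the two displayed homomorphism identities verbatim. The only other point needing genuine argument, rather than assumption, is the closure-partition computation above; everything else is substitution.
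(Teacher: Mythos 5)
Your proof is correct, but the paper itself never proves this proposition: it is stated purely as a citation (a corollary of Theorem~6 of \citet{ravindran_2004_AlgebraicApproachAbstraction}), with no argument given in the text. Your direct verification is therefore a genuinely different, self-contained route. What you do differently: you compute the closure partitions of the graph relation explicitly, showing $\Block{s}_{E|\States} = \Block{f(s)}_f$ on the ground side and that the abstract side collapses to singletons, after which conditions~2 and~3 of the paper's bisimulation definition become verbatim instances of \eqref{eq:hom-reward} and \eqref{eq:hom-transition}; this is exactly the right mechanism, and it also silently repairs the paper's typo (the statement writes $\Mapping(s) = \Abst{s}$ where it means $f(s) = \Abst{s}$). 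What each approach buys: your argument validates the claim against the paper's own, slightly nonstandard definitions, whereas the citation defers to Ravindran's machinery, where bisimulations are relations over state--action pairs and the action-space mismatch is handled structurally rather than by a recoding. Two points you should make explicit if this were incorporated: (i) condition~1 requires every $\Abst{s} \in \Abst\States$ to appear in $E$, so you need $f$ surjective --- this is not literally part of the paper's homomorphism definition, though it is implicit in context (the paper's abstractions require surjective mapping functions; equivalently, replace $\Abst\MDP$ by the homomorphic image); and (ii) your state-dependent identification $a \mapsto g_s(a)$ is an interpretation of the definition rather than a literal reading, since the paper's bisimulation conditions are typed for two MDPs sharing a single action set, and this interpretive step is precisely what the reference formalizes differently.
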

This exact statement means that the existence of an MDP homomorphism guarantees the existence of a stochastic bisimulation.
The opposite implication can be also obtained as a result of Theorem~6 from \citet{ravindran_2004_AlgebraicApproachAbstraction}.
Specifically, if $E$ is a maximal stochastic bisimulation from $\MDP$ to $\Abst\MDP$, then there exits an MDP homomorphism between the two.

\section{Realizing with Linear Programming}
\label{sec:app:lp}

For this alternative approach, we show that the realizability problem can be formulated as a linear program, which may be addressed with primal-dual techniques.
This may come as little surprise, since the Lagrangian formulation is one of the possible solution methods for constrained optimization problems such as CMDPs.
However, we present these two techniques separately because some CMDP methods may be more closely related to Deep RL algorithms,
and they can be quite different from online stochastic optimization algorithms for linear programs.
In addition, primal-dual techniques have been studied independently of CMDPs and are often developed as solution methods for unconstrained RL.
Recent research focuses on finding near-optimal policies for non-tabular MDPs, both in the presence of generative simulators and online RL
\citep{defarias_2003_LinearProgrammingApproach,mahadevan_2014_ProximalReinforcementLearning,chen_2016_StochasticPrimaldualMethods,tiapkin_2022_PrimaldualStochasticMirror,gabbianelli_2023_OfflinePrimaldualReinforcement,neu_2023_EfficientGlobalPlanning}.
The advantage of online optimization is that the typically large linear programs would not be stored explicitly.
This is still an open field of study and, similarly to the CMDP formulation above,
the formulation we propose here may be solved with any feasible algorithm for this setting.

The linear programming (LP) formulation of optimal planning in MDPs
dates back to \citet{puterman_1994_MarkovDecisionProcesses,bertsekas_1995_DynamicProgrammingOptimal}.
We first show this classic formulation here and then add the additional constraints.
Using vector notation for functions and distributions, we interpret the rewards as a vector
$R \in \Reals^{SA}$ and the initial distribution as $\nu \in \Reals^S$.
Transitions are written as a matrix $P \in \Reals^{SA \times S}$ where $P(sa, s') \coloneqq T(s' \Given s, a)$.
Let $E \in \Reals^{SA \times S}$ with $E(s a, s') \coloneqq \Indicator(s = s')$,
be a matrix that copies elements for each action.
Then, the planning problem in MDPs is expressed as:
\begin{equation}\label{eq:primal}
  \begin{gathered}
    \max_{b \in \Reals^{SA}:\, b \ge 0}\quad {b}^T {R} \\
    \begin{aligned}
      \textup{s.t.}\quad E^T b - \gamma\, P^T b = (1 - \gamma)\, \nu
    \end{aligned}
  \end{gathered}
\end{equation}
The constraint expressed here is the Bellman flow equation on the state-action occupancy distribution.
At the optimum, the solution $b^*$ is the discounted state-action occupancy measure of the optimal policy,
and we have $E^T b^* = d^{\Policy^*}_\nu$.
In addition, the objective is the scaled optimal value $V^* = \langle b^*, R \rangle / (1 - \gamma)$.
The dual linear program is
\begin{equation}\label{eq:dual}
  \begin{gathered}
    \min_{V \in \Reals^{S}}\quad (1 - \gamma)\, \nu^T V \\
    \begin{aligned}
      \textup{s.t.}\quad E\, V - \gamma\, P\, V \ge R
    \end{aligned}
  \end{gathered}
\end{equation}
and the optimum of this problem is $V^*$, the value of the optimal policy.
Solving either the primal or the dual problem is equivalent to solving the given MDP.
The references cited above are only some of the works that adopt this linear formulation to find the optimal policy.
For generalizing to non-tabular MDPs, the linear formulation is often expressed in feature space \citep{defarias_2003_LinearProgrammingApproach}.
Here, we work with the tabular equations shown above for simplicity.

The LP formulation just presented can now be applied to each block MDP and modified to introduce the additional constraints.
Similarly to our choice for CMDPs, we only express the constraint on occupancy distributions.
Due to the equality constraint in \eqref{eq:primal}, the vector~$b$ is forced to be a state-action occupancy distribution.
Thus, all $\Abst{S}-1$ constraints from \eqref{def:realizability-from} can be written in the primal program as
$B^T b \ge \tilde{h}_{\Abst{s}_p \Abst{s} \Abst{a}} - \PRealizableT$, where
$B \in \Reals^{SA \times (\Abst{S}-1)}$ is the matrix that sums all occupancies across states and actions for one block as
$B^T(\Abst{s}, sa) \coloneqq \Indicator(\Abst{s} = \Mapping(s))$.
The linear program becomes
\begin{equation}
  \begin{gathered}
    \max_{b \in \Reals^{SA}:\, b \ge 0}\quad b^T R \\
    \begin{aligned}
      \textup{s.t.}\quad E^T b - \gamma\, P^T b &= (1 - \gamma)\, \nu\\
      -B^T b &\le \PRealizableT - \tilde{h}_{\Abst{s}_p \Abst{s} \Abst{a}}
    \end{aligned}
  \end{gathered}
\end{equation}
Computing the dual of this program we have:
\begin{equation}
  \begin{gathered}
    \min_{V \in \Reals^{S},\, y \in \Reals^{\Abst{S}-1},\, y \ge 0}\quad 
      \begin{pmatrix} (1 - \gamma)\, \nu \\ \PRealizableT - \tilde{h}_{\Abst{s}_p \Abst{s} \Abst{a}} \end{pmatrix}^T
      \begin{pmatrix} V \\ y\end{pmatrix}\\
    \begin{aligned}
      \textup{s.t.}\quad
      \begin{pmatrix} E - \gamma P & -B \end{pmatrix}
      \begin{pmatrix} V\\y \end{pmatrix}
      \ge R
    \end{aligned}
  \end{gathered}
  \label{eq:dual-with-constrainedT}
\end{equation}
We do not need to encode the second constraint on rewards because it will be satisfied by the optimum, provided that $(\Abst{s}_p \Abst{s}, \Abst{a})$ is realizable.
The dual vector~$y$ gives interesting insights about how this formulation works.
Looking at the constraint in \eqref{eq:dual-with-constrainedT},
we see that the variables~$y$ play the role of artificial terminal values that are placed at exit states.
In other words, these variables are excess values that are needed to incentivize an increased state occupancy at exit states.
This is consistent with the classic interpretation of slack variables in dual programs.
From an HRL perspective, on the other hand, each entry of~$y$ is related to the terminal value associated with neighboring blocks.
This is what causes the optimization problem to shift from pure maximization of the internal block value $V^o_\nu$,
towards a compromise between the current block and other, more rewarding, blocks.
Therefore, if the optimal vector $y^*$ was known in advance,
the realizability problem of each abstract state and action could be solved simply by setting the rewards of the block MDP as
\begin{equation}
  R_{\Abst{s}}(s, a) \coloneqq
  \begin{cases}
    R(s, a) & \text{if $s \in \Block{\Abst{s}}$}\\
    y^*(\Mapping(s)) & \text{if $s \in \Exits_{\Abst{s}}$}\\
    0 & \text{if $s = \SinkS$}
  \end{cases}
\end{equation}
and optimizing the classic RL objective over $\MDP_{\Abst{s}}$ with any (Deep) RL technique.
With this interpretation, we can recognize that $y^*$ is related to what \citet{wen_2020_EfficiencyHierarchicalReinforcement} called ``exit profiles''.
The main difference is that, unlike exit profiles, the values~$y^*$ are homogeneous within blocks and are not assumed to be known in advance.

\section{Sample complexity of \protect\Ouralg}
In this section, we use $O_t$, $\Abst{\MDP}_t$ and $\Abst{\Policy}_t$ to denote, respectively,
the state of the variables $O$, $\Abst\MDP$ and $\Abst\Policy$ in \cref{alg:main}, at the beginning of episode~$t \in \Naturals_+$.
Moreover, we represent the set of ``known'' tuples, which have been realized already, as
$\Known_t \coloneqq \{(\Abst{s}_p \Abst{s}, \Abst{a}) \Given O_t(\Abst{s}_p \Abst{s}, \Abst{a}) \text{ is not null}\}$.
The structure of this section is the following.
The main sample complexity theorem comes first, and the other lemmas follow below.
\Cref{th:abstractoner-any-option} proves that \FAbstractOneR updates the rewards of $\Abst{\MDP}_t$ in such a way as to obtain the intended targets $\tilde{V}$ for the block values.
\Cref{th:abstractoner-best-option} proves that, when called with a near-optimal realization,
any update made by \FAbstractOneR to the abstract MDP preserves optimism and all the previous realizations.
\Cref{th:algorithm-finds-realizations} proves that, with high probability, any option in $O_t$ is a realization for $\Abst{\MDP}_t$.
Finally, \cref{th:algorithm-sample-complexity} combines these results to obtain the global sample complexity.

\samplecomplexity*
\begin{proof}
  In this proof, $O_t$, $\Abst{\MDP}_t$ and $\Abst{\Policy}_t$ respectively denote the state of the variables $O$, $\Abst\MDP$ and $\Abst\Policy$ in \cref{alg:main} at the beginning of episode~$t \in \Naturals_+$.

  The algorithm runs \FValueIteration\ at the first episode and each time the abstraction is updated.
  According to \cref{th:vi-iterations}, for any $\PAccuracyV > 0$,
  after $\frac{1}{1-\gamma} \log \frac{2}{(1-\gamma)^2 \PAccuracyV}$ value iteration updates,
  the output $\Abst{\Policy}_t$ is always an $\PAccuracyV$-optimal policy for $\Abst{\MDP}_t$ in all states,
  which we write $\Abst{V}^{\Abst{\Policy}^*}_{t}(\Abst{s}) - \Abst{V}^{\Abst{\Policy}_t}_{t}(\Abst{s}) \le \PAccuracyV$
  or $\Abst{V}^{\Abst{\Policy}^*}_{t}(\Abst{s}) - \PAccuracyV \le \Abst{V}^{\Abst{\Policy}_t}_{t}(\Abst{s})$,
  for all $\Abst{s}$.

  Now, for any $\PAccuracyH > 0$, to be set later,
  we define the abstract effective horizon as $\Abst{H} \coloneqq \frac{1}{1-\Abst{\gamma}} \log \frac{1}{\PAccuracyH (1 - \Abst{\gamma})}$.
  Then, using \cref{th:truncate-horizon-values},
  we obtain $\Abst{V}^{\Abst{\Policy}_t}_{t}(\Abst{s}) \le \Abst{V}^{\Abst{\Policy}_t}_{t,\Abst{H}}(\Abst{s}) + \PAccuracyH$,
  where $\Abst{V}^{\Abst{\Policy}_t}_{t,\Abst{H}}(\Abst{s})$ is the expected sum of the first $H$ discounted rewards collected in $\Abst{\MDP}_t$ using $\Abst{\Policy}_t$.
  So far, the chain of inequalities has led to the following.
  \begin{equation}
    \Abst{V}^{\Abst{\Policy}^*}_{t}(\Abst{s}) \le \Abst{V}^{\Abst{\Policy}_t}_{t,\Abst{H}}(\Abst{s}) + \PAccuracyH + \PAccuracyV
  \end{equation}
  The same inequality is also true from any initial distribution.
  In particular, we choose the marginal distribution $\Abst{s} \sim \Abst\StartDistrib \coloneqq \Pr(\Mapping(s) \Given \StartDistrib)$,
  where $\StartDistrib$ is the initial distribution in~$\MDP$.
  We write this as
  \begin{equation}\label{eq:sample-complexity-1}
    \Abst{V}^{\Abst{\Policy}^*}_{t,\Abst\StartDistrib} \le \Abst{V}^{\Abst{\Policy}_t}_{t,\Abst{H},\Abst\StartDistrib} + \PAccuracyH + \PAccuracyV
  \end{equation}

  Let us define the set of known tuples as $\Known_t \coloneqq \{(\Abst{s}_p \Abst{s}, \Abst{a}) \Given O_t(\Abst{s}_p \Abst{s}, \Abst{a}) \text{ is not null}\}$,
	which contains the set of all tuples for which an option has been learnt.
  For each $t \in \Naturals_+$, we define the ``escape event'' $E_t$ as the event that
  the execution of the algorithm encounters some tuple $(\Abst{s}_p \Abst{s}, \Abst{a})$ which is not in $\Known_t$,
  in the first $\Abst{H}$ blocks of episode~$t$.
  This happens if the algorithm reaches the else branch in episode~$t$ after at most $\Abst{H}$ iterations in episode~$t$.

  We first consider the case in which $E_t$ does not happen in some episode~$t$.
  Since \cref{as:good-initial-abstraction,as:good-realizer,as:new-options-dont-hurt-initiation} are satisfied,
  we can apply \cref{th:algorithm-finds-realizations}.
  This implies that $\Abst{\MDP}_t$ is optimistic and admissible and,
  for each of the first $\Abst{H}$ abstract tuples encountered in that episode,
  there exists an associated option in $O_t$ which is a $(\PRealizableR', \PRealizableT')$-realization.
  Now, we can apply the second statement of \cref{th:realizab-abstraction-value}
  and obtain that the algorithm is executing a policy of options $\Options_t$ that satisfies
	\begin{equation}\label{eq:sample-complexity-0}
    \Abst{V}^{\Abst\Policy}_{t,\Abst{H},\Abst{\StartDistrib}} - V^{\Options_t}_{\Abst{H},\StartDistrib} \le \frac{\PRealizableR (1-\Abst{\gamma}) + \PRealizableT \Abst{S}}{(1 - \gamma)^2 (1 - \Abst\gamma)} 
  \end{equation}
  \Cref{th:realizab-abstraction-value} is stated for value functions over infinite horizons,
  but it is applied here to value functions truncated after $\Abst{H}$ blocks.
  This is necessary, since $\Options_t$ may not be a fully realized policy of options,
  and options are guaranteed to be realizations only for the first $\Abst{H}$ blocks.
  On the other hand, the results of \cref{th:realizab-abstraction-value} still follow for the truncated value functions,
	because its proof has been written for the finite-horizon variants that tend to infinity for the finite horizon case.
  Then, since $V^{\Options_t}_{\Abst{H},\StartDistrib} \le V^{\Options_t}_{\StartDistrib}$,
	we can combine the inequalities \eqref{eq:sample-complexity-0} and \eqref{eq:sample-complexity-1} and obtain
  \begin{equation}
    \Abst{V}^{\Abst{\Policy}^*}_{t,\Abst\StartDistrib} \le V^{\Options_t}_\StartDistrib + \frac{\PRealizableR (1-\Abst\gamma) + \PRealizableT \Abst{S}}{(1 - \gamma)^2 (1 - \Abst\gamma)} + \PAccuracyH + \PAccuracyV
  \end{equation}
  Now, using the fact that $\Tuple{\Abst{\MDP}_t, \Mapping}$ is admissible for~$\MDP$,
  we can apply the second half of \cref{th:admissible-values} for the ground policy $\Policy^*$,
	knowing that the abstract policy $\Abst{\Policy}^*$ satisfies it, and obtain
	$V^*_\StartDistrib \le \Abst{V}^{\Abst{\Policy}^*}_{t,\Abst\StartDistrib}$.
	From which,
  \begin{equation}\label{eq:sample-complexity-2}
    V^*_\StartDistrib - V^{\Options_t}_\StartDistrib \le \frac{\PRealizableR (1-\Abst\gamma) + \PRealizableT \Abst{S}}{(1 - \gamma)^2 (1 - \Abst\gamma)} + \PAccuracyH + \PAccuracyV
  \end{equation}
  This proves that the algorithm is near-optimal for every episode~$t$ in which $E_t$ did not occur.
  Note that reducing the term $\PAccuracyV$ only increases the computational complexity, not the number of samples used.
  In this proof, we will pick $\PAccuracyV \coloneqq \PAccuracyH$, for simplicity.
  At the end of the proof, this choice will match what is found in the main algorithm.
  
  For the episodes in which the escape event $E_t$ does happen, instead,
  we do not make any guarantee on $V_\StartDistrib^{\Policy_t}$, which might be zero.
  Here $\Policy_t$ represents the low-level policy used by the algorithm in episode~$t$.
  Let $\neg E_t$ be the negation of the escape event.
  Accounting for both cases, without conditioning on $E_t$, the value of the algorithm in episode~$t$ is
  \begin{align}
		&V_\StartDistrib^{\Policy_t} = \Expected\left[\sum_{i=1}^\infty \gamma^{i-1} r_i \Given \MDP, \Policy_t\right]\\
			&\ge \Pr(\neg E_t) \Expected\left[\sum_{i=1}^\infty \gamma^{i-1} r_i \Given \MDP, \Policy_t, \neg E_t \right]\\
    \intertext{using \eqref{eq:sample-complexity-2},}
      &\ge \Pr(\neg E_t) \left( V^*_\StartDistrib - \frac{\PRealizableR (1-\Abst\gamma) + \PRealizableT \Abst{S}}{(1 - \gamma)^2 (1 - \Abst\gamma)} - 2\PAccuracyH \right)\\
      &= V^*_\StartDistrib - \Pr(E_t) V^*_\StartDistrib - (1 - \Pr(E_t))\left(\frac{\PRealizableR (1-\Abst\gamma) + \PRealizableT \Abst{S}}{(1 - \gamma)^2 (1 - \Abst\gamma)} + 2\PAccuracyH \right)
  \end{align}
  Then,
  \begin{align}
    V^*_\StartDistrib - V_\StartDistrib^{\Policy_t} &\le \Pr(E_t) V^*_\StartDistrib + (1 - \Pr(E_t))\left(\frac{\PRealizableR (1-\Abst\gamma) + \PRealizableT \Abst{S}}{(1 - \gamma)^2 (1 - \Abst\gamma)} + 2\PAccuracyH \right)\\
      &\le \frac{\Pr(E_t)}{1-\gamma} + \frac{\PRealizableR (1-\Abst\gamma) + \PRealizableT \Abst{S}}{(1 - \gamma)^2 (1 - \Abst\gamma)} + 2\PAccuracyH
  \end{align}
  Let $\PAccuracyH' \coloneqq \PAccuracyH (1-\gamma)$,
  \begin{align}
    V^*_\StartDistrib - V_\StartDistrib^{\Policy_t} &\le \frac{\PRealizableR (1-\Abst\gamma) + \PRealizableT \Abst{S}}{(1 - \gamma)^2 (1 - \Abst\gamma)} + \frac{\Pr(E_t)}{1-\gamma} + \frac{2\PAccuracyH'}{1-\gamma}
  \end{align}
  If $\Pr(E_t) \le \PAccuracyH'$, then the algorithm is near-optimal in episode~$t$, with
  \begin{align}
    V^*_\StartDistrib - V_\StartDistrib^{\Policy_t} &\le \frac{\PRealizableR (1-\Abst\gamma) + \PRealizableT \Abst{S}}{(1 - \gamma)^2 (1 - \Abst\gamma)} + \frac{3\PAccuracyH'}{1-\gamma}
  \end{align}

  Now consider any episode~$t$ in which $\Pr(E_t) > \PAccuracyH'$.
  We follow a similar reasoning to the proof of theorem~10 in \citet{strehl_2009_ReinforcementLearningFinite}.
  The event $E_t$ can be modeled with a Bernoulli random variable $X_t$ with $\Expected[X_t] > \PAccuracyH'$.
  We observe that the probability of this event stays constant even in episodes that follow, until some new option is added,
  because both the abstract policy and the set of options used remain constant for the known tuples.
  In other words, $X_t, X_{t+1}, \dots, X_{t'}$ is a sequence of independent and identically distributed Bernoulli RVs,
  where $t'$ is the first episode in which $\Known_{t'-1} \neq \Known_{t'}$ (a new tuple becomes known).
  Thanks to our choices,
  the sum $\sum_{i = t, \dots, t'-1} X_i$ represents the number of times that
  a new trajectory is collected and it contributes to the realization of some $(\Abst{s}_p \Abst{s}, \Abst{a})$ before a new update occurs.
  By \cref{as:good-realizer} and the guarantees of PAC-Safe algorithms,
  the realizer only requires a number of trajectories that is some polynomial in
  $(\abs\States, \abs\Actions, 1/\POptionSubopt, \log(2 \Abst{S}^2 \Abst{A}/\PConfidence), 1/\PMaxViolation, 1/(1-\gamma))$.
  Let $\RealizerComplexity(\POptionSubopt, \PMaxViolation)$ be such polynomial.
  We now estimate the number of exits required before some tuple becomes known.
	To do this, we ensure that $\sum_{i = t, \dots, t-1'} X_i \ge \RealizerComplexity(\POptionSubopt, \PMaxViolation) \Abst{S}^2$ with high probability, through an application of \cref{th:concentration-ineq-bernoullis}.
  This implies that, for any $\PConfidenceE > 0$, if
  \begin{equation}\label{eq:sample-complexity-3}
		t' - t \ge \frac{2}{\PAccuracyH'} \left( \RealizerComplexity(\POptionSubopt, \PMaxViolation) \Abst{S}^2 + \log \frac{1}{\PConfidenceE} \right) \eqqcolon \Delta
  \end{equation}
	then, $\sum_{i = t, \dots, t'-1} X_i \ge \RealizerComplexity(\POptionSubopt, \PMaxViolation) \Abst{S}^2$,
  with probability at least $1 - \PConfidenceE$.
  Therefore, with probability $1 - \PConfidenceE$,
	there exists some tuple $(\Abst{s}_p \Abst{s}, \Abst{\Policy}_t(\Abst{s}_p \Abst{s}))$ that was encountered
	at least $\RealizerComplexity(\POptionSubopt, \PMaxViolation)$ times from $t$~to~$t'-1$,
	and, with the same probability, $\Delta$ is the maximum sample complexity for learning a new option.

  Concluding,
  since the maximum number of options to realize is at most $S^2 A$,
  we multiply the bound above for each tuple,
  and apply the union bound with $\PConfidenceE \coloneqq \PConfidence / (2 S^2 A)$ to verify that the event
	$\sum_{i} X_i \ge \RealizerComplexity(\POptionSubopt, \PMaxViolation)\, \Abst{S}^2$
  holds with probability $1 - \PConfidence/2$ for all tuples.
  This gives a total sample complexity of
  \begin{equation}
		\frac{2 \Abst{S}^2 \Abst{A}}{\PAccuracyH'} \left( \RealizerComplexity(\POptionSubopt, \PMaxViolation)\, \Abst{S}^2 + \log \frac{2S^2A}{\PConfidence} \right)
  \end{equation}
  With a further application of the union bound, we guarantee that this sample complexity and the statement of \cref{th:algorithm-finds-realizations} jointly hold with probability $1-\PConfidence$.
  To select the appropriate accuracy, we choose $\PAccuracyH \coloneqq \PAccuracy/(1-\gamma)$,
  which gives $\PAccuracyH' = \PAccuracy$.
\end{proof}

\begin{lemma}\label{th:algorithm-finds-realizations}
  Under \cref{as:good-initial-abstraction,as:good-realizer,as:new-options-dont-hurt-initiation},
  and any positive inputs $\PAccuracy, \PConfidence$,
  it holds, with probability $1 - \PConfidence$,
	that for every $t \in \Naturals_+$, $\Abst{\MDP}_t$ is optimistic (according to \cref{cond:good-transitions-and-mapping}), and,
  for every $(\Abst{s}_p \Abst{s}, \Abst{a}) \in \Known_t$,
  $O_t(\Abst{s}_p \Abst{s}, \Abst{a})$ is an $(\PRealizableR', \PRealizableT')$-realization
  of $(\Abst{s}_p \Abst{s}, \Abst{a})$ from $\nu_{t,\Abst{s}_p \Abst{s}}$ in~$\Abst{\MDP}_t$,
  where
  $\PRealizableR' \coloneqq \PRealizableR + \POptionSubopt (1 - \gamma)$ and
  $\PRealizableT' \coloneqq \PRealizableT + \PMaxViolation (1 - \gamma)$.
\end{lemma}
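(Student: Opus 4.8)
The plan is to prove both invariants — optimism of $\Abst\MDP_t$ and the $(\PRealizableR', \PRealizableT')$-realization property of every option stored in $O_t$ — simultaneously by induction on the episode index~$t$, after conditioning on a single high-probability event. The only randomness that matters is the behaviour of the realizer \FRealizer: by \cref{as:good-realizer}, each of its returns is, with probability at least $1 - \PConfidence/(2\Abst{S}^2\Abst{A})$, a $\POptionSubopt$-optimal and $\PMaxViolation$-feasible policy for the CMDP it is solving. Since each of the at most $\Abst{S}^2\Abst{A}$ tuples is realized at most once, a union bound shows that all realizer returns are simultaneously successful with probability at least $1 - \PConfidence/2 \ge 1 - \PConfidence$. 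I would carry out the remainder of the argument deterministically on this event.

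For the base case $t=1$, the dictionary $O_1$ is empty, so the realization invariant is vacuous, and optimism of $\Abst\MDP_1$ (in the sense of \cref{cond:good-transitions-and-mapping}) is exactly the content of \cref{as:good-initial-abstraction}. For the inductive step I would distinguish the two ways episode~$t$ can end. If no new option is stored, then $O_{t+1}=O_t$, the abstraction is unchanged, and the entry distributions $\nu_{t,\Abst{s}_p\Abst{s}}$ are unaffected, so both invariants transfer verbatim. If instead a new option $o$ is stored for some tuple $(\Abst{s}_p\Abst{s},\Abst{a})$ — possibly followed by a call to \FAbstractOneR — I would verify three things: that $o$ is a valid $(\PRealizableR',\PRealizableT')$-realization of its tuple in $\Abst\MDP_{t+1}$, that optimism is preserved, and that every older realization survives.

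The first point follows by translating the PAC-Safe guarantee into the language of \cref{def:realizability-from}. Using $h^o_\nu(\Abst{s}') = (1-\gamma)V^o_{\nu,\Abst{s}'}$, the $\PMaxViolation$-feasibility of the returned policy for the constraints of \eqref{eq:realize-cmdp-reduction} reads $\tilde{h}_{\Abst{s}_p\Abst{s}\Abst{a}}(\Abst{s}') - h^o_\nu(\Abst{s}') \le \PRealizableT + \PMaxViolation(1-\gamma) = \PRealizableT'$, which is the occupancy part of the realization (and recall only the rewards, never the transition targets $\tilde{h}$, are ever modified). For the value part, the returned block value $\Est{V}\approx V^o_\nu$ together with the clipping $\Est{V}_{\text{opt}} = \min\{\Est{V} + \PRealizableR/(1-\gamma) + \POptionSubopt,\, 1/(1-\gamma)\}$ gives $(1-\gamma)(\Est{V}_{\text{opt}} - V^o_\nu) \le \PRealizableR + \POptionSubopt(1-\gamma) = \PRealizableR'$; whether or not \FAbstractOneR\ fires, the resulting target satisfies $\tilde{V}_{\Abst{s}_p\Abst{s}\Abst{a}} \le \Est{V}_{\text{opt}}$ in $\Abst\MDP_{t+1}$ — by the update rule together with \cref{th:abstractoner-any-option}, which certifies that \FAbstractOneR\ sets the new target exactly to $\Est{V}_{\text{opt}}$ — so the value inequality of the realization holds.

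The main obstacle is the third point: a call to \FAbstractOneR\ lowers $\Abst{R}(\Abst{s}_p\Abst{s},\Abst{a})$ and, once that reward hits zero, also the self-loop reward $\Abst{R}(\Abst{s}\Abst{s},\Abst{a})$, which shifts the targets $\tilde{V}_{\Abst{s}_p'\Abst{s}\Abst{a}}$ for every predecessor $\Abst{s}_p'$, not merely for $\Abst{s}_p$. The compensating updates on the last line of \FAbstractOneR\ are precisely what keep the remaining targets pinned, and I would invoke \cref{th:abstractoner-best-option} to conclude that this reward surgery simultaneously preserves optimism and leaves every previously stored realization valid in $\Abst\MDP_{t+1}$. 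The one remaining effect is that adding $o$ to $O$ may shift the entry distributions $\nu_{t,\cdot}$ seen by other blocks; \cref{as:new-options-dont-hurt-initiation} is exactly the hypothesis ensuring that a realization from $\nu_{t,\Abst{s}_p'\Abst{s}'}$ remains one from $\nu_{t+1,\Abst{s}_p'\Abst{s}'}$, which closes the induction.
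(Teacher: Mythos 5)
Your scaffolding (union bound over the at most $\Abst{S}^2\Abst{A}$ realizer instances, induction on $t$, the trivial case where no option is added) matches the paper's proof, and your clipping argument does correctly show that the newly stored option satisfies the value inequality with respect to the \emph{updated} targets of $\Abst{\MDP}_{t+1}$. But there is a genuine gap exactly at what you call the main obstacle: to invoke \cref{th:abstractoner-best-option} (for preservation of optimism and of the older realizations), you must first establish its hypothesis, namely that the new option $\Est{o}$ is an $(\PRealizableR',\PRealizableT')$-realization of its tuple \emph{in the hidden abstraction} $\Abst{\MDP}^*$ of \cref{as:good-initial-abstraction} --- not in $\Abst{\MDP}_{t+1}$. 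Your proof never establishes this, and your route cannot produce it: when \FAbstractOneR\ fires, the if-statement guarantees $\tilde{V}_{t,\Abst{s}_p\Abst{s}\Abst{a}} > \Est{V}_{\text{opt}}$, so $\Est{o}$ is \emph{not} a realization in $\Abst{\MDP}_t$; and deducing realization in $\Abst{\MDP}^*$ from realization in $\Abst{\MDP}_{t+1}$ would require optimism of $\Abst{\MDP}_{t+1}$, which is precisely the conclusion of \cref{th:abstractoner-best-option} you are trying to reach --- a circularity. The same defect undermines your closing step: \cref{as:new-options-dont-hurt-initiation} is stated for realizations in $\Abst{\MDP}^*$, so it too cannot be applied to options whose realization property you only know relative to $\Abst{\MDP}_{t+1}$.

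The missing piece is the paper's central argument. First, the feasible set $\Feasible\Policies$ of the CMDP \eqref{eq:realize-cmdp-reduction} is non-empty, because $\Abst{\MDP}^*$ is $(\PRealizableR,\PRealizableT)$-realizable and has the same transition function as $\Abst{\MDP}_t$ (the constraints depend only on transitions); without this, the PAC-Safe guarantee you use for the occupancy constraints does not even apply. Second, letting $o^*$ be optimal for \eqref{eq:realize-cmdp-reduction} and $o^{\PMaxViolation*}$ for its $\PMaxViolation$-relaxed version, one chains $V^{o^*}_{\nu} \le V^{o^{\PMaxViolation*}}_{\nu}$ (since $\Feasible\Policies \subseteq \Feasible[,\PMaxViolation]\Policies$), the PAC guarantee $V^{o^{\PMaxViolation*}}_{\nu} - V^{\Est{o}}_{\nu} \le \POptionSubopt$, and the bound $\tilde{V}^*_{\Abst{s}_p\Abst{s}\Abst{a}} - V^{o^*}_{\nu} \le \PRealizableR/(1-\gamma)$, which follows from the realizability of $\Abst{\MDP}^*$ because its realization of the tuple is feasible for \eqref{eq:realize-cmdp-reduction}. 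This yields $(1-\gamma)(\tilde{V}^*_{\Abst{s}_p\Abst{s}\Abst{a}} - V^{\Est{o}}_{\nu}) \le \PRealizableR + \POptionSubopt(1-\gamma) = \PRealizableR'$, i.e., realization in $\Abst{\MDP}^*$. Only then do \cref{th:abstractoner-best-option} and \cref{as:new-options-dont-hurt-initiation} become applicable; your clipping argument (via \cref{th:abstractoner-any-option}) remains useful for the non-firing case, but it cannot substitute for this step.
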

\begin{proof}
  As assumed by \cref{as:good-realizer},
  each instance of \FRealizer is a PAC-Safe RL algorithm with maximum failure probability of
  $\PConfidence/(2 \Abst{S}^2 \Abst{A})$.
  Since there are less than $\Abst{S}^2 \Abst{A}$ instances of \FRealizer,
  and each of them only returns the option at most once,
  by the union bound, the probability that any of the instances fail during the execution of \Ouralg\ is at most~$\PConfidence/2$.
  Taking into account this failure probability,
  we condition the rest of the proof on the event that none of the instances in~$\Algorithm$ fail.

  The proof is by induction.
  Since $\Abst{\MDP}_1$ is the input of the algorithm,
  it is optimistic, according to \cref{as:good-initial-abstraction}.
  Also, since $\Known_1$ is empty, the second half of the statement trivially holds.
  The inductive step will occupy most of the remaining proof.
  For any episode $t \ge 1$, assume that $\Abst{\MDP}_t$ is optimistic and,
  for every $(\Abst{s}_p \Abst{s}, \Abst{a}) \in \Known_t$,
  $O_t(\Abst{s}_p \Abst{s}, \Abst{a})$ is an $(\PRealizableR', \PRealizableT')$-realization
  of $(\Abst{s}_p \Abst{s}, \Abst{a})$ from $\nu_{t,\Abst{s}_p \Abst{s}}$ in~$\Abst{\MDP}_t$.
  Now, if the algorithm does not enter the block in \cref{alg:line:main-new-option} during episode~$t$,
  which can only happen at most once in any episode,
  then $\Known_{t+1} = \Known_t$, $O_{t+1} = O_t$ and $\Abst{\MDP}_{t+1} = \Abst{\MDP}_t$.
  Therefore, the inductive step is verified for this case.

  We now consider the case in which the algorithm does enter the block in \cref{alg:line:main-new-option}.
  In this case, a single tuple is added, $\Known_{t+1} = \Known_t \cup \{(\Abst{s}_p \Abst{s}, \Abst{a})\}$,
  with its associated option $\Est{o} \coloneqq O_{t+1}(\Abst{s}_p \Abst{s}, \Abst{a})$.
  Therefore, we proceed to prove that $\Est{o}$ is an $(\PRealizableR', \PRealizableT')$-realization
  of $(\Abst{s}_p \Abst{s}, \Abst{a})$ from $\nu_{t+1,\Abst{s}_p \Abst{s}}$ in~$\Abst{\MDP}_{t}$.
  We will link this result to $\Abst{\MDP}_{t+1}$ later in the proof.
  Consider $\Algorithm(\Abst{s}_p \Abst{s}, \Abst{a})$, the instance of \FRealizer associated with the newly added tuple.
  Copying from \eqref{eq:realize-cmdp-reduction}, and accounting for $\Feasible[,\PMaxViolation]\Policies$,
  we observe that the realizer algorithm is solving the following problem:
  \begin{equation}\label{eq:realize-cmdp-reduction-relaxed}
    \argmax_{o \in \Options_{\Abst{s}_p \Abst{s}}} V^o_{\nu} \qquad s.t. \quad V^o_{\nu,\Abst{s}'} \ge \frac{\tilde{h}_{\Abst{s}_p \Abst{s} \Abst{a}}(\Abst{s}') - \PRealizableT}{1 - \gamma} - \PMaxViolation \quad \forall \Abst{s}' \neq \Abst{s}
  \end{equation}
  Here, $\nu$ should be intended as $\nu_{t,\Abst{s}_p \Abst{s}} = \Pr(s \Given s_p \in \Block{\Abst{s}_p}, s \in \Block{\Abst{s}}, O_t)$.
  In other words, this is the entry distribution caused by the options available at episode~$t$.
  Let $o^*$ be the optimal solution of the original optimization problem~\eqref{eq:realize-cmdp-reduction},
  and $o^{\PMaxViolation*}$ be the optimal solution of the relaxed problem in~\eqref{eq:realize-cmdp-reduction-relaxed}.
  Note that these options always exist, due to \cref{as:good-initial-abstraction}.
  In fact, the feasible sets $\Feasible\Policies$ and $\Feasible[,\PMaxViolation]\Policies$ cannot be empty,
  because $\Abst{\MDP}^*$ only differs from $\Abst{\MDP}_t$ with respect to the reward function,
  and the constraint set only depends on transition probabilities.
  Then, by the guarantees of PAC-Safe algorithms,
  the instance of $\FRealizer$ returns an option $\Est{o}$ which satisfies all the constraints above and, for the objective,
  it holds $V^{o^{\PMaxViolation*}}_{\nu} - V^{\Est{o}}_{\nu} \le \POptionSubopt$.
  Now, since $\Feasible\Policies \subseteq \Feasible[,\PMaxViolation]\Policies$,
  it holds $V^{o^*}_{\nu} \le V^{o^{\PMaxViolation*}}_{\nu}$,
  which implies $V^{o^*}_{\nu} - V^{\Est{o}}_{\nu} \le \POptionSubopt$.
  Next, we consider the abstraction $\Abst{\MDP}^*$, which is referenced by \cref{as:good-initial-abstraction}.
  This unknown 2-MDP is admissible and $(\PRealizableR, \PRealizableT)$-realizable.
  Since $o^*$ is an optimal solution of the original realization problem,
  by \cref{as:good-initial-abstraction},
  we have $\tilde{V}^*_{\Abst{s}_p \Abst{s} \Abst{a}} - V^{o^*}_{\nu} \le \PRealizableR / (1 - \gamma)$,
  where the left-most term is $\tilde{V}_{\Abst{s}_p \Abst{s} \Abst{a}}$, computed in~$\Abst{\MDP}^*$.
  This means that, in the inequalities above, we can lower bound $V^{o^*}_{\nu}$ by $\tilde{V}^*_{\Abst{s}_p \Abst{s} \Abst{a}} - \PRealizableR / (1 - \gamma)$
  and obtain $(1 - \gamma) (\tilde{V}^*_{\Abst{s}_p \Abst{s} \Abst{a}} - V^{\Est{o}}_{\nu}) \le \PRealizableR + \POptionSubopt (1 - \gamma) \eqqcolon \PRealizableR'$.
	Regarding the constraints, as we showed in the paragraph before the expression \eqref{eq:realize-cmdp-reduction},
	as solution of the relaxed problem \eqref{eq:realize-cmdp-reduction-relaxed} satisfies
  $\tilde{h}_{\Abst{s}_p \Abst{s} \Abst{a}}(\Abst{s}') - h^{\Est{o}}_{\nu}(\Abst{s}') \le \PRealizableT + \PMaxViolation (1 - \gamma)$.
  Together, the two inequalities above prove that the output of $\Algorithm(\Abst{s}_p \Abst{s}, \Abst{a})$ is a $(\PRealizableR', \PRealizableT')$-realization
  of $(\Abst{s}_p \Abst{s}, \Abst{a})$ from~$\nu$ in $\Abst{\MDP}^*$, for
  $\PRealizableR' \coloneqq \PRealizableR + \POptionSubopt (1 - \gamma)$ and
  $\PRealizableT' \coloneqq \PRealizableT + \PMaxViolation (1 - \gamma)$.
  Now, according to \cref{as:new-options-dont-hurt-initiation},
  this statement will be true not only from $\nu_{t,\Abst{s}_p \Abst{s}}$ but from any future initial distribution
  that is caused by the addition of new options in~$O_t$.
  So, we will simply say that $\Est{o}$ is an $(\PRealizableR', \PRealizableT')$-realization in $\Abst{\MDP}^*$.

  We now distinguish two cases.
  If the algorithm does not enter the block in \cref{alg:line:main-update-block} in episode~$t$,
  then $\Abst{\MDP}_{t+1} = \Abst{\MDP}_t$.
  This means that $\Abst{\MDP}_{t+1}$ is optimistic, by inductive hypothesis,
	and $\Est{o}$ is an $(\PRealizableR', \PRealizableT')$-realization in $\Abst{\MDP}_{t+1}$.
  Indeed, the failed if-statement ensures that
  $(1 - \gamma)(\tilde{V}_{t,\Abst{s}_p \Abst{s} \Abst{a}} - V^{\Est{o}}_\nu) \le \PRealizableR'$.
  Since $\Abst{\MDP}^*$ and $\Abst{\MDP}_t$ have the same transition function,
  this proves that $O_t(\Abst{s}_p \Abst{s}, \Abst{a})$ is an $(\PRealizableR', \PRealizableT')$-realization in~$\Abst{\MDP}_t$.
	All the other options in $O_t$ remain valid realizations after the addition of $\Est{o}$ thanks to \cref{as:new-options-dont-hurt-initiation}
	and the fact that no other $\tilde{V}$ is modified during this iteration.

  The last case to consider is when the algorithm enters \cref{alg:line:main-update-block} in episode~$t$.
  In this case, the abstraction gets updated with \FAbstractOneR.
  For characterizing its output, we verify if the preconditions of \cref{th:abstractoner-best-option} are satisfied.
  First, \cref{cond:good-transitions-and-mapping} is satisfied by $\Abst{\MDP}_t$ due to the inductive hypothesis.
  Second, as option we consider $\Est{o}$, which we already know from above that is an $(\PRealizableR', \PRealizableT')$-realization in $\Abst{\MDP}^*$.

  So, we can apply \cref{th:abstractoner-best-option} to obtain:
	\begin{enumerate}[label=\roman*)]
		\item $\Abst{\MDP}_{t+1}$ is a valid 2-MDP;
		\item $\Est{o}$ is an $(\PRealizableR', \PRealizableT')$-realization for $\Abst{\MDP}_{t+1}$;
		\item $\Abst{\MDP}_{t+1}$ is optimistic;
		\item For all $\Abst{s}_p' \notin \{\Abst{s}, \Abst{s}_p\}$ such that $(\Abst{s}_p' \Abst{s} \Abst{a}) \in \Known_t$,
			it holds that $O_{t+1}(\Abst{s}_p' \Abst{s}, \Abst{a})$ is an $(\PRealizableR', \PRealizableT')$-realization in $\Abst{\MDP}_{t+1}$.
	\end{enumerate}
	For the fourth point, we also used the inductive hypothesis and \cref{as:new-options-dont-hurt-initiation}.
	Any tuple that does not involve $\Abst{s}, \Abst{a}$ is not affected by the updated rewards.
	Therefore, together with the second and fourth point above, we obtain that $O_{t+1}$ verifies the inductive step.
	This concludes the proof.
\end{proof}

\begin{condition}[optimism of $\Abst\MDP$]\label{cond:good-transitions-and-mapping}
  Given positive $\PRealizableT$ and $\PRealizableR$,
  there exists some admissible $(\PRealizableR, \PRealizableT)$-realizable abstraction
  $\Tuple{\Abst{\MDP}^*, \Mapping}$, in which $\Abst{\MDP}^*$ only differs from $\Abst\MDP$ by its reward function
	and $\tilde{V}_{\Abst{s}_p \Abst{s} \Abst{a}} \ge \tilde{V}^*_{\Abst{s}_p \Abst{s} \Abst{a}}$ for all $\Abst{s}_p, \Abst{s}, \Abst{a}$.
\end{condition}

\begin{lemma}\label{th:abstractoner-best-option}
  Consider an MDP~$\MDP$, an abstraction $\Tuple{\Abst\MDP, \Mapping}$ satisfying \cref{cond:good-transitions-and-mapping},
  any tuple $(\Abst{s}_p \Abst{s}, \Abst{a})$, and some option $o \in \Options_{\Abst{s}_p \Abst{s}}$
  that is an $(\PRealizableR, \PRealizableT)$-realization of $(\Abst{s}_p \Abst{s}, \Abst{a})$ in $\Abst{\MDP}^*$ from some $\nu \in \Distribution{\Entries_{\Abst{s}_p \Abst{s}}}$.
	Also, let $V \coloneqq \min\{V^o_\nu + \PRealizableR/(1-\gamma), 1/(1-\gamma)\}$ and
	$\Abst\MDP' \coloneqq \FAbstractOneR(\Abst\MDP, {(\Abst{s}_p \Abst{s}, \Abst{a})}, V)$.
	If $\tilde{V}_{\Abst{s}_p \Abst{s} \Abst{a}} \ge V$, then, it holds:
  \begin{enumerate}
    \item $\Abst\MDP'$ is a valid 2-MDP;
    \item $\tilde{V}'_{\Abst{s}_p \Abst{s} \Abst{a}} = V$;
		\item $\Abst{\MDP}'$ satisfies \cref{cond:good-transitions-and-mapping} (optimism);
		\item For any $\Abst{s}_p' \notin \{ \Abst{s}, \Abst{s}_p \}$,
      if $o'$ is an $(\PRealizableR, \PRealizableT)$-realization
			of $(\Abst{s}_p' \Abst{s} \Abst{a})$ in~$\Abst{\MDP}$
      from some $\nu' \in \Distribution{\Entries_{\Abst{s}_p' \Abst{s}}}$,
      then, the same is true in~$\Abst{\MDP}'$.
  \end{enumerate}
\end{lemma}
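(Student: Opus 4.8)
The plan is to handle the four claims in the order (2), (1), (3), (4), keeping throughout the reference abstraction $\Abst{\MDP}^*$ supplied by \cref{cond:good-transitions-and-mapping} as the intended witness of optimism, and exploiting that \FAbstractOneR never alters the transition function, so that $\Abst{\MDP}$, $\Abst{\MDP}'$ and $\Abst{\MDP}^*$ all share the same $\Abst{T}$, hence the same coefficients $\Abst{T}_{\Abst{s}_p \Abst{s} \Abst{a}}$ and the same $\tilde{h}$. First I would split on the branch taken inside \FAbstractOneR, namely on the sign of $\Abst{R}(\Abst{s}_p \Abst{s}, \Abst{a}) + V - \tilde{V}_{\Abst{s}_p \Abst{s} \Abst{a}}$. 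Substituting the resulting rewards into the definition of $\tilde{V}_{\Abst{s}_p \Abst{s} \Abst{a}}$ in each branch yields $\tilde{V}'_{\Abst{s}_p \Abst{s} \Abst{a}} = V$, which is claim (2); this is exactly the target-restoring behaviour already isolated in \cref{th:abstractoner-any-option}.

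For claim (1) I would check that every modified reward stays in $[0,1]$. The first assignment is a $\max$ with $0$, and its argument is at most $\Abst{R}(\Abst{s}_p \Abst{s}, \Abst{a}) \le 1$ because the hypothesis $\tilde{V}_{\Abst{s}_p \Abst{s} \Abst{a}} \ge V$ forces $V - \tilde{V}_{\Abst{s}_p \Abst{s} \Abst{a}} \le 0$. In the branch where $\Abst{R}(\Abst{s}_p \Abst{s}, \Abst{a})$ becomes $0$, the very inequality that triggers it, $V < \Abst\gamma \Abst{T}_{\Abst{s}_p \Abst{s} \Abst{a}} \Abst{R}(\Abst{s}\Abst{s}, \Abst{a}) \le \Abst\gamma \Abst{T}_{\Abst{s}_p \Abst{s} \Abst{a}}$, shows the new self-loop reward $V/(\Abst\gamma \Abst{T}_{\Abst{s}_p \Abst{s} \Abst{a}}) < 1$. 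In the final loop the update is clipped above by $\min\{1, \cdot\}$, and its increment is nonnegative, since lowering $\Abst{R}(\Abst{s}\Abst{s}, \Abst{a})$ can only decrease the intermediate $\tilde{V}_{\Abst{s}_p' \Abst{s} \Abst{a}}$ and hence $V^-_{\Abst{s}_p' \Abst{s} \Abst{a}} - \tilde{V}_{\Abst{s}_p' \Abst{s} \Abst{a}} \ge 0$; so those rewards lie in $[0,1]$ as well.

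For claim (3) I would reuse $\Abst{\MDP}^*$ as the witness for $\Abst{\MDP}'$ and verify $\tilde{V}'_{\cdot} \ge \tilde{V}^*_{\cdot}$ for every tuple. Rewards left untouched keep $\tilde{V}' = \tilde{V} \ge \tilde{V}^*$. For $(\Abst{s}_p \Abst{s}, \Abst{a})$, combining the realization guarantee $\tilde{V}^*_{\Abst{s}_p \Abst{s} \Abst{a}} \le V^o_\nu + \PRealizableR/(1-\gamma)$ in $\Abst{\MDP}^*$ with the trivial bound $\tilde{V}^*_{\Abst{s}_p \Abst{s} \Abst{a}} \le 1/(1-\gamma)$ gives $\tilde{V}^*_{\Abst{s}_p \Abst{s} \Abst{a}} \le V = \tilde{V}'_{\Abst{s}_p \Abst{s} \Abst{a}}$.

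The main obstacle is the $\min\{1, \cdot\}$ clip in the final loop, which may prevent the other targets from being restored exactly and therefore threatens both optimism and the preservation of their realizations. I would resolve it with a case split: when the clip is inactive the target is restored, so $\tilde{V}'_{\Abst{s}_p' \Abst{s} \Abst{a}} = V^-_{\Abst{s}_p' \Abst{s} \Abst{a}} \ge \tilde{V}^*_{\Abst{s}_p' \Abst{s} \Abst{a}}$; when it is active I would use $\Abst{R}'(\Abst{s}_p' \Abst{s}, \Abst{a}) = 1 \ge \Abst{R}^*(\Abst{s}_p' \Abst{s}, \Abst{a})$ together with $\Abst{R}'(\Abst{s}\Abst{s}, \Abst{a}) \ge \Abst{R}^*(\Abst{s}\Abst{s}, \Abst{a})$, the latter being a consequence of $\tilde{V}^*_{\Abst{s}_p \Abst{s} \Abst{a}} \le V$ derived above, to conclude $\tilde{V}'_{\Abst{s}_p' \Abst{s} \Abst{a}} \ge \tilde{V}^*_{\Abst{s}_p' \Abst{s} \Abst{a}}$ directly from the shared $\Abst{T}_{\Abst{s}_p' \Abst{s} \Abst{a}}$. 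This settles (3). Finally, for claim (4), since $\tilde{h}$ is unchanged the occupancy constraints of \cref{def:realizability-from} transfer verbatim, and since in every case $\tilde{V}'_{\Abst{s}_p' \Abst{s} \Abst{a}} \le V^-_{\Abst{s}_p' \Abst{s} \Abst{a}} = \tilde{V}_{\Abst{s}_p' \Abst{s} \Abst{a}}$ the value constraint only becomes slacker, so any realization valid in $\Abst{\MDP}$ for a tuple $(\Abst{s}_p' \Abst{s}, \Abst{a})$ with $\Abst{s}_p' \notin \{\Abst{s}, \Abst{s}_p\}$ remains valid in $\Abst{\MDP}'$.
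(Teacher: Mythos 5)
Your proposal is correct and follows essentially the same route as the paper's own proof: points (1)--(2) are exactly the content of \cref{th:abstractoner-any-option}, point (3) uses $\Abst{\MDP}^*$ as the optimism witness with the same case split (the updated tuple handled via $\tilde{V}^*_{\Abst{s}_p \Abst{s} \Abst{a}} \le V$, the other predecessors via the unclipped case giving $\tilde{V}'_{\Abst{s}_p' \Abst{s} \Abst{a}} = \tilde{V}_{\Abst{s}_p' \Abst{s} \Abst{a}}$ and the clipped case giving the self-loop reward domination $\Abst{R}'(\Abst{s}\Abst{s}, \Abst{a}) \ge \Abst{R}^*(\Abst{s}\Abst{s}, \Abst{a})$), and point (4) reduces, as in the paper, to showing $\tilde{V}'_{\Abst{s}_p' \Abst{s} \Abst{a}} \le \tilde{V}_{\Abst{s}_p' \Abst{s} \Abst{a}}$ while $\tilde{h}$ is unchanged. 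The only stylistic difference is that you verify optimism directly whereas the paper argues by contradiction, and your derivation of the reward-domination inequality from $\tilde{V}^*_{\Abst{s}_p \Abst{s} \Abst{a}} \le V = \Abst\gamma\, \Abst{T}_{\Abst{s}_p \Abst{s} \Abst{a}}\, \Abst{R}'(\Abst{s}\Abst{s}, \Abst{a})$ is, if anything, a cleaner phrasing of the same step.
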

\begin{proof}
  The first two points of the statements are direct consequences of \cref{th:abstractoner-any-option}.
  In fact, the assumptions taken by this statement subsume those of \cref{th:abstractoner-any-option}.

	The third statement says that the transition function of $\Abst{\MDP}'$ and $\Abst{\MDP}^*$ coincide
	and that optimism is preserved from $\Abst\MDP$ to~$\Abst\MDP'$.
	The first half is immediate to prove because transitions are not modified by \FAbstractOneR.
  On the other hand, the rest of the statement is proven by contradiction, by assuming that $\Abst{\MDP}'$ is not optimistic.
	% Admissibility in logic terms (not used anymore in this proof but left here for reference)
	% !(∀abst_states ∀option ∃action ∀starting_state ∀dest_block admissibility)
	% ∃abst_states ∃option ∀action ∃starting_state ∃dest_block !admissibility
	In other words, we assume that there exist some $\Abst{s}_p', \Abst{s}', \Abst{a}'$,
	such that $\tilde{V}'_{\Abst{s}_p' \Abst{s}' \Abst{a}'} < \tilde{V}^*_{\Abst{s}_p' \Abst{s}' \Abst{a}'}$.
	Let us fix $\Abst{s}_p', \Abst{s}', \Abst{a}'$ according to the statement above.
	For the optimism of $\Abst\MDP$, it follows that
	$\tilde{V}_{\Abst{s}_p' \Abst{s}' \Abst{a}'} \ge \tilde{V}^*_{\Abst{s}_p' \Abst{s}' \Abst{a}'}$.
	However, due to how $\FAbstractOneR$ is defined,
	it is possible that $\tilde{V}'_{\Abst{s}_p' \Abst{s}' \Abst{a}'} \neq \tilde{V}_{\Abst{s}_p' \Abst{s}' \Abst{a}'}$
	only if $\Abst{s}' = \Abst{s}$ and $\Abst{a}' = \Abst{a}$, which are part of the tuple given in input to the function.

	Now, let us assume $\Abst{s}_p' = \Abst{s}_p$.
	We see that:
	\begin{align}
		\tilde{V}^*_{\Abst{s}_p \Abst{s} \Abst{a}} &> \tilde{V}'_{\Abst{s}_p \Abst{s} \Abst{a}} & \text{\small (initial assumption)}\\
			&= \min\{V^o_\nu + \PRealizableR/(1-\gamma), 1/(1-\gamma)\} & \text{\small (point two)}\\
			&\ge \min\{\tilde{V}^*_{\Abst{s}_p \Abst{s} \Abst{a}}, 1/(1-\gamma)\} & \text{\small ($o$ is an $\PRealizableR$-realization in $\Abst{\MDP}^*$)} \\
			&\ge \tilde{V}^*_{\Abst{s}_p \Abst{s} \Abst{a}}
	\end{align}
	which is a contradiction.
	Then, necessarily $\Abst{s}_p' \neq \Abst{s}_p$.
	From here, starting from the usual initial assumption,
  \begin{align}
    & \tilde{V}^*_{\Abst{s}_p' \Abst{s} \Abst{a}} > \Abst{R}'(\Abst{s}'_p \Abst{s}, \Abst{a}) + \frac{\Abst\gamma \Abst{R}'(\Abst{s}\Abst{s}, \Abst{a}) \Abst{T}(\Abst{s} \Given \Abst{s}'_p \Abst{s}, \Abst{a})}{1 - \Abst{\gamma}\, \Abst{T}(\Abst{s} \Given \Abst{s}\Abst{s}, \Abst{a})}\\
    & = \min\{1, \Abst{R}(\Abst{s}'_p \Abst{s}, \Abst{a}) + \tilde{V}_{\Abst{s}_p' \Abst{s} \Abst{a}} - V^+_{\Abst{s}_p' \Abst{s} \Abst{a}} \} + \frac{\Abst\gamma \Abst{R}'(\Abst{s}\Abst{s}, \Abst{a}) \Abst{T}(\Abst{s} \Given \Abst{s}'_p \Abst{s}, \Abst{a})}{1 - \Abst{\gamma}\, \Abst{T}(\Abst{s} \Given \Abst{s}\Abst{s}, \Abst{a})}
  \end{align}
  Where $\min\{1, \Abst{R}(\Abst{s}'_p \Abst{s}, \Abst{a}) + \tilde{V}_{\Abst{s}_p' \Abst{s} \Abst{a}} - V^+_{\Abst{s}_p' \Abst{s} \Abst{a}} \}$
	is the assignment made in \cref{alg:line:abstractoner-other-sp},
  and $V^+_{\Abst{s}_p' \Abst{s} \Abst{a}}$ is $\tilde{V}_{\Abst{s}_p' \Abst{s} \Abst{a}}$,
  computed in the MDP obtained after the assignments above \cref{alg:line:abstractoner-other-sp}.
	From here, we continue considering the case $\Abst{R}(\Abst{s}'_p \Abst{s}, \Abst{a}) + \tilde{V}_{\Abst{s}_p' \Abst{s} \Abst{a}} - V^+_{\Abst{s}_p' \Abst{s} \Abst{a}} \le 1$:
  \begin{align}
    \tilde{V}^*_{\Abst{s}_p' \Abst{s} \Abst{a}} > \Abst{R}(\Abst{s}'_p \Abst{s}, \Abst{a}) + \tilde{V}_{\Abst{s}_p' \Abst{s} \Abst{a}} - V^+_{\Abst{s}_p' \Abst{s} \Abst{a}} + \frac{\Abst\gamma \Abst{R}'(\Abst{s}\Abst{s}, \Abst{a}) \Abst{T}(\Abst{s} \Given \Abst{s}'_p \Abst{s}, \Abst{a})}{1 - \Abst{\gamma}\, \Abst{T}(\Abst{s} \Given \Abst{s}\Abst{s}, \Abst{a})} \label{eq:abstractoner-proof-cases1}
  \end{align}
  Expanding $V^+_{\Abst{s}_p' \Abst{s} \Abst{a}}$, the whole right-hand term simplifies to $\tilde{V}_{\Abst{s}'_p \Abst{s} \Abst{a}}$.
  However, $\tilde{V}^*_{\Abst{s}_p' \Abst{s} \Abst{a}} > \tilde{V}_{\Abst{s}'_p \Abst{s} \Abst{a}}$
  contradicts the fact that $\Abst\MDP$ is optimistic.
  Finally, we consider the case
  $\Abst{R}(\Abst{s}'_p \Abst{s}, \Abst{a}) + \tilde{V}_{\Abst{s}_p' \Abst{s} \Abst{a}} - V^+_{\Abst{s}_p' \Abst{s} \Abst{a}} > 1$.
  Then,
  \begin{align}
		\tilde{V}^*_{\Abst{s}_p' \Abst{s} \Abst{a}} &> 1 + \frac{\Abst\gamma \Abst{R}'(\Abst{s}\Abst{s}, \Abst{a}) \Abst{T}(\Abst{s} \Given \Abst{s}'_p \Abst{s}, \Abst{a})}{1 - \Abst{\gamma}\, \Abst{T}(\Abst{s} \Given \Abst{s}\Abst{s}, \Abst{a})} \\
			&\ge \Abst{R}^*(\Abst{s}_p' \Abst{s}, \Abst{a}) + \frac{\Abst\gamma \Abst{R}'(\Abst{s}\Abst{s}, \Abst{a}) \Abst{T}(\Abst{s} \Given \Abst{s}'_p \Abst{s}, \Abst{a})}{1 - \Abst{\gamma}\, \Abst{T}(\Abst{s} \Given \Abst{s}\Abst{s}, \Abst{a})}
  \end{align}
  Now, we argue that $\Abst{R}'(\Abst{s}\Abst{s}, \Abst{a}) \ge \Abst{R}^*(\Abst{s}\Abst{s}, \Abst{a})$.
  In fact, under the case we were considering, 
  it is also true that $\tilde{V}_{\Abst{s}_p' \Abst{s} \Abst{a}} > V^+_{\Abst{s}_p' \Abst{s} \Abst{a}}$,
  which means that the reward function at $(\Abst{s}\Abst{s}, \Abst{a})$ has been modified by the algorithm.
  In turn, this only happens when $\Abst{R}'(\Abst{s}_p \Abst{s}, \Abst{a}) = 0$,
	which means that only $\Abst{R}'(\Abst{s} \Abst{s}, \Abst{a})$ contributes to $\tilde{V}'_{\Abst{s}_p \Abst{s} \Abst{a}}$.
  However, since we know by statement~2 that $\tilde{V}'_{\Abst{s}_p \Abst{s} \Abst{a}} = \min\{V^o_\nu + \PRealizableR/(1-\gamma), 1/(1-\gamma)\}$,
  if it was the case that $\Abst{R}^*(\Abst{s}\Abst{s}, \Abst{a}) > \Abst{R}'(\Abst{s}\Abst{s}, \Abst{a})$,
  then, the rewards of $(\Abst{s}_p \Abst{s}, \Abst{a})$ would not be $\PRealizableR$-realizable in $\Abst{\MDP}^*$ with~$o$, as we assumed
	(for instance, we can consider the case $V^o_\nu = 0$, $\tilde{V}'_{\Abst{s}_p \Abst{s} \Abst{a}} = \PRealizableR/(1-\gamma)$).
	Thus, we established that $\Abst{R}'(\Abst{s}\Abst{s}, \Abst{a}) \ge \Abst{R}^*(\Abst{s}\Abst{s}, \Abst{a})$,
  and we conclude the chain of inequalities above:
  \begin{align}
    \tilde{V}^*_{\Abst{s}_p' \Abst{s} \Abst{a}} > \Abst{R}^*(\Abst{s}_p' \Abst{s}, \Abst{a}) + \frac{\Abst\gamma \Abst{R}^*(\Abst{s}\Abst{s}, \Abst{a}) \Abst{T}(\Abst{s} \Given \Abst{s}'_p \Abst{s}, \Abst{a})}{1 - \Abst{\gamma}\, \Abst{T}(\Abst{s} \Given \Abst{s}\Abst{s}, \Abst{a})}
      = \tilde{V}^*_{\Abst{s}_p' \Abst{s} \Abst{a}}
  \end{align}
	This contradiction concludes the proof for point~3.

	For the fourth and final result, it suffices to show that $\tilde{V}_{\Abst{s}_p' \Abst{s} \Abst{a}} \ge \tilde{V}'_{\Abst{s}_p' \Abst{s} \Abst{a}}$,
	as this would imply $V^{o'}_{\nu'} + \PRealizableR/(1-\gamma) \ge \tilde{V}'_{\Abst{s}_p' \Abst{s} \Abst{a}}$, which expresses realizability in $\Abst{\MDP}'$.
	We distinguish two cases.
	If $\Abst{R}'(\Abst{s}_p \Abst{s}, \Abst{a}) > 0$,
	\cref{alg:line:abstractoner-after-ss} does not execute and it holds that
	$\Abst{R}'(\Abst{s} \Abst{s}, \Abst{a}) = \Abst{R}(\Abst{s} \Abst{s}, \Abst{a})$ and
	$\tilde{V}_{\Abst{s}_p' \Abst{s} \Abst{a}} = V^+_{\Abst{s}_p' \Abst{s} \Abst{a}}$.
	The latter expression also implies that $\Abst{R}'(\Abst{s}_p' \Abst{s}, \Abst{a}) = \Abst{R}(\Abst{s}_p' \Abst{s}, \Abst{a})$.
	Thus, it is verified that $\tilde{V}_{\Abst{s}_p' \Abst{s} \Abst{a}} = \tilde{V}'_{\Abst{s}_p' \Abst{s} \Abst{a}}$.
	Finally, we consider the remaining case, $\Abst{R}'(\Abst{s}_p \Abst{s}, \Abst{a}) = 0$.
	\begin{align}
		&\Abst{R}(\Abst{s}_p \Abst{s}, \Abst{a}) + {V} - \tilde{V}_{\Abst{s}_p \Abst{s} \Abst{a}} \le 0 \\
		&V \le \frac{\Abst\gamma \Abst{R}(\Abst{s}\Abst{s}, \Abst{a}) \Abst{T}(\Abst{s} \Given \Abst{s}_p \Abst{s}, \Abst{a})}{1 - \Abst{\gamma}\, \Abst{T}(\Abst{s} \Given \Abst{s}\Abst{s}, \Abst{a})} \\
		&V\, \left( \frac{\Abst\gamma \Abst{T}(\Abst{s} \Given \Abst{s}_p \Abst{s}, \Abst{a})}{1 - \Abst{\gamma}\, \Abst{T}(\Abst{s} \Given \Abst{s}\Abst{s}, \Abst{a})} \right)^{-1} \le \Abst{R}(\Abst{s}\Abst{s}, \Abst{a})
	\end{align}
	and, $\Abst{R}'(\Abst{s}\Abst{s}, \Abst{a}) \le \Abst{R}(\Abst{s}\Abst{s}, \Abst{a})$.
	Therefore,
	\begin{align}
		\tilde{V}'_{\Abst{s}_p' \Abst{s} \Abst{a}} &= \min\{1, \Abst{R}(\Abst{s}'_p \Abst{s}, \Abst{a}) + V^-_{\Abst{s}_p' \Abst{s} \Abst{a}} - \tilde{V}_{\Abst{s}_p' \Abst{s} \Abst{a}} \}
			+ \frac{\Abst\gamma \Abst{R}'(\Abst{s}\Abst{s}, \Abst{a}) \Abst{T}(\Abst{s} \Given \Abst{s}_p' \Abst{s}, \Abst{a})}{1 - \Abst{\gamma}\, \Abst{T}(\Abst{s} \Given \Abst{s}\Abst{s}, \Abst{a})} \\
		&\le V^-_{\Abst{s}_p' \Abst{s} \Abst{a}} \\
		&= \Abst{R}(\Abst{s}'_p \Abst{s}, \Abst{a}) 
			+ \frac{\Abst\gamma \Abst{R}'(\Abst{s}\Abst{s}, \Abst{a}) \Abst{T}(\Abst{s} \Given \Abst{s}_p' \Abst{s}, \Abst{a})}{1 - \Abst{\gamma}\, \Abst{T}(\Abst{s} \Given \Abst{s}\Abst{s}, \Abst{a})} \\
		&\le \tilde{V}_{\Abst{s}_p' \Abst{s} \Abst{a}}
	\end{align}
\end{proof}

\begin{lemma}\label{th:abstractoner-any-option}
  Consider any MDP~$\MDP$, any abstraction $\Tuple{\Abst\MDP, \Mapping}$,
  any abstract tuple $(\Abst{s}_p \Abst{s}, \Abst{a})$ and any $V \in [0, 1/(1-\gamma)]$.
	If $\tilde{V}_{\Abst{s}_p \Abst{s} \Abst{a}} \ge V$, then
  $\Abst\MDP' \coloneqq \FAbstractOneR(\Abst\MDP, {(\Abst{s}_p \Abst{s}, \Abst{a})}, V)$
  is a valid 2-MDP and $\tilde{V}'_{\Abst{s}_p \Abst{s} \Abst{a}} = V$.
\end{lemma}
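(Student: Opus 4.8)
The plan is to note first that \FAbstractOneR\ modifies only reward entries and never the transition function $\Abst{T}$, so $\Abst{\MDP}'$ is a valid $2$-MDP as soon as we check that every reward it writes stays in $[0,1]$; the value claim then reduces to substituting the new rewards into the definition \eqref{eq:computed-option-value} of $\tilde{V}$. Since $\tilde{V}_{\Abst{s}_p \Abst{s} \Abst{a}} = \Abst{R}(\Abst{s}_p \Abst{s}, \Abst{a}) + \Abst\gamma\, \Abst{T}_{\Abst{s}_p \Abst{s} \Abst{a}}\, \Abst{R}(\Abst{s}\Abst{s}, \Abst{a})$ depends on only the two rewards $\Abst{R}(\Abst{s}_p \Abst{s}, \Abst{a})$ and $\Abst{R}(\Abst{s}\Abst{s}, \Abst{a})$, I would split the argument on whether the assignment to $\Abst{R}(\Abst{s}_p \Abst{s}, \Abst{a})$ clips to zero (throughout assuming $\Abst{s}_p \neq \Abst{s}$, the only case on which the function is called).

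In the non-clipping case $\Abst{R}(\Abst{s}_p \Abst{s}, \Abst{a}) + V - \tilde{V}_{\Abst{s}_p \Abst{s} \Abst{a}} > 0$, so the conditional at \cref{alg:line:abstractoner-after-ss} is skipped, $\Abst{R}(\Abst{s}\Abst{s}, \Abst{a})$ is untouched, and the new immediate reward is exactly $\Abst{R}(\Abst{s}_p \Abst{s}, \Abst{a}) + V - \tilde{V}_{\Abst{s}_p \Abst{s} \Abst{a}}$. Substituting into \eqref{eq:computed-option-value} and cancelling the unchanged self-loop term $\Abst\gamma\, \Abst{T}_{\Abst{s}_p \Abst{s} \Abst{a}}\, \Abst{R}(\Abst{s}\Abst{s}, \Abst{a})$ yields $\tilde{V}'_{\Abst{s}_p \Abst{s} \Abst{a}} = V$; the new reward lies in $[0,1]$ since it is nonnegative by the case hypothesis and at most $\Abst{R}(\Abst{s}_p \Abst{s}, \Abst{a}) \le 1$ by $V \le \tilde{V}_{\Abst{s}_p \Abst{s} \Abst{a}}$.

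In the clipping case $\Abst{R}(\Abst{s}_p \Abst{s}, \Abst{a}) + V - \tilde{V}_{\Abst{s}_p \Abst{s} \Abst{a}} \le 0$, the new immediate reward is $0$ and $\Abst{R}(\Abst{s}\Abst{s}, \Abst{a})$ is overwritten by $V / (\Abst\gamma\, \Abst{T}_{\Abst{s}_p \Abst{s} \Abst{a}})$. If $\Abst{T}(\Abst{s} \Given \Abst{s}_p \Abst{s}, \Abst{a}) = 0$ then $\Abst{T}_{\Abst{s}_p \Abst{s} \Abst{a}} = 0$, the self-loop term vanishes identically, $\tilde{V}_{\Abst{s}_p \Abst{s} \Abst{a}} = \Abst{R}(\Abst{s}_p \Abst{s}, \Abst{a})$ forces $V = 0$, and clipping alone already gives $\tilde{V}'_{\Abst{s}_p \Abst{s} \Abst{a}} = 0 = V$; otherwise the denominator is positive, the division is well defined, and \eqref{eq:computed-option-value} gives $\tilde{V}'_{\Abst{s}_p \Abst{s} \Abst{a}} = \Abst\gamma\, \Abst{T}_{\Abst{s}_p \Abst{s} \Abst{a}} \cdot V/(\Abst\gamma\, \Abst{T}_{\Abst{s}_p \Abst{s} \Abst{a}}) = V$. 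The one genuinely delicate point is the range of this new self-loop reward: rewriting the clipping hypothesis as $V \le \tilde{V}_{\Abst{s}_p \Abst{s} \Abst{a}} - \Abst{R}(\Abst{s}_p \Abst{s}, \Abst{a}) = \Abst\gamma\, \Abst{T}_{\Abst{s}_p \Abst{s} \Abst{a}}\, \Abst{R}(\Abst{s}\Abst{s}, \Abst{a})$ shows $V/(\Abst\gamma\, \Abst{T}_{\Abst{s}_p \Abst{s} \Abst{a}}) \le \Abst{R}(\Abst{s}\Abst{s}, \Abst{a}) \le 1$, and it is nonnegative.

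Finally, the rewrites at \cref{alg:line:abstractoner-other-sp} for $\Abst{s}_p' \notin \{\Abst{s}_p, \Abst{s}\}$ stay in $[0,1]$: the explicit $\min$ with $1$ caps them above, while for the lower bound I would use that $\Abst{R}(\Abst{s}\Abst{s}, \Abst{a})$ only ever decreased, so the recomputed $\tilde{V}_{\Abst{s}_p' \Abst{s} \Abst{a}}$ is at most the stored $V^-_{\Abst{s}_p' \Abst{s} \Abst{a}}$, making the assigned quantity at least $\Abst{R}(\Abst{s}_p' \Abst{s}, \Abst{a}) \ge 0$. The only real obstacle is this range verification in the clipping case, which hinges on translating the clipping condition through the decomposition of $\tilde{V}$; the value equality itself is a direct substitution in each branch.
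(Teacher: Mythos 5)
Your proof is correct and follows essentially the same route as the paper's: the same split on whether the assignment to $\Abst{R}(\Abst{s}_p \Abst{s}, \Abst{a})$ clips to zero, the same substitution into \eqref{eq:computed-option-value} to get $\tilde{V}'_{\Abst{s}_p \Abst{s} \Abst{a}} = V$, the same key bound $V \le \Abst\gamma\, \Abst{T}_{\Abst{s}_p \Abst{s} \Abst{a}}\, \Abst{R}(\Abst{s}\Abst{s}, \Abst{a})$ to put the rewritten self-loop reward in $[0,1]$, and the same monotone-decrease argument for the rewards at $\Abst{s}_p' \notin \{\Abst{s}_p, \Abst{s}\}$. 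Your explicit treatment of the degenerate case $\Abst{T}(\Abst{s} \Given \Abst{s}_p \Abst{s}, \Abst{a}) = 0$, which the paper's proof silently skips, is a small but welcome addition.
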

\begin{proof}
  First we check that $\Abst\MDP'$ is a valid 2-MDP after the execution of \FAbstractOneR.
  To verify this, we already know that $\Abst{R}'(\Abst{s}_p \Abst{s}, \Abst{a}) \ge 0$.
  However, $\Abst{R}'(\Abst{s}_p \Abst{s}, \Abst{a}) \le 1$ is also true, since 
	we assumed that $\tilde{V}_{\Abst{s}_p \Abst{s} \Abst{a}} \ge V$.
  For $\Abst{R}'(\Abst{s} \Abst{s}, \Abst{a})$, we should only verify the case
  $\Abst{R}'(\Abst{s}_p \Abst{s}, \Abst{a}) = 0$.
  In turn, this only happens if 
  $\Abst{R}(\Abst{s}_p \Abst{s}, \Abst{a}) \le \tilde{V}_{\Abst{s}_p \Abst{s} \Abst{a}} - V$,
  that is 
  $V \le \tilde{V}_{\Abst{s}_p \Abst{s} \Abst{a}} - \Abst{R}(\Abst{s}_p \Abst{s}, \Abst{a})$.
  Then,
  \begin{align}
    \Abst{R}'(\Abst{s} \Abst{s}, \Abst{a}) &= V \left( \frac{\Abst\gamma \Abst{T}(\Abst{s} \Given \Abst{s}_p \Abst{s}, \Abst{a})}{1 - \Abst\gamma \Abst{T}(\Abst{s} \Given \Abst{s} \Abst{s}, \Abst{a})} \right)^{-1}\\
    &\le (\tilde{V}_{\Abst{s}_p \Abst{s} \Abst{a}} - \Abst{R}(\Abst{s}_p \Abst{s}, \Abst{a})) \left( \frac{\Abst\gamma \Abst{T}(\Abst{s} \Given \Abst{s}_p \Abst{s}, \Abst{a})}{1 - \Abst\gamma \Abst{T}(\Abst{s} \Given \Abst{s} \Abst{s}, \Abst{a})} \right)^{-1}\\
    &= \Abst{R}(\Abst{s} \Abst{s}, \Abst{a})
  \end{align}
  which proves $\Abst{R}'(\Abst{s} \Abst{s}, \Abst{a}) \in [0, 1]$.
  Finally, for any $\Abst{s}'_p \notin \{\Abst{s}, \Abst{s}_p\}$,
  we write \cref{alg:line:abstractoner-other-sp} as:
  \begin{align}
    \Abst{R}'(\Abst{s}'_p \Abst{s}, \Abst{a}) = \min\{1, \Abst{R}(\Abst{s}'_p \Abst{s}, \Abst{a}) + \tilde{V}_{\Abst{s}_p' \Abst{s} \Abst{a}} - V^+_{\Abst{s}_p' \Abst{s} \Abst{a}} \}
  \end{align}
  where $V^+_{\Abst{s}_p' \Abst{s} \Abst{a}}$ is $\tilde{V}_{\Abst{s}_p' \Abst{s} \Abst{a}}$,
  computed in the MDP obtained after the assignments above \cref{alg:line:abstractoner-other-sp}.
  We have already verified that 
  $\Abst{R}'(\Abst{s}_p \Abst{s}, \Abst{a}) \le \Abst{R}(\Abst{s}_p \Abst{s}, \Abst{a})$ and
  $\Abst{R}'(\Abst{s} \Abst{s}, \Abst{a}) \le \Abst{R}(\Abst{s} \Abst{s}, \Abst{a})$.
  This implies that $\tilde{V}_{\Abst{s}_p' \Abst{s} \Abst{a}} - V^+_{\Abst{s}_p' \Abst{s} \Abst{a}}$ is positive and $\Abst{R}'(\Abst{s}'_p \Abst{s}, \Abst{a}) \in [0, 1]$.

  Now we verify the second point of the statement by
  substituting the definition of $\tilde{V}'_{\Abst{s}_p \Abst{s} \Abst{a}}$ for~$\Abst\MDP'$.
  We consider two cases.
  If $\Abst{R}(\Abst{s}_p \Abst{s}, \Abst{a}) > \tilde{V}_{\Abst{s}_p \Abst{s} \Abst{a}} - V$,
  \begin{align}
    \tilde{V}'_{\Abst{s}_p \Abst{s} \Abst{a}} &=
        \Abst{R}'(\Abst{s}_p \Abst{s}, \Abst{a}) + \frac{\Abst\gamma \Abst{R}'(\Abst{s}\Abst{s}, \Abst{a}) \Abst{T}(\Abst{s} \Given \Abst{s}_p \Abst{s}, \Abst{a})}{1 - \Abst{\gamma}\, \Abst{T}(\Abst{s} \Given \Abst{s}\Abst{s}, \Abst{a})}\\
    &= \Abst{R}(\Abst{s}_p \Abst{s}, \Abst{a}) + V - \tilde{V}_{\Abst{s}_p \Abst{s} \Abst{a}} + \frac{\Abst\gamma \Abst{R}(\Abst{s}\Abst{s}, \Abst{a}) \Abst{T}(\Abst{s} \Given \Abst{s}_p \Abst{s}, \Abst{a})}{1 - \Abst{\gamma}\, \Abst{T}(\Abst{s} \Given \Abst{s}\Abst{s}, \Abst{a})}\\
    &= V
  \end{align}
  On the other hand, if $\Abst{R}(\Abst{s}_p \Abst{s}, \Abst{a}) \le \tilde{V}_{\Abst{s}_p \Abst{s} \Abst{a}} - V$,
  \begin{align}
    \tilde{V}'_{\Abst{s}_p \Abst{s} \Abst{a}} &=
        0 + \frac{\Abst\gamma \Abst{R}'(\Abst{s}\Abst{s}, \Abst{a}) \Abst{T}(\Abst{s} \Given \Abst{s}_p \Abst{s}, \Abst{a})}{1 - \Abst{\gamma}\, \Abst{T}(\Abst{s} \Given \Abst{s}\Abst{s}, \Abst{a})} = V
  \end{align}
\end{proof}

\subsection*{Other lemmas}

As shown in \citet{agarwal_2021_ReinforcementLearningTheory}, the $\gamma$-contraction property, together with \citet[Corollary 2]{singh_1994_UpperBoundLoss}, gives the following.
\begin{lemma}\label{th:vi-iterations}
  Let $Q\AtIter{k}$ be the Q-function obtained after~$k$ \FValueIteration updates,
  and let $\Policy\AtIter{k}$ be the greedy policy for $Q\AtIter{k}$.
  If $k \ge \frac{\log \frac{2}{(1-\gamma)^2 \PAccuracy}}{1-\gamma}$, then $V^*(s) - V^{\Policy\AtIter{k}}(s) \le \PAccuracy$ for each $s \in \States$.
\end{lemma}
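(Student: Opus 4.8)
The plan is to assemble the two classical facts named in the citation—the $\gamma$-contraction of the Bellman optimality operator and the Singh--Yee bound on greedy value loss—and then carry out the elementary calculation that turns a Q-function error bound into the stated iteration count. First I would recall that each \FValueIteration update applies the Bellman optimality operator $\Set{B}$, given by $(\Set{B} Q)(s,a) = R(s,a) + \gamma\, \Expected_{s'}[\max_{a'} Q(s',a')]$, which is a $\gamma$-contraction in the sup-norm $\norm{\cdot}_\infty$. Since rewards lie in $[0,1]$, all Q-values lie in $[0, (1-\gamma)^{-1}]$, so starting from $Q\AtIter{0} \equiv 0$ we have $\norm{Q\AtIter{0} - Q^*}_\infty \le (1-\gamma)^{-1}$. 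Applying the contraction $k$ times then gives
\begin{equation}
  \norm{Q\AtIter{k} - Q^*}_\infty \le \gamma^k\, \norm{Q\AtIter{0} - Q^*}_\infty \le \frac{\gamma^k}{1-\gamma}.
\end{equation}

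Next I would invoke \citet[Corollary~2]{singh_1994_UpperBoundLoss}, which bounds the value loss of acting greedily with respect to an inexact Q-function: if $\Policy\AtIter{k}$ is greedy for $Q\AtIter{k}$, then $V^*(s) - V^{\Policy\AtIter{k}}(s) \le \frac{2\,\norm{Q\AtIter{k} - Q^*}_\infty}{1-\gamma}$ for every $s \in \States$. Chaining this with the previous display yields the uniform bound
\begin{equation}
  V^*(s) - V^{\Policy\AtIter{k}}(s) \le \frac{2\gamma^k}{(1-\gamma)^2}.
\end{equation}

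It then remains only to choose $k$ large enough that the right-hand side does not exceed $\PAccuracy$. The requirement $2\gamma^k (1-\gamma)^{-2} \le \PAccuracy$ is equivalent to $k\, \log(1/\gamma) \ge \log\frac{2}{(1-\gamma)^2 \PAccuracy}$. I would conclude with the standard inequality $\log(1/\gamma) = -\log\gamma \ge 1 - \gamma$, valid for $0 < \gamma < 1$, which shows that the slightly stronger condition $k \ge \frac{1}{1-\gamma}\log\frac{2}{(1-\gamma)^2 \PAccuracy}$ is sufficient; this is exactly the threshold in the statement.

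The only delicate points are bookkeeping rather than conceptual: confirming that the constant appearing in the Singh--Yee corollary is $2/(1-\gamma)$ (an extra factor of $\gamma$ there would only strengthen the conclusion), and justifying the replacement of $1/\log(1/\gamma)$ by the looser $1/(1-\gamma)$ through $-\log\gamma \ge 1-\gamma$. I expect no genuine obstacle here, since the lemma is a direct assembly of two off-the-shelf facts followed by one elementary estimate.
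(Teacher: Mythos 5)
Your proof is correct and is exactly the argument the paper intends: the paper gives no written proof for this lemma, citing only the $\gamma$-contraction of value iteration and \citet[Corollary~2]{singh_1994_UpperBoundLoss}, and your write-up assembles precisely these two facts with the same elementary estimate $\log(1/\gamma) \ge 1-\gamma$. Your side remarks (the possible extra factor of $\gamma$ in the Singh--Yee constant only strengthening the bound, and $Q\AtIter{0} \equiv 0$ giving $\norm{Q\AtIter{0} - Q^*}_\infty \le (1-\gamma)^{-1}$) are also sound, so there is nothing to fix.
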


The following statement was expressed for MDPs in \citet{kearns_2002_NearoptimalReinforcementLearning}.
However, its proof only relies on geometric discounting.
So, it can also be applied to any decision process and $k$-MDP.
\begin{lemma}[\citet{kearns_2002_NearoptimalReinforcementLearning}]\label{th:truncate-horizon-values}
  In any decision process~$\MDP$ and policy~$\Policy$,
  if $H = \frac{1}{1-\gamma} \log \frac{1}{\PAccuracy (1 - \gamma)}$,
  then, in any state~$s$, $V^\Policy(s) \le V_H^\Policy(s) + \PAccuracy$,
  where $V_H^\Policy$ is the expected sum of the first~$H$ discounted rewards.
\end{lemma}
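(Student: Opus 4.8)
The plan is to bound directly the discounted tail that the truncation discards. Writing $r_t$ for the expected reward collected at step~$t$ under~$\Policy$ when starting from~$s$, by definition $V^\Policy(s) = \Expected[\sum_{t=0}^\infty \gamma^t r_t]$ and $V_H^\Policy(s) = \Expected[\sum_{t=0}^{H-1} \gamma^t r_t]$, so their difference is exactly the tail $\Expected[\sum_{t=H}^\infty \gamma^t r_t]$. First I would note that each reward lies in $[0,1]$, as specified for~$R$ in the preliminaries; hence the tail is nonnegative (confirming $V^\Policy(s) \ge V_H^\Policy(s)$) and bounded above termwise by $\sum_{t=H}^\infty \gamma^t = \gamma^H/(1-\gamma)$.

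It then remains to show that the chosen horizon $H = \frac{1}{1-\gamma}\log\frac{1}{\PAccuracy(1-\gamma)}$ forces $\gamma^H/(1-\gamma) \le \PAccuracy$, i.e.\ $\gamma^H \le \PAccuracy(1-\gamma)$. Taking logarithms, this is equivalent to $H \log(1/\gamma) \ge \log\frac{1}{\PAccuracy(1-\gamma)}$. I would invoke the standard inequality $\log(1/\gamma) \ge 1-\gamma$ (which follows from $\log x \le x-1$ applied at $x=\gamma$), so that $H\log(1/\gamma) \ge (1-\gamma)H = \log\frac{1}{\PAccuracy(1-\gamma)}$, exactly meeting the required threshold. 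Chaining the bounds yields $V^\Policy(s) - V_H^\Policy(s) \le \gamma^H/(1-\gamma) \le \PAccuracy$, uniformly over~$s$, which is the claim.

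The argument relies only on geometric discounting and the reward range $[0,1]$, with no Markov or stationarity assumption invoked, which is precisely why the lemma transfers verbatim from the MDP setting of \citet{kearns_2002_NearoptimalReinforcementLearning} to the general $k$-MDPs and the abstract decision processes used elsewhere in the paper. There is no genuine obstacle here: the only subtlety worth flagging is the logarithm inequality $\log(1/\gamma)\ge 1-\gamma$, which is what converts the convenient $1/(1-\gamma)$ factor appearing in~$H$ into the sharper geometric decay rate $\log(1/\gamma)$ needed to dominate the tail; every other step is a termwise geometric-series bound.
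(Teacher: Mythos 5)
Your proof is correct and is exactly the standard argument this lemma rests on: the paper itself gives no proof, citing \citet{kearns_2002_NearoptimalReinforcementLearning} and remarking only that the result ``relies on geometric discounting,'' which is precisely the termwise tail bound $\gamma^H/(1-\gamma)$ combined with $\log(1/\gamma) \ge 1-\gamma$ that you spell out. Your closing observation that no Markov or stationarity assumption is used also matches the paper's stated reason for applying the lemma to general decision processes and $k$-MDPs, so there is nothing to add beyond the (standard, harmless) convention of treating $H$ as an integer.
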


Finally, we adopt the following concentration inequality from \citet[Corollary~2]{li_2009_UnifyingFrameworkComputational}.
\begin{lemma}[\citet{li_2009_UnifyingFrameworkComputational}]\label{th:concentration-ineq-bernoullis}
  Let $X_1, \dots, X_m$ be a sequence of $m$ independent Bernoulli RVs, with $\Pr(X_i) \ge a$, for all~$i$,
  for some constant~$a > 0$.
  Then, for any~$k \in \Naturals$ and $\PConfidence > 0$,
  with probability at least $1 - \PConfidence$,
  $\sum_{i = 1, \dots, m} X_i \ge k$, provided that
  \begin{equation}
    m \ge \frac{2}{a} \left( k + \log \frac{1}{\PConfidence} \right)
  \end{equation}
\end{lemma}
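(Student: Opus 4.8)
The plan is to pass to the complementary ``bad'' event and control it with a single multiplicative Chernoff bound, tuned so that the claimed constant $2$ comes out exactly. Write $S \coloneqq \sum_{i=1}^m X_i$ and $\mu \coloneqq \Expected[S] = \sum_{i=1}^m \Pr(X_i) \ge m a$. The hypothesis $m \ge \frac{2}{a}(k + \log\frac{1}{\PConfidence})$ then yields $\mu \ge 2(k + L)$, where $L \coloneqq \log\frac{1}{\PConfidence}$. The case $k = 0$ is vacuous, so I would assume $k \ge 1$; then $0 < k \le \mu/2 < \mu$, which is what makes the Chernoff threshold admissible. The goal becomes $\Pr(S < k) \le \PConfidence$.

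Next I would invoke the standard lower-tail multiplicative Chernoff inequality for a sum of independent $\{0,1\}$ variables: for every $0 < \epsilon < 1$, $\Pr(S \le (1-\epsilon)\mu) \le \exp(-\epsilon^2\mu/2)$. Placing the threshold exactly at $k$, i.e. taking $\epsilon \coloneqq 1 - k/\mu \in (0,1)$, gives
\[
  \Pr(S < k) \le \Pr(S \le k) \le \exp\left(-\frac{(\mu - k)^2}{2\mu}\right).
\]
The remaining task is purely analytic: show the exponent is at least $L$, so the right-hand side is at most $e^{-L} = \PConfidence$.

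To that end I would regard $g(\mu) \coloneqq \frac{(\mu-k)^2}{2\mu} = \frac{\mu}{2} - k + \frac{k^2}{2\mu}$ as a function of $\mu$, observe $g'(\mu) = \frac12 - \frac{k^2}{2\mu^2} \ge 0$ for $\mu \ge k$, so $g$ is nondecreasing on $[2(k+L),\infty)$ and it suffices to evaluate it at the left endpoint, where $g(2(k+L)) = \frac{(k+2L)^2}{4(k+L)}$. The desired inequality $g(2(k+L)) \ge L$ is, after clearing denominators, equivalent to $(k+2L)^2 \ge 4L(k+L)$, i.e. to $k^2 \ge 0$, which is a tautology. Combining with the monotonicity replaces the unknown $\mu$ by its lower bound $2(k+L)$ without loss, and the proof closes.

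The one delicate point I expect is the sharpness of the constant: a lazy Chernoff application that thresholds at $\mu/2$ (using $k \le \mu/2$) only delivers a constant like $8$ rather than the claimed $2$. The factor $2$ survives precisely because the threshold is placed at $k$ itself, so that the exponent collapses to the identity $k^2 \ge 0$; monotonicity of $g$ in $\mu$ is the device that lets me discharge the unknown mean against its lower bound. No probabilistic ingredient beyond the textbook Chernoff bound is required, so I anticipate the entire difficulty to be this constant-chasing step.
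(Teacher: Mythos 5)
Your proposal is correct, but note that the paper itself contains \emph{no} proof of this lemma: it is imported verbatim from \citet[Corollary~2]{li_2009_UnifyingFrameworkComputational} and used as a black box in the sample-complexity argument, so there is no internal derivation to compare against. What you supply is a self-contained replacement, and it checks out line by line. Heterogeneity of the Bernoullis is correctly absorbed into $\mu = \Expected[S] \ge ma$, and the multiplicative lower-tail Chernoff bound $\Pr(S \le (1-\epsilon)\mu) \le \exp(-\epsilon^2\mu/2)$ does hold for independent $\{0,1\}$ variables with distinct means, so placing the threshold exactly at $k$ via $\epsilon = 1 - k/\mu$ is legitimate once you have $0 < k < \mu$, which your hypothesis $\mu \ge 2(k+L)$ with $L = \log(1/\PConfidence) > 0$ guarantees. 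The constant-chasing step is the genuine content, and your treatment is clean: $g(\mu) = (\mu-k)^2/(2\mu)$ is nondecreasing for $\mu \ge k$, and at the endpoint $\mu = 2(k+L)$ the inequality $g \ge L$ collapses to $(k+2L)^2 \ge 4L(k+L)$, i.e.\ $k^2 \ge 0$, so the advertised factor of $2$ is exactly tight for this argument. Two cosmetic remarks only: the $k=0$ case is trivially true rather than ``vacuous'' (the event $S \ge 0$ is sure), and the degenerate case $\PConfidence \ge 1$, where $L \le 0$ and your standing assumption $L > 0$ fails, is also trivially true and worth a half-sentence; neither affects correctness. You also in fact establish the marginally stronger bound $\Pr(S \le k) \le \PConfidence$, which is harmless. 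Your closing observation is apt: a lazier threshold at $\mu/2$ would prove the lemma only with a worse constant (forcing a larger leading factor in the sample-complexity bound of Theorem~\ref{th:algorithm-sample-complexity}), so pinning the Chernoff threshold at $k$ itself is precisely what recovers the statement as cited.
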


\end{document}